\def\MLE{\text{MLE}}
\def\TRACE{\text{Tr}}
\def\hbeta{\hat{\beta}}
\def\op{\text{op}}
\newcolumntype{A}{>{\centering\arraybackslash}m{0.12\columnwidth}}
\newcolumntype{B}{>{\centering\arraybackslash}m{0.4\columnwidth}}
\DeclareMathOperator*{\argmax}{arg\,max}
\DeclareMathOperator*{\argmin}{arg\,min}
\def\bbeta{\boldsymbol{\beta}}
\title{Optimal Sample Selection Through Uncertainty Estimation and Its Application in Deep Learning}
\author{Yong Lin\thanks{ indicates equal contributions.} , Chen Liu$^*$, Chenlu Ye$^*$, Qing Lian, Yuan Yao, Tong Zhang}
\affil[]{
The Hong Kong University of Science and Technology}
\date{}
\def\bbE{\mathbb{E}}
\def\bbR{\mathbb{R}}
\def\bbP{\mathbb{P}}
\def\bbI{\mathbb{I}}
\def\bx{\textbf{x}}
\def\by{\textbf{y}}
\def\bI{\textbf{I}}
\def\bX{\textbf{X}}
\def\bV{\textbf{V}}
\def\bM{\textbf{M}}
\def\cD{\mathcal{D}}
\def\cL{\mathcal{L}}
\def\cN{\mathcal{N}}
\def\cS{\mathcal{S}}
\def\cX{\mathcal{X}}
\def\cY{\mathcal{Y}}
\def\bx{\boldsymbol{x}}
\def\by{\boldsymbol{y}}
\def\bmx{\mathbf{x}}
\def\bmy{\mathbf{y}}
\theoremstyle{plain}
\newtheorem{ass}{Assumption}
\theoremstyle{plain}
\newtheorem{thm}{Theorem}
\theoremstyle{plain}
\newtheorem{lemma}{Lemma}
\theoremstyle{plain}
\theoremstyle{plain}
\theoremstyle{plain}
\theoremstyle{plain}
\newtheorem{col}{Corollary}
\begin{document}

\maketitle

\noindent
\begin{abstract}
    Modern deep learning heavily relies on large labeled datasets, which often comse with high costs in terms of both manual labeling and computational resources. To mitigate these challenges, researchers have explored the use of informative subset selection techniques, including coreset selection and active learning.  Specifically, coreset selection involves sampling data with both input ($\bx$) and output ($\by$), active learning focuses solely on the input data ($\bx$).

    In this study, we present a theoretically optimal solution for addressing both coreset selection and active learning within the context of linear softmax regression. Our proposed method, COPS (unCertainty based OPtimal Sub-sampling), is designed to minimize the expected loss of a model trained on subsampled data. Unlike existing approaches that rely on explicit calculations of the inverse covariance matrix, which are not easily applicable to deep learning scenarios, COPS leverages the model's logits to estimate the sampling ratio. This sampling ratio is closely associated with model uncertainty and can be effectively applied to deep learning tasks. Furthermore, we address the challenge of model sensitivity to misspecification by incorporating a down-weighting approach for low-density samples, drawing inspiration from previous works.

To assess the effectiveness of our proposed method, we conducted extensive empirical experiments using deep neural networks on benchmark datasets. The results consistently showcase the superior performance of COPS compared to baseline methods, reaffirming its efficacy.
\end{abstract}

\section{Introduction}
In recent years, deep learning has achieved remarkable success in various domains, including computer vision (CV), natural language processing (NLP), reinforcement learning (RL) and autonomous driving, among others. However, the success of deep learning often relies on a large amount of labeled data. This requirement not only incurs expensive labeling processes  but also necessitates substantial computational costs. To address this challenge, an effective approach is to select an informative subset of the training data. Based on the selected subset, we can learn a deep neural network to achieve comparable performance with that trained on the full dataset.

There are two key types of problems related to this approach. The first is known as coreset selection \cite{har2005smaller, feldman2011unified, borsos2020coresets, zhou2022probabilistic}, which assumes that both the input data
$\{\bmx_i
\}_{i=1}^n$
and their corresponding labels $\{\bmy_i
\}_{i=1}^n$  are available for the full dataset 
$\cS = \{
\bmx_i,
\bmy_i\}_{i=1}^n$ containing 
$n$ samples. The objective here is to identify a subset of $\{\bmx_i,\bmy_i\}_{i=1}^n$ that significantly reduces the computation cost involved in training the models, thereby alleviating the computational burden. This problem is commonly referred to as the coreset selection problem. The second problem type, active learning,  assumes that only the input data $\{\bmx_i\}_{i=1}^n$ is accessible \cite{culotta2005reducing,ash2019deep,ren2021survey}, without the corresponding labels.  In this scenario, the aim is to selectively query the labels for a subset of $\{\bmx_i\}_{i=1}^n$. With the inquired labels, the neural network is trained on the selected subset. This problem is often referred to as active learning. Overall, these approaches provide promising solutions to mitigate the computational and labeling costs associated with training deep neural networks by intelligently selecting informative subsets of the data.

In this study, we theoretically derive the optimal sub-sampling scheme for both coreset selection and active learning in linear softmax regression. Our objective is to minimize the expected loss of the resulting linear classifier on the selected subset. The optimal sampling ratio is closely connected to the uncertainty of the data which has been extensively explored in reinforcement learning \cite{jin2020provably,dean2018regret,bubeck2012regret,lattimore2020bandit, ye2023corruption}. The detailed formulation and explanation of our sampling ratio is deferred to Section~\ref{eqn:optimal_derivation}.  We further show that the optimal sampling ratio is equivalent to the covariance of the output logits of independently trained models with proper scaling, which can be easily estimated in deep neural networks. We name our method as unCertainty based OPtimal Sub-sampling (COPS). 
While prior works such as \cite{ting2018optimal, wang2018optimal, imberg2020optimal} have explored related theoretical aspects, their approaches for estimating the sampling ratio are prohibitively expensive in the context of deep learning:  \cite{ting2018optimal} relies on the influence function of each data  which has been recognized as computationally demanding according to existing literature \cite{koh2017understanding};  \cite{wang2018optimal, imberg2020optimal} rely on the inverse of covariance matrix of input which is also computationally expensive due to the large dimensionality of the input data. There are also vast amount of literature on coreset selection and active learning, but few of them can claim optimality, which will be briefly reviewed in Section \ref{sect:related_works}.

We then conduct empirical experiments on real-world datasets  with modern neural architectures. Surprisingly, we find that directly applying COPS leads to bad performance which can be even  inferior to that of random sub-sampling. Upon conducting a thorough analysis of the samples selected by COPS, we observe a tendency for the method to excessively prioritize data exhibiting high uncertainty, i.e., samples from the low density region. Notably, existing literature has established that model estimation can be highly sensitive to misspecification issues encountered with low density samples \cite{he2022nearly, ye2023corruption}. It is important to note that the optimality of COPS is based on a well-specified linear model. Hence, this observation has motivated us to consider modifying COPS to effectively handle potential misspecification challenges. We use the short-hand notation $u_i$ to represent the uncertainty of $i$th sample, which is our original sampling ratio up to some scaling.
\cite{he2022nearly, ye2023corruption} show that applying the reweighting $\frac{1}{\max\{\alpha, u_i\}}$ to each sample during linear regression can make models more robust to misspecification, where $\alpha$ is a hyper-parameter. Thus, we simply borrow the idea and modify the sampling ratio $u_i$ by  $ \frac{u_i}{\max\{\alpha, u_i\}} \propto \min\{\alpha, u_i\}$.
We show the effectiveness of this modification by numerical simulations and real-world data experiments in Section~\ref{sect:thresholding_uncertainty}. 

In Section \ref{sect:main_experiments}, we conduct comprehensive experiments on several benchmark datasets, including SVHN, Places, and CIFAR10, using various backbone models such as ResNet20, ResNet56, MobileNetV2, and DenseNet121. Additionally, we verify the effectiveness of our approach on the CIFAR10-N dataset, which incorporates natural label noise. Furthermore, we extend our evaluation to include a NLP benchmark, IMDB, utilizing a GRU-based neural network. Across all these scenarios, our method consistently surpasses the baselines significantly, highlighting its superior performance. We summarize our contribution as follows
The contribution of this work can be summarized as follows:
\begin{itemize}
    \item \textbf{Theoretical derivation}: The study theoretically derives the optimal sub-sampling scheme for well-specified linear softmax regression. The objective is to minimize the expected loss of the linear classifier on the sub-sampled dataset. The optimal sampling ratio is found to be connected to the uncertainty of the data.
    \item \textbf{COPS method}: The proposed method, named unCertainty based OPtimal Sub-sampling (COPS), provides an efficient approach for coreset selection and active learning tasks. We show that the sampling ratio can be efficiently estimated using the covariance of the logits of independently trained models, which addresses the computational challenges faced by previous approaches \cite{ting2018optimal, wang2018optimal, imberg2020optimal}.
    \item \textbf{Modification to handle misspecification}: We empirically identified a potential issue with COPS, which overly emphasizes high uncertainty samples in the low-density region, leading to model sensitivity to misspecification. To address this, we draw inspiration from existing theoretical works \cite{he2022nearly,ye2023corruption} that downweight low-density samples to accommodate for the misspecification. By combining their techniques, we propose a modification to COPS that involves a simple thresholding of the sampling ratio. Both numerical simulations and real-world experiments demonstrate the significant performance improvements resulting from our straightforward modification.
    \item \textbf{Empirical Validation}: Empirical experiments are conducted on various CV and NLP benchmark datasets, including SVHN, Places, CIFAR10, CIFAR10-N and IMDB, utilizing different neural architectures including ResNet20, ResNet56, MobileNetV2,  DenseNet121 and GRU. The results demonstrate that COPS consistently outperforms baseline methods in terms of performance, showcasing its effectiveness.
\end{itemize}

\section{Related Works}
\label{sect:related_works}

\paragraph{Statistical Subsampling Methods.}
A vast amount of early methods adopts the statistical leverage scores to perform subsampling which is later used for ordinary linear regression \cite{drineas2012fast, drineas2011faster,ma2014statistical}. The leverage scores are estimated approximately \cite{drineas2012fast, clarkson2016fast} or combined with random projection \cite{meng2014lsrn}. These methods are relative computational expensive in the context of deep learning when the input dimension is large. Some recent works \cite{ting2018optimal,wang2018optimal,imberg2020optimal} achieves similar theoretical properties with ours. However,   \cite{ting2018optimal} is based on the influence function of each sample, which is computational expensive. \cite{ting2018optimal, imberg2020optimal} need to compute the inverse of covariance matrix, which is also impractical for deep learning. 

\paragraph{Active Learning.} This method designs acquisition functions to determine the importance of unlabeled samples and then trains models based on the selected samples with the inquired label \cite{ren2021survey}. There are mainly uncertainty-based and representative-based active learning methods. \textbf{Uncertainty-based} methods select samples with higher uncertainty, which can mostly reduces the uncertainty of the target model \cite{ash2019deep}. They design metrics such as entropy \cite{wang2014new,citovsky2023leveraging}, confidence \cite{culotta2005reducing},  margin \cite{joshi2009multi, roth2006margin}, predicted loss \cite{yoo2019learning} and gradient \cite{ash2019deep}.  Some recent works leverage variational autoencoders and adversarial networks to identify samples that are poorly represented by correctly labeled data \cite{sinha2019variational, kim2021task}. Some of these works provide theoretical guarantees expressed as probabilistic rates, but they do not claim to achieve optimality \cite{ting2018optimal}. The uncertainty-related technique has also been extended to RL \citep{he2022nearly,ye2023corruption}. \textbf{Representative-based} methods are also known as the diversity based methods \cite{citovsky2023leveraging}. They try to find samples with the feature that is most representative of the unlabeled dataset \cite{wei2015submodularity, sener2017active}. \cite{wei2015submodularity} casts the problem of finding representative samples as submodular optimization problem. \cite{sener2017active} tries to find the representative sampling by clustering, which is later adopted in \cite{ash2019deep}.   

\paragraph{Coreset Selection.} This method aims to find a subset that is highly representative of the entire labeled dataset. Some early works have focused on designing coreset selection methods for specific learning algorithms, such as SVM \cite{tsang2005core}, logistic regression \cite{huggins2016coresets}, and Gaussian mixture models \cite{lucic2017training}. However, these methods cannot be directly applied to deep neural networks (DNNs). To address this limitation, a solution has been proposed that leverages bi-level optimization to find a coreset specifically tailored for DNNs \cite{borsos2020coresets}. This approach has been further enhanced by incorporating probabilistic parameterization \cite{zhou2022probabilistic, zhou2022model}.
Another line of recent research efforts have aimed to identify coreset solutions with gradients that closely match those of the full dataset \cite{mirzasoleiman2020coresets, killamsetty2021grad, ash2019deep}.

\section{Theoretical Analysis}
\label{sect:theory}
\paragraph{Notation.} We use bold symbols $\bx$ and $\by$ to denote random variables and use $\bmx$ and $\bmy$ to denote deterministic values. Consider the $d$-dimensional vector $\bx \in \cX$ and the categorical label $\by \in \cY = \{c_0, c_1, \dots, c_K\}$.  Denote the joint distribution $(\bx, \by)$ as $\cD$. For any matrix $\bX \in \bbR^{d_1 \times d_2}$, define $\| \bX \|_{\op},~\| \bX \|_N$, and $\| \bX \|_F$ to be its $l_2$ operator norm, nuclear norm and Frobenius norm, respectively. 
The vectorized version of $\bX$ is denoted as $\mbox{\textbf{Vec}}(\bX)= (X_1^{\top}, X_2^{\top}, \ldots, X_{d_2}^{\top})^{\top}$, where $X_j$ is the $j$-th column of $\bX$.   Let $\cS$ denote the dataset containing $n$ labeled samples, i.e., $\cS := \{\bmx_i, \bmy_i\}_{i=1}^n$. We use $\cS_X$ to denote the unlabeled dataset $\cS := \{\bmx_i\}_{i=1}^n$. Let $\otimes$ denote the Kronecker product. 
For a sequence of random variables $X_1,X_2,\ldots$, we say that $X_n=o_P(1)$ if $X_n\rightarrow0$ as $n\rightarrow\infty$, and $X_n=O_P(1)$ if for all $\epsilon > 0$, there exists an $M$ such that $\sup_{n>N}\bbP(X_n>M)<\epsilon$.


\subsection{Optimal Sampling  in Linear Softmax Regression}
\label{eqn:optimal_derivation} 
Consider a $K$-class categorical response variable $\by\in\{c_0,c_1,\ldots,c_K\}$ and a $d$-dimensional covariate $\bx$. The conditional probability of $\by=c_k$ (for $k = 0, 1, \dots, K$) given $\bx$ is
\begin{align}
    \label{eqn:prob_k}
   p_k(\beta;\bx) = \frac{\exp({\bx^\top\beta_k})}{\sum_{l=0}^K\exp({\bx^\top\beta_l})}, 
\end{align}
where $\beta_k,~k = 0, 1, \dots, K$ are unknown regression coefficients belonging to a compact subset of $\bbR^d$. Following \cite{yao2019optimal}, we assume $\beta_0 = 0$ for identifiability.  
We further denote $\beta=(\beta_1^\top,\ldots,\beta_K^\top)^\top \in \mathbb R^{Kd}$. We use the bold symbol $\bbeta$ to denote the $d$-by-$K$ matrix $(\beta_1,\ldots,\beta_K)$.
In the sequel, we first derive the optimal sub-sampling schemes for both coreset selection and active learning  in linear softmax regression which minimize the expected test loss.   
Suppose the model is well-specified such that there exists an  true parameter $\beta^*\in\bbR^{Kd}$ with $\bbP(\by=c_k|\bx)~=~p_k(\beta^*; \bx)$ for all $\bx$ and $k$. 
Define $\delta_k(\by) := \bbI(y = c_k)$ where $\bbI$ is the indicator function.  
Let $\ell(\beta; \bx, \by)$ denote the cross entropy loss on the sample $(\bx, \by)$ as 
\begin{align}
    \ell(\beta; \bx, \by) = -\sum_{k=0}^K \delta_{k}(\by) \log p_k(\beta; \bx) = \sum_{k=1}^K\left[-\delta_{k}(\by)\bx^\top\beta_k + \log\{1+\sum_{l=1}^K \exp({\bx^\top\beta_l})\}\right].
\end{align}
We calculate the gradient and the hessian matrix of the loss function as follows: 
\begin{align}
    \label{eqn:first_derivative}
    \frac{\partial \ell(\beta; \bx, \by)}{\partial \beta} = -s(\beta; \bx, \by) \otimes \bx, \quad \mbox{ and } \quad \frac{\partial^2 \ell(\beta; \bx, \by)}{\partial \beta^2} = \phi(\beta; \bx) \otimes (\bx \bx^\top).
\end{align}
Here  $s(\beta; \bx, \by)$ is a $K$-dimensional vector with each element  $s_k(\beta; \bx, \by) = \delta_k(y) - p_k(\beta;\bx)$ for $k=1, ..., K$; and $\phi(\beta; \bx)$ is a $K \times K$ matrix with element  
  \begin{align}
    \label{eqn:phi_defi}
      \phi_{kk}(\beta; \bx) = p_k(\beta; \bx) - p_k(\beta; \bx)^2, \phi_{k_1k_2}(\beta; \bx) = -p_{k_1}(\beta; \bx) p_{k_2}(\beta; \bx),
  \end{align}
where $k, k_1, k_2 = 1, ..., K$ and $k_1 \neq k_2$. We further define the $K \times K$ matrix $\psi(\beta; \bx, y) := s(\beta; \bx, y)s(\beta; \bx, y)^\top$. For $k_1, k_2 = 1, ..., K$, we have 
\begin{align}
    \label{eqn:psi_defi}
    \psi_{k_1k_2}(\beta; \bx, \by) = [\delta_{k_1}(\by) - p_{k_1}(\beta; \bx)][\delta_{k_2}(\by) - p_{k_2}(\beta; \bx)].
\end{align}
We show $\bbE_{\by}[\psi(\beta^*; \bx, \by)|\bx] =  \phi(\beta^*; \bx)$ in Lemma \ref{lemma:phi_kesi_equal}. We use $\cL(\beta; \cD)$ to denote the expected cross-entropy loss on the distribution $\cD$ as
\begin{align}
\cL(\beta;\cD) = \bbE_{(\bx, \by) \sim \cD} [\ell(\beta; \bx, \by)],
\end{align}
It is easy to know that $\beta^* = \argmin_{\beta \in \bbR^{Kd}} \cL(\beta;\cD).$
Given the dataset $\cS = \{(\bmx_i, \bmy_i)\}_{i=1}^n$, we use $\cL(\beta;\cS)$ to denote the cross entropy loss of $\beta$ on $\cS$, i.e., 
\begin{align}
\label{eqn:dataset_loss}
    \cL(\beta;\cS) := \frac{1}{n}\sum_{(\bmx, \bmy) \in \cS} \ell(\beta; \bmx, \bmy).
\end{align}
We further use $\cL(\beta)$ to denote $\cL(\beta;\cS)$ when it is clear from the context.  Recall that $\cL(\beta;\cS)$ is the negative likelihood achieved by $\beta$ on $\cS$, then the maximum log-likelihood estimation (MLE) solution of $\beta$ on $\cS$ is 
\begin{align*}
    \hat \beta_\MLE := \argmin_{\beta \in \bbR^{Kd}} \cL(\beta;\cS).
\end{align*}
We further define 
\begin{align*}
      \bM_{X}(\beta; \cD) &  := \frac{\partial^2\cL(\beta; \cD)}{\partial^2\beta} =  \bbE_{(\bx, \by) \sim \cD} [\phi(\beta; \bx) \otimes (\bx \bx^\top)], \\
      \bM_{X}(\beta; \cS) &  := \frac{\partial^2\cL(\beta; \cS)}{\partial^2\beta} = \frac{1}{n} \sum_{(\bmx, \bmy) \in \cS} [\phi(\beta; \bmx) \otimes (\bmx \bmx^\top)].  
\end{align*}




\noindent \paragraph{Coreset Selection.} First, we focus on the coreset selection problem, assuming that we have access to the entire labeled dataset, i.e., $\cS=\{\bmx_i, \bmy_i\}_{i=1}^n$. We assign an sampling $\pi(\bmx, \bmy)$ to each samples in $\cS$ and then randomly select a subset of size $r$ according to $\pi(\bmx, \bmy)$. Denote the selected subset as $\bar \cS = \{\bar \bmx, \bar y\}$. We then estimate the parameter $\bar \beta$ based on the weighted loss 
\begin{align}
    \label{eqn:weighted_solver}
    \bar{\beta} =  \argmin_\beta \left(-\frac{1}{r} \sum_{(\bar\bmx, \bar \bmy) \in \bar \cS } \frac{1}{\pi(\bar \bmx, \bar \bmy)} \left( \sum_{k=1}^K\delta_{k}(\bar \bmy)\bar \bmx^\top\beta_k - \log\{1+\sum_{l=1}^K \exp({\bar \bmx^\top\beta_l})\} \right) \right),
\end{align}
We want the $\bar \beta$ estimated on the weighted sub-sampled dataset $\bar \cS$ to achieve low expected loss $\cL(\bar \beta; \cD)$. Omitting the higher order terms,  we are interested in the gap between the loss of $\bar \beta$ and $\beta^*$  as 
\begin{align}
    \cL(\bar \beta; \cD) - \cL(\beta^*; \cD) = \bbE_{(\bx, \by) \sim \cD} \left[ (\beta^* - \bar \beta)^\top \left( \phi(\beta; \bx) \otimes (\bx \bx^\top) \right) (\beta^* - \bar \beta) \right] 
\end{align}
Our goal is to find a sampling scheme parameterized by $\pi(\cdot)$ which minimizes the expectation of $\cL(\bar \beta; \cD) - \cL(\beta^*; \cD)$, i.e.,
\begin{align}
    \min_{\pi} \bbE_{\bar \cS|\cS, \pi}\left[\cL(\bar \beta; \cD) - \cL(\beta^*; \cD)\right], 
\end{align}
where the expectation is taking over the randomness in sampling based on $\pi(\cdot)$. 

\noindent \textbf{Active Learning}. For active learning problem, we have the unlabeled dataset $ \cS_X = \{\bmx_i\}_{i=1}^n$. We aim to assign a sampling weight $\pi(\bmx)$ for each sample $\bmx$ in $\cS_X$. Here we use the subscript $X$ in  $\pi_X$ to explicitly show that the sampling ratio in active learning only depends on $\bx$. When it is clear from the context, we also use $\pi(\bmx)$ to denote $\pi_X(\bmx)$ for simplicity.  Based on the sampled subset and queried label, which is also denoted as $\bar \cS = \{(\bar \bmx_i, \bar \bmy_i)\}_{i=1}^r$, we train the classifier $\bar \beta$ on the weighted loss as shown in Eqn~\eqref{eqn:weighted_solver} by replacing the weight $\pi(\bmx, \bmy)$ with $\pi(\bmx)$, i.e., 
\begin{align}
    \label{eqn:weighted_solver_active}
    \bar{\beta} = \argmin_\beta \left(-\frac{1}{r} \sum_{(\bar\bmx, \bar \bmy) \in \bar \cS } \frac{1}{\pi(\bar \bmx)} \left( \sum_{k=1}^K\delta_{k}(\bar \bmy)\bar \bmx^\top\beta_k - \log\{1+\sum_{l=1}^K \exp({\bar \bmx^\top\beta_l})\} \right) \right).
\end{align}
 Similar to that in coreset selection, we try to find a sampling scheme $\pi$ which optimizes the following equation: 
\begin{align}
    \min_{\pi} \bbE_{\bar \cS|\cS_X, \pi}\left[\cL(\bar \beta; \cD) - \cL(\beta^*; \cD)\right]. 
\end{align}

Before presenting the main theorem, we introduce two assumptions, which are standard in the subsampling literature \cite{wang2018optimal,yao2019optimal}.
\begin{ass}
    \label{ass:positive_M}
    The covariance matrix $\bM(\beta^*; \cS)$ goes to a positive definite matrix $\bM(\beta^*; \cD)$ in probability; and
    $n^{-2} \sum_{(\bmx, \bmy) \in \cS} \|\bmx \|^3 =O_p(1)$. 
\end{ass}
\begin{ass}
\label{ass:bounded_momentum}
    For $k=2,4$, $n^{-2} \sum_{(\bmx, \bmy) \in \cS} \pi(\bmx) \|\bmx\|^k = O_p(1)$; and there exists some $\delta > 0$ such that $n^{-(2+\delta)} \sum_{(\bmx, y) \in \cS} \pi(\bmx)^{-1-\delta} \|\bmx\|^{2+\delta} = O_p(1)$. 
\end{ass}
Assumption \ref{ass:positive_M} requires that the asymptotic matrix $\bM(\beta^*; \cD)$ is non-singular and $\bbE\|x\|^3$ is upper-bounded. Assumption \ref{ass:bounded_momentum} imposes conditions on both subsampling probability and covariates.
\begin{thm}[Optimal sampling in linear softmax regression]
\label{thm:multi_class_loss}
Suppose that the Assumptions \ref{ass:positive_M} and \ref{ass:bounded_momentum} hold.
\begin{itemize}
    \item[(a)] For coreset selection, the optimal sampling ratio of coreset selection that minimizes $\bbE_{\bar \cS|\cS, \pi}[\cL(\bar \beta; \cD) - \cL(\beta^*; \cD)]$ is 
\begin{align}
    \label{eqn:withy_optimal_sampling_multiclass}
    \pi(\bmx, \bmy) = \frac{\sqrt{\TRACE \left({\psi(\hat \beta_{\MLE}; \bmx, \bmy) \otimes (\bmx \bmx^\top)}  \bM_X^{-1}(\hbeta_\MLE; \cS) \right)}}{\sum_{(\bmx', \bmy') \in \cS} \sqrt{\TRACE \left({\psi(\hat \beta_{\MLE}; \bmx', \bmy') \otimes (\bmx' (\bmx')^\top)}  \bM_X^{-1}(\hbeta_\MLE; \cS) \right)} }.
\end{align}
\item[(b)] For active learning, the optimal sampling ratio of active learning that minimizes $\bbE_{\bar \cS|\cS_X, \pi}[\cL(\bar \beta; \cD) - \cL(\beta^*; \cD)]$ is 
\begin{align}
    \label{eqn:withouty_optimal_sampling_multiclass}
    \pi(\bmx) = \frac{\sqrt{\TRACE \left({\phi(\hat \beta_{\MLE}; \bmx) \otimes (\bmx \bmx^\top)}  \bM_X^{-1}(\hbeta_\MLE; \cS) \right)}}{\sum_{\bmx' \in \cS_X} \sqrt{\TRACE \left({\phi(\hat \beta_{\MLE}; \bmx') \otimes (\bmx' (\bmx')^\top)}  \bM_X^{-1}(\hbeta_\MLE; \cS) \right)} }.  
\end{align}
\end{itemize}
\end{thm}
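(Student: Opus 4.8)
The plan is to run the classical $M$-estimation / delta-method analysis for inverse-probability-weighted estimators \emph{conditionally on the full sample $\cS$}, in the spirit of \cite{wang2018optimal, yao2019optimal}: linearize $\bar\beta-\hbeta_\MLE$, compute its conditional covariance, feed this into the quadratic approximation of the excess risk displayed above, and minimize the $\pi$-dependent part by Cauchy--Schwarz. I will treat the $r$ selected points as i.i.d.\ draws from $\cS$ with probabilities $\pi$ (the with-replacement convention; a without-replacement version only adds a lower-order finite-population correction).

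First I would write down the estimating equation. The estimator $\bar\beta$ in \eqref{eqn:weighted_solver} is the inverse-probability-weighted empirical risk minimizer, so by \eqref{eqn:first_derivative} it solves $\frac1r\sum_{(\bar\bmx,\bar\bmy)\in\bar\cS}\pi(\bar\bmx,\bar\bmy)^{-1}\,s(\bar\beta;\bar\bmx,\bar\bmy)\otimes\bar\bmx=0$. Because $\hbeta_\MLE$ satisfies $\sum_{(\bmx,\bmy)\in\cS}s(\hbeta_\MLE;\bmx,\bmy)\otimes\bmx=0$, the one-draw conditional mean of the weighted score at $\hbeta_\MLE$ is $\sum_i\pi_i\cdot\pi_i^{-1}\,s(\hbeta_\MLE;\bmx_i,\bmy_i)\otimes\bmx_i=0$, i.e.\ the estimating equation is \emph{exactly} conditionally unbiased at $\hbeta_\MLE$; with the moment bounds in Assumption~\ref{ass:bounded_momentum} this gives $\bar\beta\to\hbeta_\MLE$ in conditional probability. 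Taylor-expanding the estimating equation around $\hbeta_\MLE$, with $\partial^2\ell/\partial\beta^2=\phi(\beta;\bx)\otimes(\bx\bx^\top)$ from \eqref{eqn:first_derivative}, gives
\begin{align*}
\bar\beta-\hbeta_\MLE=\Big(\tfrac1r\sum_{\bar\cS}\tfrac1\pi\,\phi(\hbeta_\MLE;\bar\bmx)\otimes(\bar\bmx\bar\bmx^\top)\Big)^{-1}\Big(\tfrac1r\sum_{\bar\cS}\tfrac1\pi\,s(\hbeta_\MLE;\bar\bmx,\bar\bmy)\otimes\bar\bmx\Big)+R,
\end{align*}
with $R$ a remainder to be shown negligible. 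Conditionally on $\cS$ the Hessian factor converges by a weak law to $n\,\bM_X(\hbeta_\MLE;\cS)$, while the score factor has conditional mean $0$ and conditional covariance $\frac1r\sum_i\pi_i^{-1}\,\psi(\hbeta_\MLE;\bmx_i,\bmy_i)\otimes(\bmx_i\bmx_i^\top)$ (using $(a\otimes b)(a\otimes b)^\top=(aa^\top)\otimes(bb^\top)$ and $\psi=ss^\top$). Hence
\begin{align*}
\Cov(\bar\beta-\hbeta_\MLE\mid\cS)\;\approx\;\frac{1}{rn^2}\,\bM_X^{-1}(\hbeta_\MLE;\cS)\Big(\sum_i\tfrac1{\pi_i}\,\psi(\hbeta_\MLE;\bmx_i,\bmy_i)\otimes(\bmx_i\bmx_i^\top)\Big)\bM_X^{-1}(\hbeta_\MLE;\cS).
\end{align*}

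Next I would substitute into the displayed expansion $\cL(\bar\beta;\cD)-\cL(\beta^*;\cD)\approx(\bar\beta-\beta^*)^\top\bM_X(\beta^*;\cD)(\bar\beta-\beta^*)$ (the inner expectation there is $\bM_X(\beta^*;\cD)$ as $\bar\beta\to\beta^*$) and split $\bar\beta-\beta^*=(\bar\beta-\hbeta_\MLE)+(\hbeta_\MLE-\beta^*)$: the term in $\hbeta_\MLE-\beta^*$ only is $\pi$-free, the cross term vanishes to leading order by the conditional unbiasedness above, and what remains is $\TRACE\big(\bM_X(\beta^*;\cD)\,\Cov(\bar\beta-\hbeta_\MLE\mid\cS)\big)$. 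Since $\bM_X(\hbeta_\MLE;\cS)\to\bM_X(\beta^*;\cD)$ (Assumption~\ref{ass:positive_M} plus consistency of $\hbeta_\MLE$), this reduces, up to the $\pi$-free factor $1/(rn^2)$, to $\sum_i a_i/\pi_i$ with
\[
a_i:=\TRACE\big((\psi(\hbeta_\MLE;\bmx_i,\bmy_i)\otimes(\bmx_i\bmx_i^\top))\,\bM_X^{-1}(\hbeta_\MLE;\cS)\big)\ge0,
\]
the nonnegativity because $\psi(\hbeta_\MLE;\bmx_i,\bmy_i)\otimes(\bmx_i\bmx_i^\top)$ and $\bM_X^{-1}(\hbeta_\MLE;\cS)$ are positive semidefinite. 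Minimizing $\sum_i a_i/\pi_i$ subject to $\pi_i\ge0$, $\sum_i\pi_i=1$: Cauchy--Schwarz gives $\big(\sum_i\sqrt{a_i}\big)^2\le\big(\sum_i a_i/\pi_i\big)\big(\sum_i\pi_i\big)$, with equality iff $\pi_i\propto\sqrt{a_i}$, which is precisely \eqref{eqn:withy_optimal_sampling_multiclass} --- this proves (a). For (b), \eqref{eqn:weighted_solver_active} has the identical linearization but with $\pi$ constrained to depend on $\bmx$ only; in the active-learning model the queried labels are genuinely random (drawn from the true conditional), so taking the extra expectation over the labels and applying Lemma~\ref{lemma:phi_kesi_equal}, $\bbE_{\by}[\psi(\beta^*;\bx,\by)\mid\bx]=\phi(\beta^*;\bx)$, replaces $\psi(\hbeta_\MLE;\bmx_i,\bmy_i)$ by $\phi(\hbeta_\MLE;\bmx_i)$ in $a_i$; the same Cauchy--Schwarz step then yields \eqref{eqn:withouty_optimal_sampling_multiclass}.

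The main obstacle is the rigour behind the linearization and, especially, the passage from convergence in distribution of $\sqrt r(\bar\beta-\hbeta_\MLE)$ to convergence of the \emph{second moment} of the quadratic form $(\bar\beta-\hbeta_\MLE)^\top\bM_X(\beta^*;\cD)(\bar\beta-\hbeta_\MLE)$ --- a uniform-integrability argument. Concretely one must control the Taylor remainder $R$, the error in replacing the random weighted Hessian by $n\,\bM_X(\hbeta_\MLE;\cS)$, and this moment convergence, uniformly enough over admissible $\pi$ that minimizing the leading term $\sum_i a_i/\pi_i$ genuinely minimizes the conditional risk. This is exactly where the third-moment bound $n^{-2}\sum\|\bmx\|^3=O_p(1)$ in Assumption~\ref{ass:positive_M} and the $(2+\delta)$ inverse-probability moment in Assumption~\ref{ass:bounded_momentum} enter; once that is in place, the optimization step is elementary.
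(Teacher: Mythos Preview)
Your proposal is correct and follows essentially the same route as the paper: Taylor-expand the excess risk about $\beta^*$, split $\bar\beta-\beta^*=(\bar\beta-\hbeta_\MLE)+(\hbeta_\MLE-\beta^*)$ so that only the quadratic in $\bar\beta-\hbeta_\MLE$ depends on $\pi$, plug in the sandwich covariance $\bM_X^{-1}\bV_c\bM_X^{-1}$ for the IPW estimator (which the paper imports as a lemma from \cite{yao2019optimal} rather than rederiving), replace $\bM_X(\beta^*;\cD)$ by $\bM_X(\hbeta_\MLE;\cS)$, and finish with Cauchy--Schwarz; for part (b) both you and the paper take the additional conditional expectation over $\by$ and invoke $\bbE_{\by|\bx}[\psi]=\phi$. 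Your explicit flagging of the uniform-integrability step needed to pass from asymptotic normality to second-moment convergence is in fact more careful than the paper's own proof, which handles this only implicitly through the cited lemma.
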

The interpretation of the optimal sampling ratio will become clearer as we present the results for the binary logistic regression, which we will discuss later on. In the proof, by using the asymptotic variance of $\bar \beta$ first derived in \cite{wang2018optimal, yao2019optimal} and Taylor expansions, we can approximate the gap $\bbE_{\bar \cS|\cS, \pi}\left[\cL(\bar \beta; \cD) - \cL(\beta^*; \cD)\right]$ for coreset selection by
\begin{align}
        \bbE_{\bar \cS|\cS, \pi}\left[\cL(\bar \beta; \cD) - \cL(\beta^*; \cD)\right] = \frac{1}{rn^2}\sum_{(\bmx, \bmy) \in \cS} \frac{1}{\pi(\bmx, \bmy)}\TRACE \left({\psi(\hat \beta_{\MLE}; \bmx, \bmy) \otimes (\bmx \bmx^\top)}  \bM_X^{-1}(\hbeta_\MLE; \cS) \right),
\end{align}
and approximate the gap $\bbE_{\bar \cS|\cS_X, \pi}\left[\cL(\bar \beta; \cD) - \cL(\beta^*; \cD)\right]$ for active learning by
\begin{align}
    \bbE_{\bar \cS|\cS_X, \pi}\left[\cL(\bar \beta; \cD) - \cL(\beta^*; \cD)\right] = \frac{1}{rn^2}\sum_{\bmx \in \cS_X} \frac{1}{\pi(\bmx)}\TRACE \left({\phi(\hat \beta_{\MLE}; \bmx) \otimes (\bmx \bmx^\top)}  \bM_X^{-1}(\hbeta_\MLE; \cS) \right).
\end{align}
 Then, we obtain the minimizers of the two terms above with the Cauthy-Schwarz inequality separately. The detailed proof is in Appendix \ref{sect:proof_of_optimal_sampling}.
Note that distinct from \cite{wang2018optimal, yao2019optimal} that aim to reduce the variance of $\bar \beta$, we target on the expected generalization loss. However, directly computing our sampling ratio as well as those in \cite{wang2018optimal, yao2019optimal} is computationally prohibitive in deep learning, since they rely on the inverse of the covariance matrix. Whereas, as we will show later, our sampling ratio is closely connected to sample uncertainty and can be effectively estimated by the output of DNN. 

Now we illustrate the main intuition for the optimal sampling by considering the binary logistic classification problem as an example. In this case, we known that $K=1$, $\by \in \{c_0, c_1\}$, and $\beta=\beta_1 \in \bbR^{d}$. Correspondingly, the binary logistic regression model is in the following form:
\begin{align*}
    p_1(\beta; \bx) = \frac{\exp(\bx^\top \beta_1)}{1 + \exp(\bx^\top \beta_1)}.
\end{align*}
The covariance matrix becomes
$$
M_X = 1/n \sum_{(\bmx, \bmy) \in \cS}(p_1(\hbeta_\MLE; \bmx) - p_1(\hbeta_\MLE; \bmx)^2)\bmx \bmx^\top.
$$

\begin{col}[Logistic regression optimal sampling]
    Suppose that the Assumptions \ref{ass:positive_M} and \ref{ass:bounded_momentum} hold. 

    \begin{itemize}
        \item[(a)] For coreset selection, the optimal sampling ratio that minimizes $\bbE_{\bar \cS|\cS, \pi}[\cL(\bar \beta; \cD) - \cL(\beta^*; \cD)]$ is 
    \begin{align}
        \label{eqn:optimal_ratio_binary_coreset}
        \pi(\bmx, \bmy) = \frac{\left|\delta_1(y) - p_1(\hat \beta_\MLE; \bmx)\right| \|\bmx\|_{M_X^{-1}}}{\sum_{(\bmx', \bmy') \in \cS} \left|\delta_1(y) - p_1(\hat \beta_\MLE; \bmx')\right|\|\bmx'\|_{M_X^{-1}}}.
    \end{align}

    \item[(b)] For active  learning, the optimal sampling ratio that minimizes $\bbE_{\bar \cS|\cS_X, \pi}[\cL(\bar \beta; \cD) - \cL(\beta^*; \cD)]$ is 
    \begin{align}
        \pi(\bmx) = \frac{\sqrt{p_1(\hat \beta_\MLE; \bmx) - p_1(\hat \beta_\MLE; \bmx)^2} \|\bmx\|_{M_X^{-1}}}{\sum_{\bmx' \in \cS} \sqrt{p_1(\hat \beta_\MLE; \bmx) - p_1(\hat \beta_\MLE; \bmx)^2}\|\bmx'\|_{M_X^{-1}}}.
    \end{align}
    \end{itemize}
\end{col}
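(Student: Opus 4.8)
The plan is to derive the corollary directly from Theorem~\ref{thm:multi_class_loss} by specializing to the binary case $K=1$, where the Kronecker products collapse and the trace expressions simplify to weighted quadratic forms in $\|\bmx\|_{M_X^{-1}}$. First I would record the relevant objects in this regime: with $K=1$ the parameter $\beta=\beta_1\in\bbR^d$ is a vector, $\phi(\beta;\bx)$ is the scalar $p_1(\beta;\bx)-p_1(\beta;\bx)^2$, and $\psi(\beta;\bx,\by)$ is the scalar $(\delta_1(\by)-p_1(\beta;\bx))^2$. Consequently $\bM_X(\hbeta_\MLE;\cS)$ reduces to the $d\times d$ matrix $M_X=\frac1n\sum_{(\bmx,\bmy)\in\cS}(p_1-p_1^2)\bmx\bmx^\top$ displayed just above the corollary, and the Kronecker factor $\bmx\bmx^\top$ stands alone.

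Next I would evaluate the trace terms. For coreset selection the numerator inside the square root in Eqn~\eqref{eqn:withy_optimal_sampling_multiclass} becomes
\begin{align*}
\TRACE\!\left(\psi(\hbeta_\MLE;\bmx,\bmy)\,(\bmx\bmx^\top)\,M_X^{-1}\right)
=(\delta_1(\bmy)-p_1(\hbeta_\MLE;\bmx))^2\,\TRACE(\bmx\bmx^\top M_X^{-1})
=(\delta_1(\bmy)-p_1(\hbeta_\MLE;\bmx))^2\,\|\bmx\|_{M_X^{-1}}^2,
\end{align*}
using the cyclic property $\TRACE(\bmx\bmx^\top M_X^{-1})=\bmx^\top M_X^{-1}\bmx=\|\bmx\|_{M_X^{-1}}^2$, where $\|\bmx\|_{M_X^{-1}}^2:=\bmx^\top M_X^{-1}\bmx$. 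Taking the square root gives $|\delta_1(\bmy)-p_1(\hbeta_\MLE;\bmx)|\,\|\bmx\|_{M_X^{-1}}$ (the absolute value appearing because $\sqrt{a^2}=|a|$), and substituting numerator and denominator yields Eqn~\eqref{eqn:optimal_ratio_binary_coreset}. For active learning the same manipulation on Eqn~\eqref{eqn:withouty_optimal_sampling_multiclass} replaces $\psi$ by $\phi$, producing $\sqrt{p_1(\hbeta_\MLE;\bmx)-p_1(\hbeta_\MLE;\bmx)^2}\,\|\bmx\|_{M_X^{-1}}$ in numerator and denominator, which is exactly the claimed active-learning ratio.

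Since the corollary is a pure specialization, there is essentially no obstacle; the only point requiring mild care is confirming that Assumptions~\ref{ass:positive_M} and~\ref{ass:bounded_momentum} carry over verbatim (they do, since they are stated in terms of $\bM(\beta^*;\cD)$ and $\|\bmx\|$ with no dependence on $K$) so that Theorem~\ref{thm:multi_class_loss} applies as stated. I would therefore structure the write-up as: (i) state that the hypotheses of Theorem~\ref{thm:multi_class_loss} hold in the binary case; (ii) plug in the scalar forms of $\phi$ and $\psi$; (iii) simplify the traces via $\TRACE(\bmx\bmx^\top M_X^{-1})=\|\bmx\|_{M_X^{-1}}^2$; (iv) read off parts (a) and (b). The bookkeeping is routine and the argument is a handful of lines.
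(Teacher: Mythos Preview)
Your proposal is correct and matches the paper's approach: the corollary is presented there as a direct specialization of Theorem~\ref{thm:multi_class_loss} to $K=1$, with no separate proof given beyond noting that $\phi$ and $\psi$ become scalars and $\bM_X$ reduces to the displayed $d\times d$ matrix. Your trace simplification via $\TRACE(\bmx\bmx^\top M_X^{-1})=\|\bmx\|_{M_X^{-1}}^2$ is exactly the intended computation.
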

\paragraph{Intuition of the optimal sampling ratio.} The optimal ratio for coreset selection is proportional to 
$\left|\delta_1(y) - p_1(\hat \beta_\MLE; \bmx)\right|\cdot \|\bmx\|_{M_X^{-1}}$, which is decomposed of two components:
\begin{itemize}
    \item $\left|\delta_1(y) - p_1(\hat \beta_\MLE; \bmx)\right|$ is related to the prediction error of $\hat\beta_\MLE$. 
    \item $\|\bmx\|_{M_X^{-1}}$ has been widely explored in RL literature which is connected to uncertainty. Specifically, $\|\bmx\|^2_{M_X^{-1}}$ represents the inverse of the effective sample number in the $\cS$ along the $\bmx$ direction \cite{jin2020provably}. A larger $\|\bmx\|^2_{M_X^{-1}}$ indicates that there are less effective samples in the $\bmx$ direction. In this case, the prediction on $\bmx$ will be more uncertain. Therefore, $\|\bmx\|_{M_X^{-1}}$ is used to characterize the uncertainty along the $\bmx$ direction by . 
\end{itemize}
Samples with significant uncertainty and substantial prediction errors will result in a higher sampling weight for coreset selection. As for the active learning, $\left|\delta_1(y) - p_1(\hat \beta_\MLE; \bmx)\right|$ is replaced by 
$\sqrt{p_1(\hat \beta_\MLE; \bmx) - p_1(\hat \beta_\MLE; \bmx)^2}$ as we take conditional expectation over $\by$ since 
$$
{p_1(\hat \beta_\MLE; \bmx) - p_1(\hat \beta_\MLE; \bmx)^2} \approx \bbE_{\by|\bx=\bmx} (\delta_1(y) - p_1(\hat \beta_\MLE; \bmx))^2,
$$
as $n \rightarrow \infty$. $\sqrt{p_1(\hat \beta_\MLE; \bmx) - p_1(\hat \beta_\MLE; \bmx)^2}$ assigns large weights to those samples near the decision boundary. In summary, the optimal sampling ratios can be determined by weighting the uncertainty of samples with their corresponding prediction errors.

\subsection{Efficient approximation of the optimal sampling ratio}
There are some issues in estimating the optimal sampling ratio in Eqn~\eqref{eqn:withy_optimal_sampling_multiclass} and \eqref{eqn:withouty_optimal_sampling_multiclass}:
\begin{itemize}
    \item[(a)] We can not obtain $\hat \beta_\MLE$ in practice since it is solved on the whole dataset;
    \item[(b)]  Calculating the inverse of the covariance matrix $\bM_X(\hbeta_\MLE; \cS)$ is computationally prohibitive due to the high dimentionality in  deep learning. 
\end{itemize}
To solve the issue (a), \cite{wang2018optimal} proposes to fit a $\beta$ on the held out probe dataset $\cS'$ (a small dataset independent of $\cS$) to replace $\hat \beta_\MLE$. Whereas, the issue (b) remains to be the major obstacle for our method as well as those in \cite{ting2018optimal, yao2019optimal}. 
In the following part, we will by-pass the issue (b) by showing that ${\psi(\hat \beta_{\MLE}; \bmx, y) \otimes (\bmx \bmx^\top)}  \bM_X^{-1}(\hbeta_\MLE; \cS)$ is related to the standard deviation of the output logits from  independently trained models . 
To be more specific, we fit $M$ independent MLE linear classifiers $\{\hat\bbeta^{(m)}\}_{m=1}^M$ on $M$  probe datasets $\{\cS^{(m)}\}_{m=1}^M$ which is independent of $\cS$. We then show that for each sample $(\bmx, \bmy)$ in $\cS$, we can estimate ${\psi(\hat \beta_{\MLE}; \bmx, \bmy) \otimes (\bmx \bmx^\top)}  \bM_X^{-1}(\hbeta_\MLE; \cS)$ by the covariance of each model's logits i.e., $(\hat \bbeta^{(m)})^\top \bmx $, as shown in Eqn~\eqref{eqn:covariance_of_logits} of Algorithm~\ref{algo:uncertainty_estimation}. 

\begin{algorithm}[t]
\caption{Uncertainty estimation in linear softmax regression.  \label{algo:uncertainty_estimation}}
\KwInput{Probe datasets $\{\cS^{(m)}\}_{m=1}^M$, the sampling dataset $\cS$ for coreset selection or $\cS_X$ for active learning.}
\KwOut{The estimated uncertainty for each sample in $\cS$ or $\cS_X$.}
For $m = 1, ..., M$, solve $\hat \beta^{(m)} = \argmin_{\beta \in \bbR^{Kd}} \cL(\beta; \cS^{(m)})$. Denote  
\begin{align} \label{eqn:probe_betas}
    \Tilde{\beta} = \frac{1}{M} \sum_{m=1}^M \hat \beta^{(m)}, \mbox{ }\hat \bbeta^{(m)} = [\hat \beta^{(m)}_1, \hat \beta^{(m)}_2, ..., \hat \beta^{(m)}_K],  \mbox{ and } \Tilde{\bbeta} = \frac{1}{M} \sum_{m=1}^M \hat \bbeta^{(m)}.
\end{align} 

For each $\bmx$, obtain $ \{(\hat\bbeta^{(m)})^\top \bmx\}_{m=1}^M$ and the covariance of them:
\begin{align}
    \label{eqn:covariance_of_logits}
    \Sigma_M(\bmx) = \frac{1}{M - 1}\sum_{m=1}^M\left( \left(\hat \bbeta^{(m)}\right)^\top \bmx  - 
        \Tilde \bbeta ^\top \bmx \right) \left(\left(\hat \bbeta^{(m)}\right)^\top \bmx  - 
        \Tilde \bbeta ^\top \bmx \right)^\top.
\end{align}

Get the predicted probability of $\bmx$, i.e., $p(\Tilde{\beta}; \bmx)$,  as in Eqn~\eqref{eqn:prob_k}. Estimate the uncertainty for each sample as following:
\begin{itemize}
    \item Case (1) coreset selection. Obtain $\psi(\Tilde{\beta}; \bmx, \bmy)$ according to Eqn~\eqref{eqn:psi_defi} and obtain the uncertainty estimation as
    \begin{align*}
        u(\bmx, \bmy) = \TRACE\left(\psi(\Tilde\beta;\bmx, \bmy)\Sigma_M(\bmx)\right);
    \end{align*}
    \item Case (2) active learning. Obtain $\phi(\Tilde{\beta}; \bmx)$ according to Eqn~\eqref{eqn:phi_defi} and obtain the uncertainty estimation as 
    \begin{align*}
 u(\bmx) = \TRACE\left(\phi(\Tilde\beta;\bmx)\Sigma_M(\bmx)\right).
    \end{align*} 
\end{itemize}

\end{algorithm}

\begin{thm}[Uncertainty estimation in linear models]
\label{thm:uncertainty_estimation_multi}
Supposing that Assumptions \ref{ass:positive_M} and \ref{ass:bounded_momentum} hold, we have $M$ probe datasets $\{\cS^{(m)}\}_{m=1}^M$ and each $\cS^{(m)}$ contains $n'$ samples, we independently fit $M$ MLE classifiers $\{\hat \bbeta^{(m)}\}_{m=1}^M$ on $\{\cS^{(m)}\}_{m=1}^M$. Denote $\Tilde \beta = \frac{1}{M} \sum_{m=1}^M \mbox{Vec}(\hat \bbeta^{(m)}) $ and define $\Sigma_M(\bmx)$ as Eqn~\eqref{eqn:covariance_of_logits} in Algorithm~\ref{algo:uncertainty_estimation} 
, then  as $M \xrightarrow[]{} \infty$, $n' \xrightarrow[]{} \infty$ and $n\rightarrow\infty$, for  $(\bmx, \bmy) \in \cS$, we have 
\begin{align*}
    n' \TRACE\left(\psi(\Tilde \beta;\bmx, y)\Sigma_M(\bmx)\right) - \TRACE\left({\psi(\hat \beta_\MLE; \bmx, \bmy) \otimes (\bmx \bmx^\top)}  \bM_X^{-1}(\hat \beta_\MLE; \cS)\right) = o_P(1), \\
    n' \TRACE\left(\phi(\Tilde \beta;\bmx)\Sigma_M(\bmx)\right) -\TRACE\left({\phi(\hat \beta_{\MLE}; \bmx) \otimes (\bmx \bmx^\top)}  \bM_X^{-1}(\hat \beta_\MLE; \cS)\right) = o_P(1).
\end{align*}
\end{thm}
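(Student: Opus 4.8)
The plan is to process the three limits $M\to\infty$, $n'\to\infty$ and $n\to\infty$ in turn, collapsing every random object onto a population quantity evaluated at $\beta^*$, and then to close the identity with an elementary Kronecker-product trace manipulation. I describe the coreset ($\psi$) case; the active-learning ($\phi$) case is obtained by replacing $\psi$ with $\phi$ throughout. Fix $\bmx$ and conditionally on $\cS$: for fixed $n'$ I would view $\hat\bbeta^{(1)},\ldots,\hat\bbeta^{(M)}$ as i.i.d. random matrices (each a measurable function of an independent probe sample $\cS^{(m)}\sim\cD^{n'}$) and note that $\tilde\bbeta$ and $\Sigma_M(\bmx)$ from Eqn~\eqref{eqn:probe_betas}--\eqref{eqn:covariance_of_logits} are exactly the sample mean and the unbiased sample covariance of the i.i.d. vectors $(\hat\bbeta^{(m)})^\top\bmx$. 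The law of large numbers gives, as $M\to\infty$,
\[
\tilde\beta\pto\bbE\big[\Vec(\hat\bbeta^{(1)})\big],\qquad \Sigma_M(\bmx)\pto\Sigma_{n'}(\bmx):=\Cov\big((\hat\bbeta^{(1)})^\top\bmx\big),
\]
and since $\beta\mapsto\psi(\beta;\bmx,\bmy)$ is smooth, continuous mapping yields $\psi(\tilde\beta;\bmx,\bmy)\pto\psi(\bbE[\Vec(\hat\bbeta^{(1)})];\bmx,\bmy)$.

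Next I would let $n'\to\infty$. Classical M-estimation theory for the well-specified softmax model gives $\sqrt{n'}(\Vec(\hat\bbeta^{(1)})-\beta^*)\dto\cN(0,\bM_X^{-1}(\beta^*;\cD))$, the usual sandwich covariance collapsing to the inverse Fisher information because the Hessian of the expected loss equals $\bM_X(\beta^*;\cD)$ and, by Lemma~\ref{lemma:phi_kesi_equal}, $\bbE[\psi(\beta^*;\bx,\by)\otimes(\bx\bx^\top)]=\bbE[\phi(\beta^*;\bx)\otimes(\bx\bx^\top)]=\bM_X(\beta^*;\cD)$ as well. Using the moment conditions of Assumptions~\ref{ass:positive_M}--\ref{ass:bounded_momentum} to obtain uniform integrability of $n'\|\Vec(\hat\bbeta^{(1)})-\beta^*\|^2$, one gets $\bbE[\Vec(\hat\bbeta^{(1)})]\to\beta^*$ and, writing $(\hat\bbeta^{(1)})^\top\bmx=(\bI_K\otimes\bmx^\top)\Vec(\hat\bbeta^{(1)})$, that $n'\,\Sigma_{n'}(\bmx)\to (\bI_K\otimes\bmx^\top)\bM_X^{-1}(\beta^*;\cD)(\bI_K\otimes\bmx)$. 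Combining with the previous step and continuity of $\psi$,
\[
n'\,\TRACE\big(\psi(\tilde\beta;\bmx,\bmy)\Sigma_M(\bmx)\big)\pto \TRACE\Big(\psi(\beta^*;\bmx,\bmy)\,(\bI_K\otimes\bmx^\top)\bM_X^{-1}(\beta^*;\cD)(\bI_K\otimes\bmx)\Big).
\]

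The algebraic heart is a one-line identity: for any $A\in\bbR^{K\times K}$ and $C\in\bbR^{Kd\times Kd}$, the mixed-product rule gives $(\bI_K\otimes\bmx)A(\bI_K\otimes\bmx^\top)=A\otimes(\bmx\bmx^\top)$, hence by cyclicity of the trace $\TRACE(A(\bI_K\otimes\bmx^\top)C(\bI_K\otimes\bmx))=\TRACE((A\otimes(\bmx\bmx^\top))C)$; taking $A=\psi(\beta^*;\bmx,\bmy)$ and $C=\bM_X^{-1}(\beta^*;\cD)$ identifies the limit above with $\TRACE(\psi(\beta^*;\bmx,\bmy)\otimes(\bmx\bmx^\top)\bM_X^{-1}(\beta^*;\cD))$. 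Finally, as $n\to\infty$, Assumption~\ref{ass:positive_M} gives $\hbeta_\MLE\pto\beta^*$ and $\bM_X(\beta^*;\cS)\pto\bM_X(\beta^*;\cD)\succ0$; with smoothness of $\beta\mapsto\phi(\beta;\cdot)$ and the third-moment bound in Assumption~\ref{ass:positive_M} this upgrades to $\bM_X(\hbeta_\MLE;\cS)\pto\bM_X(\beta^*;\cD)$, hence $\bM_X^{-1}(\hbeta_\MLE;\cS)\pto\bM_X^{-1}(\beta^*;\cD)$ and $\psi(\hbeta_\MLE;\bmx,\bmy)\pto\psi(\beta^*;\bmx,\bmy)$, so $\TRACE(\psi(\hbeta_\MLE;\bmx,\bmy)\otimes(\bmx\bmx^\top)\bM_X^{-1}(\hbeta_\MLE;\cS))$ has the same limit and the difference of the two traces is $o_P(1)$. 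The $\phi$ statement follows line by line.

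I expect the crux to be the second step: upgrading the MLE central limit theorem from convergence in distribution to convergence of the rescaled second moments, so that $n'\Sigma_M(\bmx)$ genuinely equals $(\bI_K\otimes\bmx^\top)\bM_X^{-1}(\beta^*;\cD)(\bI_K\otimes\bmx)$ in the limit rather than merely sharing its distributional limit (this is exactly where the uniform-integrability/moment hypotheses are needed), together with organizing the three limits, whether iterated or joint, so that all remainder terms are genuinely $o_P(1)$. The Kronecker trace identity, though it is the conceptual reason the logit covariance reproduces $\bM_X^{-1}$, is routine once written down.
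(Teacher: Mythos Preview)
Your proposal is correct and follows essentially the same route as the paper's proof: law of large numbers for $M\to\infty$ to collapse $\Sigma_M(\bmx)$ onto the population logit covariance, MLE asymptotic normality for $n'\to\infty$ to identify that covariance with $(\bI_K\otimes\bmx^\top)\bM_X^{-1}(\beta^*;\cD)(\bI_K\otimes\bmx)$, the Kronecker trace identity to rewrite this as $\TRACE(\psi\otimes(\bmx\bmx^\top)\bM_X^{-1})$, and finally Lemma~\ref{lemma:closeness_between_Mxbeta} for the $n\to\infty$ limit on the $\hbeta_\MLE$ side. If anything, you are more careful than the paper: you explicitly flag the uniform-integrability upgrade needed to pass from the CLT to convergence of rescaled second moments, and you handle the $\psi(\tilde\beta)\to\psi(\beta^*)$ step via continuous mapping, both of which the paper leaves implicit.
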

See Appendix \ref{app:proof_of_output_variance} for a proof. This theorem demonstrates that the uncertainty quantities can be approximated without explicitly calculating the inverse of covariance matrix. 
Instead, we only need to calculate a MLE estimator $\tilde \beta$ and the covariance of the output logits $\{(\hat \bbeta^{(m)})^\top \bmx\}_{m=1}^M$ derived from $M$ models. In other words, we only need to obtain $\{\hat \bbeta^{(m)}\}$ on $M$ probe sets, respectively. We then obtain the optimal sampling ratio through calculating $\Sigma_M(\bmx)$, which is the covariance of $\{(\hat \bbeta^{(m)})^\top \bmx \}_{m=1}^M$ as defined in Eqn~\eqref{eqn:covariance_of_logits}. 

\begin{algorithm}[t]
\caption{COPS  for coreset selection on linear models \label{algo:linear_those_coreset}}
\KwInput{Training data $\cS$, $M$ probe datasets $\{\cS^{(m)}\}_{m=1}^M$, sub-sampling size $r$. }
\KwOut{The selected subset $\bar\cS$ and the model $\bar \beta$.}

For each $(\bmx, \bmy) \in \cS$, obtain $u(\bmx, \bmy)$ by Algorithm~\ref{algo:uncertainty_estimation} with $\{\cS^{(m)}\}_{m=1}^M$;
 
Randomly draw $\bar\cS$ containing $r$ samples from $\cS$ by $\pi(\bmx, \bmy) = u(\bmx, \bmy) / \sum_{(\bmx', \bmy') \in \cS} u(\bmx', \bmy')$.

Solve $\bar \beta$ on the weighted subset $\bar\cS(\pi)$ according to Eqn~\eqref{eqn:weighted_solver}.
\end{algorithm}

\begin{algorithm}[t]
\caption{COPS  for active learning on linear models \label{algo:linear_those_active}}
\KwInput{Training data $\cS_X$, $M$ probe datasets $\{\cS^{(m)}\}_{m=1}^M$, sub-sampling size $r$. }
\KwOut{The selected subset $\bar\cS$ with inquired label and the model $\bar \beta$.}

For each $\bmx \in \cS_X$, obtain $u(\bmx)$ by Algorithm~\ref{algo:uncertainty_estimation} with $\{\cS^{(m)}\}_{m=1}^M$;
 
Randomly draw $\bar\cS_X$ containing $r$ samples from $\cS$ by $\pi(\bmx) = u(\bmx) / \sum_{\bmx' \in \cS} u(\bmx')$.

Obtain the labeled data set $\bar\cS$ by labeling each sample in $\bar\cS_X$.

Solve $\bar \beta$ on the weighted subset $\bar\cS(\pi)$ according to Eqn~\eqref{eqn:weighted_solver}.
\end{algorithm}

\paragraph{Approximations in Deep Learning.} Our objective is to develop a sub-sampling method for deep learning. Let's consider a deep neural network $f_\theta(\mathbf{x})$ with parameters $\theta \in \mathbb{R}^{d'}$, where both $d$ and $d'$ are extremely large in the context of deep learning. There exist gaps between the theory presented in Section~\ref{eqn:optimal_derivation} and deep learning due to the nonlinearity involved in $f_\theta$. However, we can leverage insights from learning theory, such as the Neural Tangent Kernel \cite{Jacot2018NeuralTK}, which demonstrates that a wide DNN can be approximated by a linear kernel with a fixed feature map $\nabla_\theta f_\theta(\cdot) \in \mathbb{R}^d \xrightarrow[]{\ } \mathbb{R}^{d'}$. Consequently, we can approximate uncertainty by calculating the standard deviation from different linear kernels, as outlined in Theorem \ref{thm:uncertainty_estimation_multi}. Importantly, our method does not necessitate explicit computation of the linear kernel, as we only require the output $\beta^\top \mathbf{x}$ from Theorem~\ref{thm:uncertainty_estimation_multi}. Thus, we can directly replace $\beta^\top \mathbf{x}$ with the output of the DNN, i.e., $f_\theta(\mathbf{x})$.

Let $f_{\theta, k}(\bx)$ denote the $k$th dimension of $f_\theta(\bx)$ for $k = 0, ..., K$. We denote the output probability of $f_\theta$ on sample $\bx$ by
\begin{align*}
    p(f_\theta; \bx) = [p_0(f_\theta; \bx), p_1(f_\theta; \bx), ..., p_K(f_\theta; \bx)], \mbox{ where } p_k(f_\theta; \bx) = \frac{\exp(f_{\theta, k}(\bx))}{\sum_{l=0}^K \exp(f_{\theta, l}(\bx))}.
\end{align*}
Recall in Algorithm~\ref{algo:uncertainty_estimation} that we train $M$ independent linear models on $M$ different probe sets, respectively. In practice, getting $M$ additional probe sets can be costly. One option is to use bootstrap, where $M$ subsets are resampled from a single probe set $\cS'$ and the variance is estimated based on the $M$ trained models. \cite{gonccalves2005bootstrap} shows that the variance estimated by bootstrap converges to the asymptotic variance, which is the uncertainty quantity. 
However, we adopt a different way which is more popular in deep learning: we train $M$ neural networks, $\{f_{\theta^{(m)}}\}_{m=1}^M$, on a single probe set $\cS'$ with different initialization and random seeds, which empirically outperforms the bootstrap method.  
With $\{f_{\theta^{(m)}}\}_{m=1}^M$, we then replace the linear models in Algorithm \ref{algo:uncertainty_estimation} by their DNN counterparts, i.e., replace $\hat\bbeta_{m}^\top  \bmx$ by $f_{\theta^{(m)}}(\bmx)$,  $\Tilde \bbeta^\top \bmx$ by $\frac{1}{M} \sum_{m=1}^M f_{\theta^{(m)}}(\bmx)$, and $p(\Tilde \beta; \bmx)$ by $\frac{1}{M} \sum_{m=1}^M p(f_{\theta^{(m)}}; \bmx)$.
We summarize the uncertainty estimation for DNN in Algorithm~\ref{algo:uncertainty_estimation_DNN} in Appendix~\ref{sect:uncertainty_estimation_dnns}. Notably, our method can be further simplified by training a single model on $\cS'$ with dropout and then can obtain $\{f_{\theta^{(m)}}\}_{m=1}^M$ by using Monte Carlo Dropout during inference. In Section~\ref{sect:main_experiments}, we also empirically compare different uncertainty estimation methods including different initialization, bootstrap, and dropout. 

The detailed algorithm as summarized in Algorithm~\ref{algo:dnn_those_coreset} and~\ref{algo:dnn_those_active} in Appendix~\ref{sect:uncertainty_estimation_dnns}.

\section{Towards Effective Sampling Strategy  in Real Word Applications}
\label{sect:thresholding_uncertainty}
In this section, we enhance the theoretically motivated sampling algorithm by incorporating insights gained from empirical observations. To begin, we experiment with the optimal sampling strategy Algorithm~\ref{algo:dnn_those_coreset} and~\ref{algo:dnn_those_active} on deep learning datasets.

\subsection{Vanilla uncertainty sampling strategy is ineffective in applications}

\paragraph{Settings.} We try out the sampling for DNN, i.e., Algorithm~\ref{algo:dnn_those_coreset} and \ref{algo:dnn_those_active} (with uncertainty estimation in Algorithm~\ref{algo:uncertainty_estimation_DNN}) with ResNet20~\cite{he2016deep}. 
We performed experiments on three datasets: (1) CIFAR10~\cite{krizhevsky2009learning}, (2) CIFARBinary, and (3) CIFAR10-N~\cite{wei2021learning}. CIFARBinary is a binary classification dataset created by selecting two classes (plane and car) from CIFAR10. CIFAR10-N is a variant of CIFAR10 with natural label noise~\cite{wei2021learning}. For a more comprehensive description of the datasets, please refer to Section \ref{sect:main_experiments}. For all settings, 
we split the training set into two subsets, i.e., the probe set ($\cS'$ in Algorithm~\ref{algo:dnn_those_coreset}-\ref{algo:dnn_those_active}) and the sampling dataset set ($\cS$ in Algorithm~\ref{algo:dnn_those_coreset}-\ref{algo:dnn_those_active}). We train 10 probe neural networks on $\cS'$ and estimate the uncertainty of each sample in $\cS$ with these networks. We select an subset with 300 samples per class from $\cS$ according to Algorithm~\ref{algo:dnn_those_coreset}-\ref{algo:dnn_those_active}, on which we train the a ResNet20 from scratch.

Since we conduct experiments on multiple datasets with different sub-sampling size, and for both coreset selection and active learning problems. We then use WithY to denote the coreset selection since we have the whole labeled dataset and we use WithoutY for active learning.  We use the triple ``(dataset name)-(target sub-sampling size)-(whether with $Y$)'' to denote an experimental setting, for example: CIFAR10-3000-WithY is short for the setting to select 3,000 samples from labeled CIFAR10 dataset for coreset selection.

\paragraph{Results.} Surprisingly, the results in Figure~\ref{fig:cifar_vanilla} shows that the sampling Algorithm~\ref{algo:dnn_those_coreset} and \ref{algo:dnn_those_active} are even inferior than uniform sampling in some settings both for coreset selection (WithY) and active learning (WithoutY).  For example, in the CIFARBinary-600-WithY setting in Figure~\ref{fig:cifar_vanilla}, uncertainty sampling leads to a testing performance of 75.26\%, which is much worse than uniform sampling's performance 88.31\%.  



\begin{figure}[t]
\centering
\includegraphics[width=10cm]{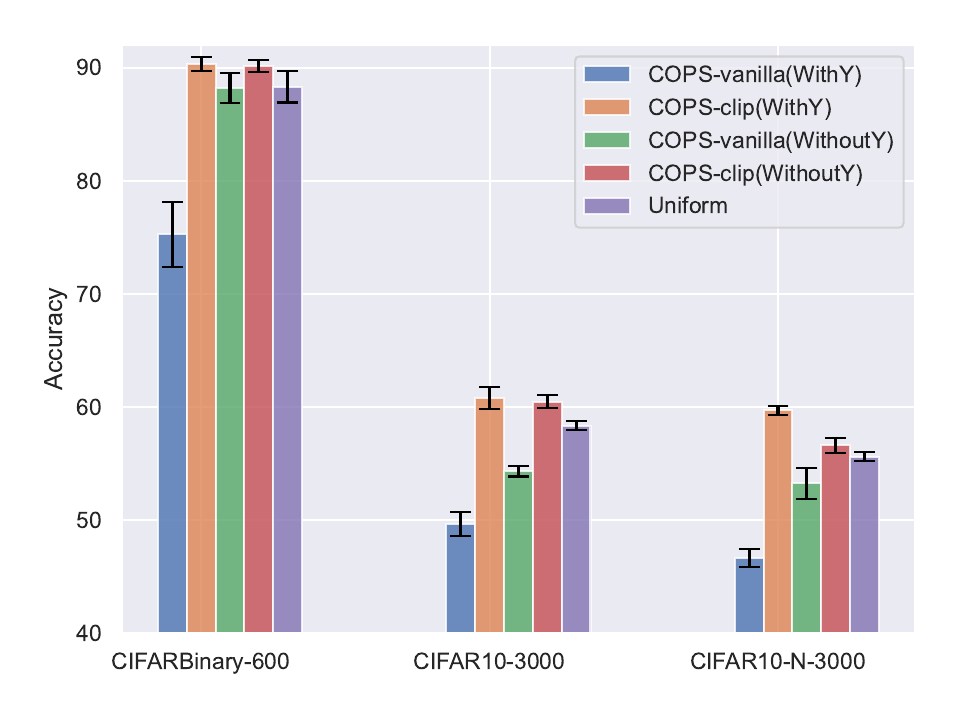}
\caption{The vanilla implementation of the uncertainty Algorithm~\ref{algo:dnn_those_coreset} and \ref{algo:dnn_those_active} (i.e., COPS-vanilla) displays inferior performance. Whereas, thresholding the maximum uncertainty during sample selection (i.e., COPS-clip) significantly enhances the overall performance. \label{fig:cifar_vanilla}}
\end{figure}






\paragraph{A closer look at the Uncertainty sampling.}  Figure~\ref{fig:cifar10_withy_uncertainty_histogram}(a) visualizes the uncertainty distribution of samples in CIFAR10 estimated by Algorithm~\ref{algo:uncertainty_estimation_DNN} . Figure~\ref{fig:cifar10_withy_uncertainty_histogram}(b) shows the uncertainty of the 3000 samples selected according to the sample selection ratio in Eqn~\eqref{eqn:withy_optimal_sampling_multiclass}, i.e., the uncertainty of 3000 samples selected by COPS in the CIFAR10-3000-WithY setting. The uncertainty distribution of the selected data in Figure~\ref{fig:cifar10_withy_uncertainty_histogram}(b) is quite different from the uncertainty distribution of the full dataset in Fig~\ref{fig:cifar10_withy_uncertainty_histogram}(a). The selected subset contains a large number of data with high uncertainty. Figure~\ref{fig:cifar10_withoy_uncertainty_histogram} shows similar trends in CIFAR10-3000-WithoutY.
 
Recall that the optimal sampling ratio is derived in a simplified setting where we assume that there is no model misspecification. The sampling schemes in Eqn.~\eqref{eqn:withy_optimal_sampling_multiclass}~and~\eqref{eqn:withouty_optimal_sampling_multiclass} tend to select samples from the low density region with high uncertainty. Whereas, previous studies \cite{he2022nearly, ye2023corruption} demonstrate that in cases where substantial misspecification happens to samples on low-density regions, the model estimation can be significantly impacted. We conjecture that the  uncertainty sampling methods in Algorithms~\ref{algo:dnn_those_coreset} and~\ref{algo:dnn_those_active} suffer from this issue since they place unprecedented emphasis on the low density region. We then illustrate this effect by a logistic linear classification example in the following section.   

\begin{figure}[t]
\centering
\begin{minipage}[t]{0.33\textwidth}
\centering
\includegraphics[width=5cm]{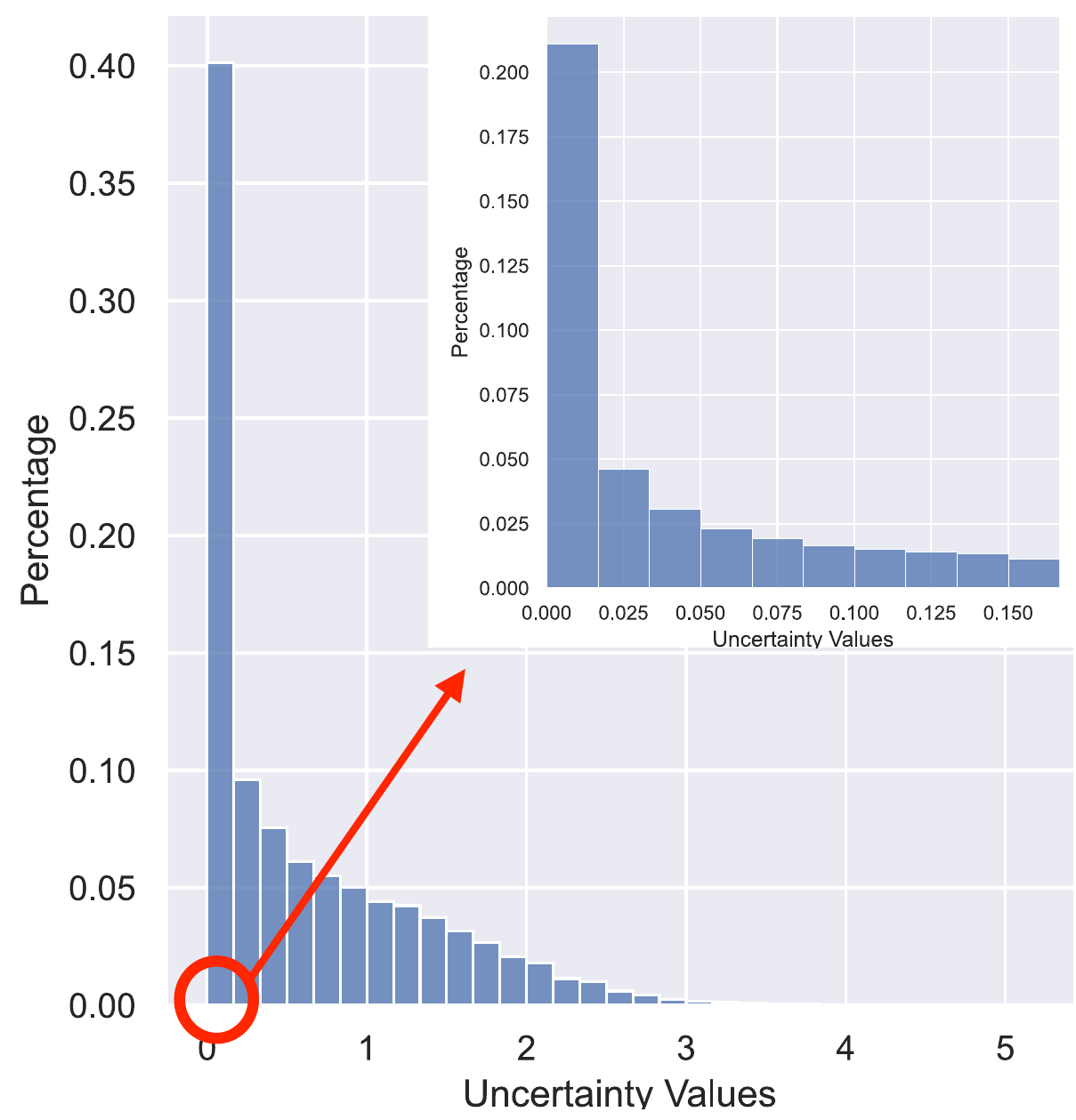}
\caption*{(a)The full dataset.}
\end{minipage}
\begin{minipage}[t]{0.3\textwidth}
\centering
\includegraphics[width=5.25cm]{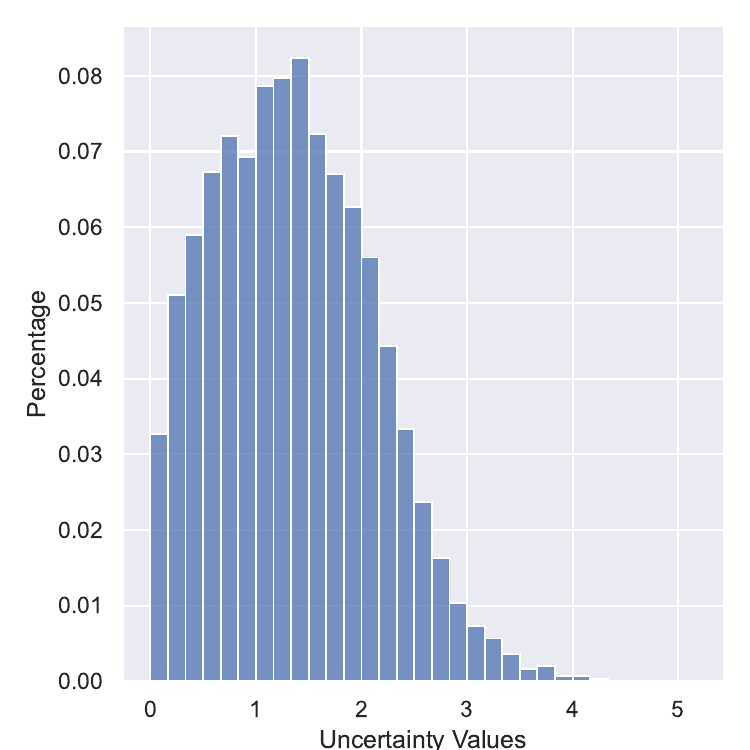}
\caption*{(b) The 3000 samples selected by COPS-vanilla. }
\end{minipage}
\begin{minipage}[t]{0.3\textwidth}
\centering
\includegraphics[width=5.25cm]{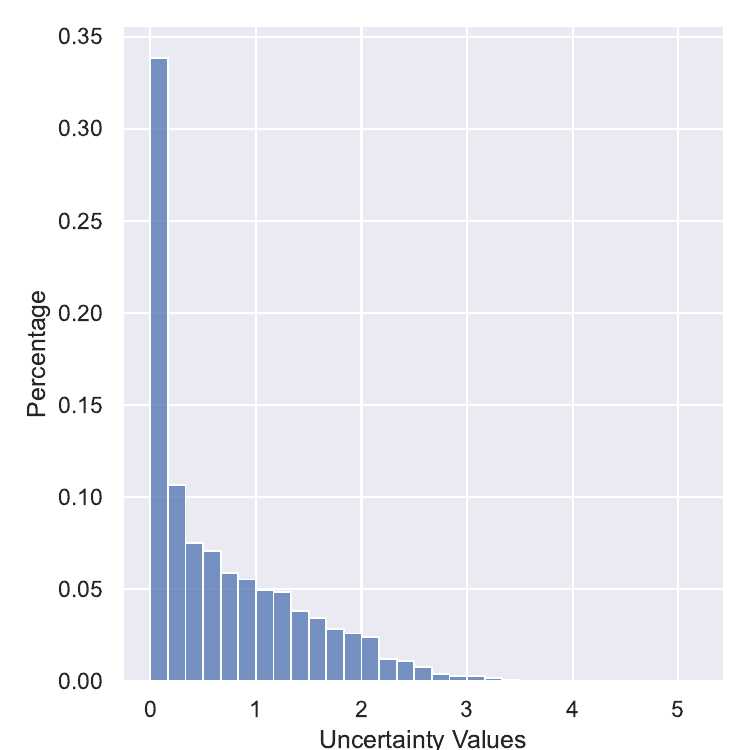}
\caption*{(c) The 3000 samples selected by COPS-clip. }
\end{minipage}
\caption{Histogram of estimated uncertainty of samples on CIFAR10-3000-WithY.}

\label{fig:cifar10_withy_uncertainty_histogram}
\end{figure}

\begin{figure}[t]
\centering
\begin{minipage}[t]{0.35\textwidth}
\centering
\includegraphics[width=5cm]{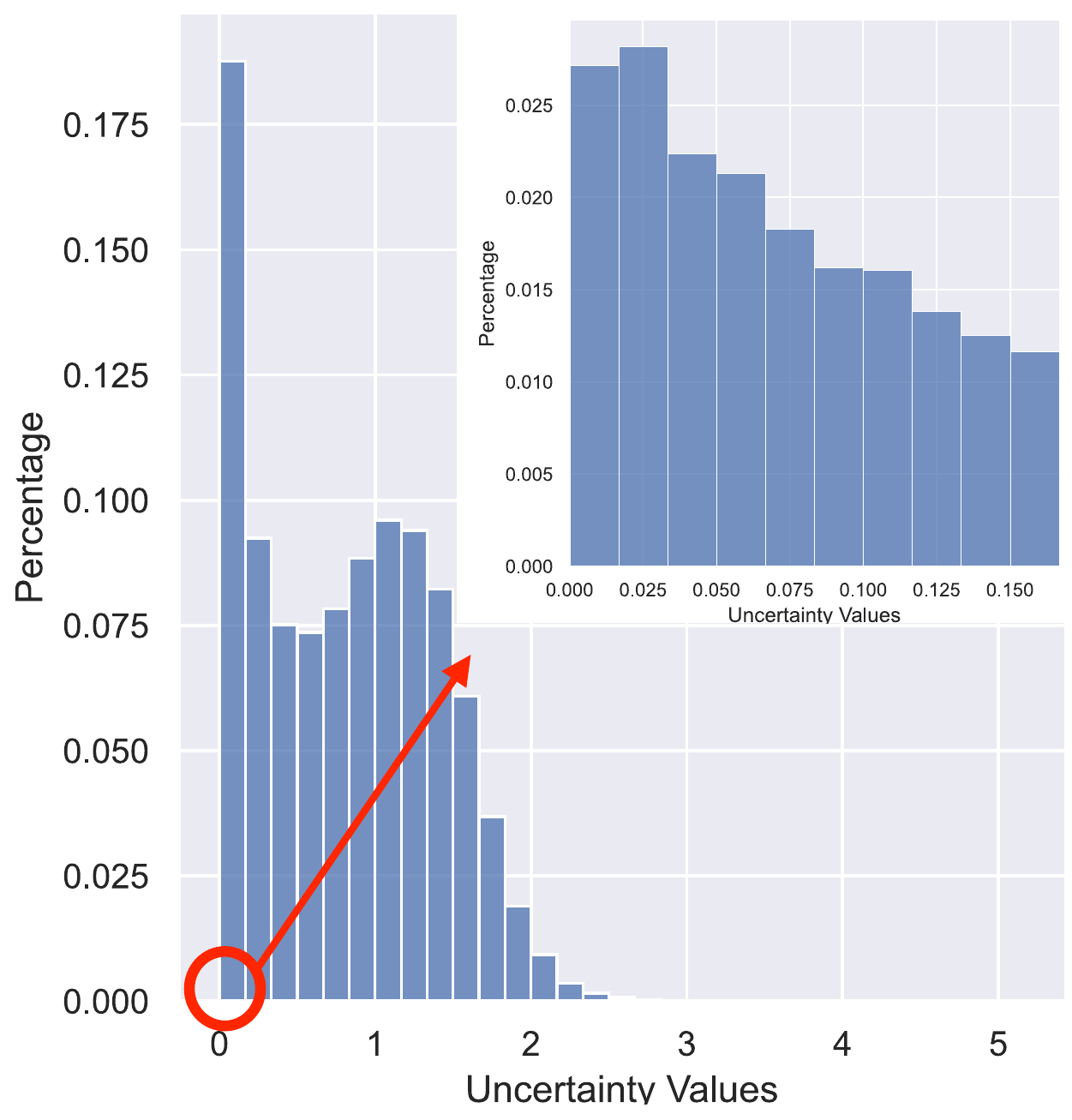}
\caption*{(a) On the full sampling dataset.}
\end{minipage}
\begin{minipage}[t]{0.3\textwidth}
\centering
\includegraphics[width=5.25cm]{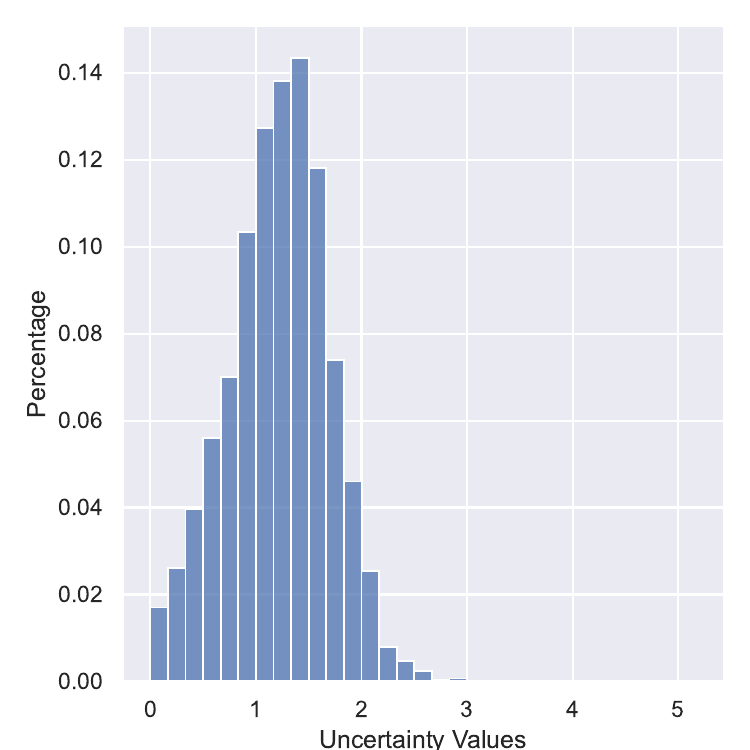}
\caption*{(b) Subset selected by COPS-vanilla}
\end{minipage}
\begin{minipage}[t]{0.3\textwidth}
\centering
\includegraphics[width=5.25cm]{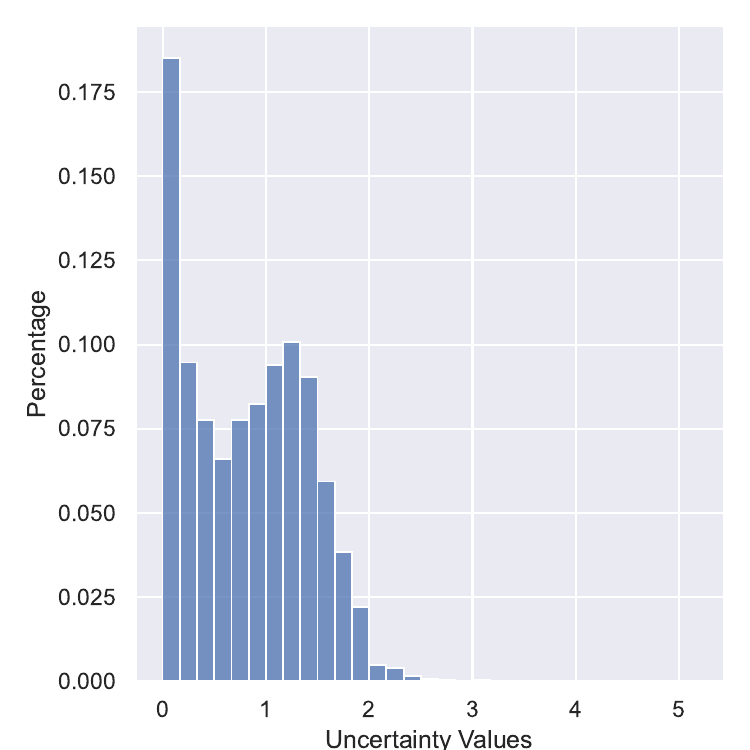}
\caption*{(c) Subset selected by COPS-clip}
\end{minipage}
\caption{Histogram of estimated uncertainty on CIFAR10 without labels (active learning).}

\label{fig:cifar10_withoy_uncertainty_histogram}
\end{figure}

\subsection{Simulating the effect of model misspecification on sampling algorithms}

\paragraph{Simulation with a linear example.} The optimal sampling strategy Eqn.~\eqref{eqn:withy_optimal_sampling_multiclass}~and~\eqref{eqn:withouty_optimal_sampling_multiclass} is derived under the assumption that the model is well-specified, i.e., there exists an oracle $\beta^*$ such that  $\bbP(y=c_k|\bmx) = p_k(\beta^*; \bmx)$ for all $\bmx$ and $k$. To illustrate how the uncertainty sampling can suffer from model misspecification, we conduct simulations on the following example which contains model misspecification following the setting of \cite{he2022nearly, ye2023corruption, bogunovic2021stochastic}.

Consider a binary classification problem $y \in \{0, 1\}$ with 2-dimensional input $\bx \in \bbR^2$. The true parameter $\beta^* = [2, 2]^\top$. In this simulation, we consider adversarial corruption, a typical case of misspecification in a line of previous research \cite{he2022nearly, ye2023corruption, bogunovic2021stochastic}. In this case, an adversary corrupts the classification responses $\by$ before they are revealed to the learners. Hence, if the learner still make estimations via the linear logistic model, the misspecification occurs. Suppose that the there exists model misspecification characterized by $\zeta: \bbR^2 \xrightarrow[]{} \bbR$ such that 
\begin{align}
    \label{eqn:corrupt_y}
     P(y=1|\bx, \beta^*, \zeta) = \frac{\exp{(\bx^\top \beta^* + \zeta(x))}}{1+\exp{(\bx^\top \beta^* + \zeta(x))}}.
\end{align}
Consider a training dataset consisting of 1,000 instances of $\bmx_1$, 100,000 instances of $\bmx_2$, and 100,000 instances of $\bmx_3$, where $\bmx_1=[1, 0]$, $\bmx_2=[0.1, 0.1]$, and $\bmx_3=[0, 1]$. It is evident that $\bmx_1$ falls within the low density region. In the following part, we will introduce non-zero corruption on $\bmx_1$. It is easy to infer that a corruption on $\bmx_1$ would induce estimation error on the first dimension of $\beta$. We incorporate $\bmx_2$ within the dataset to ensure that the estimation error on the first dimension would affect the estimation error on the second dimension.

\begin{table}[t]
    \centering
    \begin{tabular}{c|c|c|c}
    \toprule
       $\bx$ & $\bmx_1 = [1, 0]^\top$ & $\bmx_2 = [0.1, 0.1]^\top$ & $\bmx_3 = [0, 1]^\top$ \\
       \midrule
        Sampling Set & $n_1 = 1,000$ & $n_2 = 100,000$ & $n_3 = 100,000$\\
        Testing Set & $n_1 = 1,000$ & $n_2 = 100,000$ & $n_3 = 100,000$\\
         \bottomrule
    \end{tabular}
    \caption{A simple example with 2-dimensional input $\bx \in \bbR^2$ and binary output $y \in \{0, 1\}$. There are three kinds of inputs as shown in the table. Both the training (sampling) and testing set contains 1,000 $\bmx_1$, 100,000 $\bmx_2$ and 100,000 $\bmx_3$, respectively.}
    \label{tab:simulation_example}
\end{table}

\begin{figure}[t]
\centering
\begin{minipage}[t]{0.48\textwidth}
\centering
\includegraphics[width=8cm]{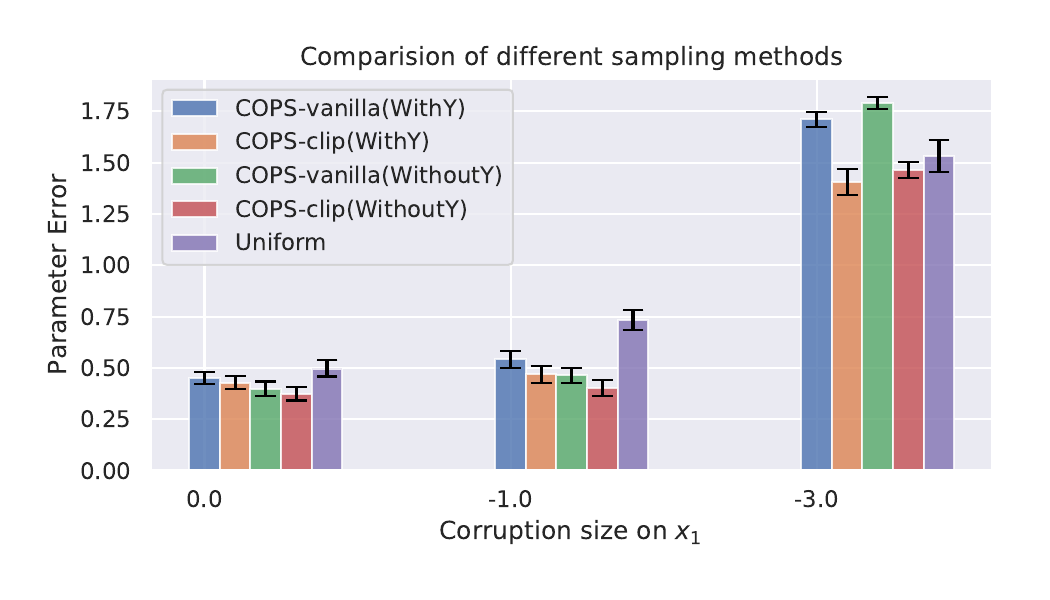}
\end{minipage}
\begin{minipage}[t]{0.48\textwidth}
\centering
\includegraphics[width=8cm]{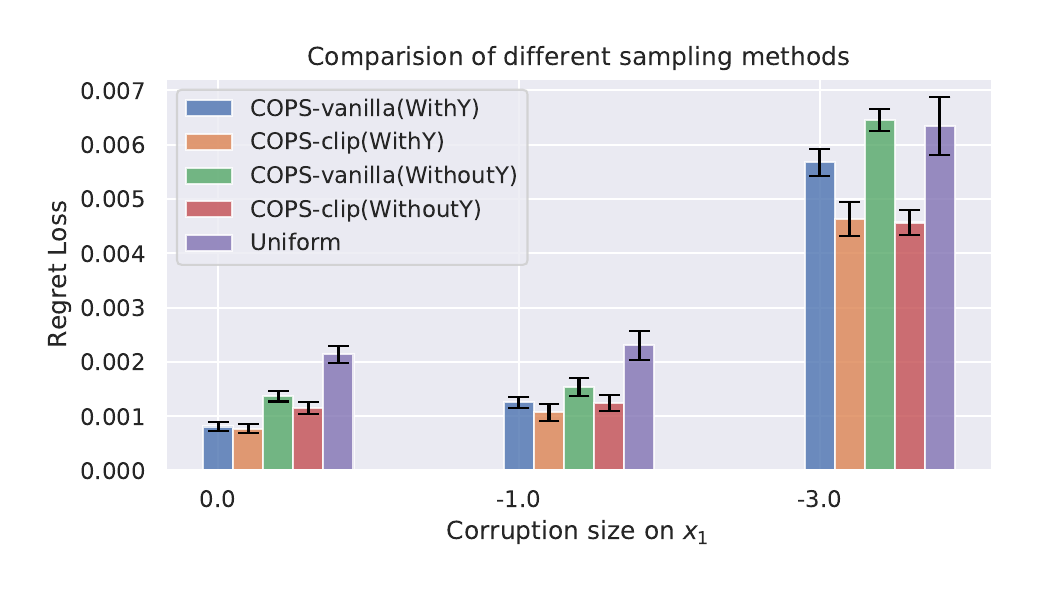}
\end{minipage}
\caption{Comparison on of different sampling methods on simulation data. Left)the error of parameter estimation $|\bar \beta - \beta^*|$ ; Right)  the regret loss $\cL(\bar \beta) - \cL(\beta^*)$ on the testing set. }
\label{fig:simulation_results}
\end{figure}

We conduct simulations involving three cases of corruption in the low-density region $\bmx_1$: (a) $\zeta(\bmx_1)=0$, (b) $\zeta(\bmx_1)=-1$, and (c) $\zeta(\bmx_1)=-3$.  We select 1,000 samples from a total of 201,000 samples and obtain $\bar \beta$ by uniform sampling, COPS for coreset selection (the linear Algorithm~\ref{algo:linear_those_coreset}) and COPS for active learning (the linear Algorithm~\ref{algo:linear_those_active}). We also visualize parameter estimation error $|\bar \beta - \beta^*|$. 
We evaluate the regret loss $\cL(\bar \beta) - \cL(\beta^*)$ on the testing set without corruption as shown in Table~\ref{tab:simulation_example}. The results of the comparison for each method are presented in 
Figure~\ref{fig:simulation_results}. The simulation results demonstrate that the vanilla uncertainty sampling (i.e., COPS-vanilla) strategy performs well when there is no corruption. However, as the level of corruption increases, the performance of uncertainty sampling deteriorates quickly and can be even worse than random sampling when $\zeta(\bmx_1)=-3$. 

\subsection{A simple fix} 
\cite{he2022nearly, ye2023corruption} argues that the corruption in the low density region can make $\beta_\MLE$ deviates significantly from $\beta^*$. To alleviate this problem, \cite{he2022nearly, ye2023corruption} propose to assign a smaller weight to the samples in low density regions when performing weighted linear regression, resulting in a solution closer to $\beta^*$. Specifically, they assign a weight $1/\max (\alpha, \|\bmx\|_{M_X^{-1}})$  to each sample to perform linear regression where $\alpha$ is a pre-defined hyper-parameter. For the samples with large uncertainty, they will have a small weight. 

Recall that we select data according to the uncertainty $u(\bmx, \bmy) = |\delta_1(\bmy) - p_1(\hat \beta_\MLE; \bmx)|\cdot \|\bmx\|_{M_X^{-1}}$. We can incorporate the idea of \cite{he2022nearly, ye2023corruption} through modifying the uncertainty sampling ratio by multiplying $u(\bmx, \bmy)$ with  $1/\max (\alpha, \|\bmx\|_{M_X^{-1}})$, i.e., draw samples according to $u(\bmx, \bmy)/\max (\alpha, \|\bmx\|_{M_X^{-1}}).$ Furthermore, since $u(\bmx, \bmy)$ and $\|\bmx\|_{M_X^{-1}}$ only differ by an scaling term $|\delta_1(\bmy) - p_1(\hat \beta_\MLE; \bmx)|$, we use an even simpler version $u(\bmx, \bmy)/\max (\alpha, u(\bmx, \bmy) ) \propto \min (\alpha,  u(\bmx, \bmy) )$, which turns out to simply threshold the maximum value of $u(\bmx, \bmy)$ for sampling. Therefore, the overall sampling ratio for coreset selection in Eqn~\eqref{eqn:optimal_ratio_binary_coreset} is modified as follows:
\begin{align}
        \label{eqn:optimal_ratio_binary_coreset_clip}
        \pi^\alpha(\bmx, \bmy) = \frac{\min(\alpha, u(\bmx, \bmy))}{\sum_{(\bmx', \bmy') \in \cS} \min(\alpha, u(\bmx', \bmy') \}},
    \end{align}
where $u(\bmx, \bmy) = |\delta_1(y) - p_1(\hat \beta_\MLE; \bmx')|\cdot\|\bmx'\|_{M_X^{-1}}$. The full modified algorithm for coreset selection is included in Algorithm~\ref{algo:dnn_those_coreset_clipping} in Appendix \ref{app:algorithms_with_clip}. The algorithm for active learning selection is also modified accordingly as shown in Algorithm~\ref{algo:dnn_those_active_clipping} in Appendix~\ref{app:algorithms_with_clip}. 
Notably, we don't modify the reweighting accordingly. Intuitively, original COPS  select samples by $u$ and the minimize the loss weighted by $1/u$. Here we select samples according to $\min\{\alpha, u\}$ but still use the original reweighting $1/u$. By this method, we can reduce the negative impact of model misspecification on the samples from the low density region i.e., samples with high uncertainty, obtaining a $\bar \beta$ closer to $\beta^*$.

We applied this method in the simulation experiment, testing the threshold at 3 or 10 times the minimum uncertainty. Take the threshold 3 for coreset selection for example, we set $\alpha = 3 \cdot\min_{(\bmx, \bmy) \in \cS}  u(\bmx, \bmy)$.   To differentiate, we use the suffix `COPS-clip' to represent the method with limited uncertainty from above. On the other hand, we refer to the unmodified COPS method as `COPS-vanilla'. The outcomes displayed in  Figure~\ref{fig:simulation_results} demonstrate how this straightforward approach enhances the performance of uncertainty sampling in case of substantial corruption, achieving significant improvement over both uniform sampling and COPS-vanilla in terms of both $|\bar \beta - \beta^*|$ and $\cL(\bar \beta) - \cL(\bar \beta^*)$.  The results in Figure~\ref{fig:cifar_vanilla}  show that the `COPS-clip' also works well in real world applications. 

Figure~\ref{fig:cifar10_withy_uncertainty_histogram}(c) and Figure~\ref{fig:cifar10_withoy_uncertainty_histogram}(c) illustrate the uncertainty distribution of the 3000 samples selected by COPS-clip in the CIFAR10-3000-WithY and CIFAR10-3000-WithoutY settings, respectively. We can see that compared to COPS-vanilla, COPS-clip selects samples whose uncertainty distribution is closer to the uncertainty distribution of the entire CIFAR10 dataset, with only a slight increase in samples exhibiting high uncertainty. In Appendix~\ref{sec:label_noise}, we provide additional results that COPS-vanilla selects a higher proportion of noisy data in CIFAR10-N compared to uniform sampling. However, COPS-clip does not exhibit an increase in the noisy ratio when compared to uniform sampling. 

\paragraph{Remark 1.} 
To simplify the discussion, let $u$ denote the $u(\bmx, \bmy)$ for coreset selection, and $u(\bmx)$ for active learning. In the vanilla COPS method, two stages are performed: 
(Stage 1): Data subsampling according to $u$.
(Stage 2): Weighted learning, where each selected sample is assigned a weight of $1/u$ to get an unbiased estimator. Since the sample weighting in Stage 2 involves calculating the inverse of $u$, it can result in high variance if $u$ approaches zero. To address this, previous work has implemented a threshold of $1/\max\{\beta, u\}$  \cite{ionides2008truncated, swaminathan2015counterfactual, citovsky2023leveraging}, which limits the minimum value of $u$. Both COPS-vanilla and COPS-clip adopt this strategy by default in the second stage to limit the variance and $\beta$ is set to 0.1 for all real-world dataset experiments (including the experiments in Figure~\ref{fig:cifar_vanilla}). Appendix~\ref{app:exp_details} shows the full details on this part. However, our empirical analysis reveals the importance of also limiting the maximum of $u$ by $\min\{\alpha, u\}$ in the first stage, which can alleviate the negative impact of potential model misspecification on COPS. To the best of our knowledge, this hasn't been discussed in existing works \cite{wang2018optimal, ting2018optimal, yao2019optimal, swaminathan2015counterfactual, citovsky2023leveraging}. Appendix~\ref{sec:thre_exp} presents empirical results to compare the impact of threshold on the first and second stages.



\section{Experiments and results}
\label{sect:main_experiments}
\paragraph{Settings.} In this section, we conduct extensive experiments to verify COPS. Here the COPS method refers to COPS-clip in Section~\ref{sect:thresholding_uncertainty} by default  and the detailed algorithms are in Algorithm~\ref{algo:dnn_those_coreset_clipping_withbeta}-\ref{algo:dnn_those_active_clipping_withbeta}. We compare COPS with various baseline methods, validate COPS on various datasets including both CV and NLP task and also datasets with natural label noise. For all the methods studied in this section, we use the same setting as described in Section~\ref{sect:thresholding_uncertainty} that we train probe networks on one probe dataset and performing sampling at once on the sampling dataset. 
The datasets used in our experiments are as follows:

\begin{itemize}
    \item CIFAR10~\cite{krizhevsky2009learning}: We utilize the original CIFAR10 dataset~\cite{krizhevsky2009learning}. To construct the probe set, we randomly select 1000 samples from each class, while the remaining training samples are used for the sampling set. For our experiments, we employ ResNet20, ResNet56~\cite{he2016deep}, MobileNetV2~\cite{sandler2018mobilenetv2}, and DenseNet121~\cite{huang2017densely} as our backbone models.

    \item CIFARBinary: We choose two classes, plane and car, from the CIFAR10 dataset for binary classification. Similar to CIFAR10, we assign 1000 samples from the training images for the probe set of each class, and the remaining training samples form the sampling set. In this case, we employ ResNet20 as our backbone model.

    \item CIFAR100: From the CIFAR100 dataset~\cite{krizhevsky2009learning}, we randomly select 200 samples for each class and assign them to the probe set. The remaining training samples are used in the sampling set. For this dataset, ResNet20 is utilized as the backbone model.

    \item CIFAR10-N: We use CIFAR10-N, a corrupted version of CIFAR10 introduced by Wei et al.~\cite{wei2021learning}. The training set of CIFAR10-N contains human-annotated real-world noisy labels collected from Amazon Mechanical Turk and the testing set of CIFAR10-N is the same with CIFAR10.  Similar to CIFAR10, we split 1000 samples from each class for the probe set, while the rest are included in the sampling set. We employ ResNet20 as our backbone model.

    \item IMDB: The IMDB dataset~\cite{maas-EtAl:2011:ACL-HLT2011} consists of positive and negative movie comments, comprising 25000 training samples and 25000 test samples. We split 5000 samples from the training set for uncertainty estimation and conduct our scheme on the remaining 20000 samples. For this dataset, we use a GRU-based structure~\cite{cho2014properties}, and further details can be found in Appendix~\ref{sec:archi}.

    \item SVHN: The SVHN dataset contains images of house numbers. We split 1000 samples from the train set for each class to estimate uncertainty, while the remaining train samples are used for the sampling schemes. ResNet20 serves as our backbone model in this case.

    \item Place365 (subset): We select ten classes from the Place365 dataset~\cite{zhou2017places}, each consisting of 5000 training samples and 100 testing samples. The chosen classes are department\_store, lighthouse, discotheque, museum-indoor, rock\_arch, tower, hunting\_lodge-outdoor, hayfield, arena-rodeo, and movie\_theater-indoor. We split the training set, assigning 1000 instances for each class to the probe set, and the remaining samples form the sampling set. ResNet18 is employed as the backbone model for this dataset.

\end{itemize}

We summarize the datasets in Table~\ref{tab:datasets_info}:
\begin{table}[H]
    \centering
    \resizebox{\linewidth}{!}{
    \begin{tabular}{c|c|c|c|c|c}
       \toprule
        Dataset & Class Number &Probe Set & Sampling Set & Target Size of Sub-sampling & Test Set  \\
        \midrule 
        CIFARBinary & 2 & 2,000 & 8,000& 600/2,000/6,000&2,000  \\
        CIFAR10 & 10&10,000&40,000&3,000/10,000/20,000&10,000\\
        CIFAR10-N & 10&10,000&40,000&3,000/10,000/20,000&10,000\\
        CIFAR100 & 100&20,000&30,000&3,000/10,000/20,000&10,000\\
        SVHN & 10&10,000&63,257&3,000/10,000/20,000&26,032\\
        Places365 & 10&10,000&40,000&3,000/10,000/20,000&1,000\\
        IMDB & 2 &5,000 &20,000&2,000/4,000/10,000&25,000\\
        \bottomrule
    \end{tabular}
    }
    \caption{The table provides descriptions of the datasets used in our study. The "Probe Set/ Sampling Set" column indicates the number of samples included in the Probe Set and Sampling Set for each dataset. The ``Target Size of Sub-sampling" column represents the number of samples selected from the Sampling Set for sub-sampling. For example, if the value is shown as "600", it indicates that we choose 600 instances from the Sampling Set for sub-sampling.}
    \label{tab:datasets_info}
\end{table}

\paragraph{Comparison with Baselines.} In this part, we compare our method COPS  with existing sample selection methods. We adopt competitive baselines for coreset selection and active learning, respectively.
The baselines for coreset selection (WithY) are as follows:
\begin{itemize}
    \item \textbf{Uniform sampling}.
    \item \textbf{IWeS(WithY)} \cite{citovsky2023leveraging} first fit two functions $f_{\theta^{(1)}}$ and $f_{\theta^{(2)}}$ on the probe set and then use the disagreement of the two functions with respect to entropy to calculate the sampling ratio for a sample $(\bmx, \bmy)$: 
    \begin{equation}
        \pi(\bmx, \bmy)= \sum_{k=0}^K\delta_k(\bmy)\big| p_k(f_{\theta^{(1)}}; \bmx)\log_2(p_k(f_{\theta^{(1)}}; \bmx)) - p_k(f_{\theta^{(2)}}; \bmx)\log_2(p_k(f_{\theta^{(2)}}; \bmx))\big|
    \end{equation}
    \item \textbf{BADGE(WithY)}\cite{ash2019deep}  calculates the gradient of the last layer and use kmeans++ to cluster the gradient. They then select the samples closest to cluster centers.
    \item \textbf{Margin} \cite{scheffer2001active}. The margin is computed by subtracting the predicted probability of the true class from 1. 
    \begin{align}
            \pi(\bmx, \bmy) = 1 - \sum_{k=1}^K \delta_k(\bmy) p_k(f_\theta; \bmx)
    \end{align}
\end{itemize}

\begin{figure}[t]
\centering
\begin{minipage}[t]{0.32\linewidth}
\centering
\includegraphics[width=\linewidth]{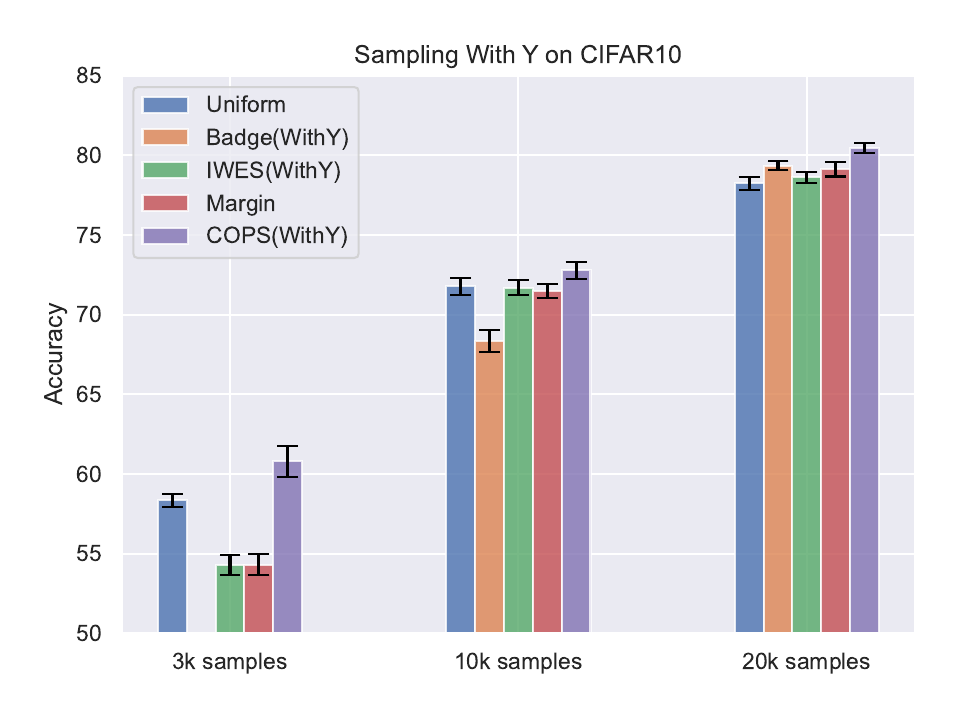}
\caption*{(a) CIFAR10.}
\end{minipage}
\begin{minipage}[t]{0.32\linewidth}
\centering
\includegraphics[width=\linewidth]{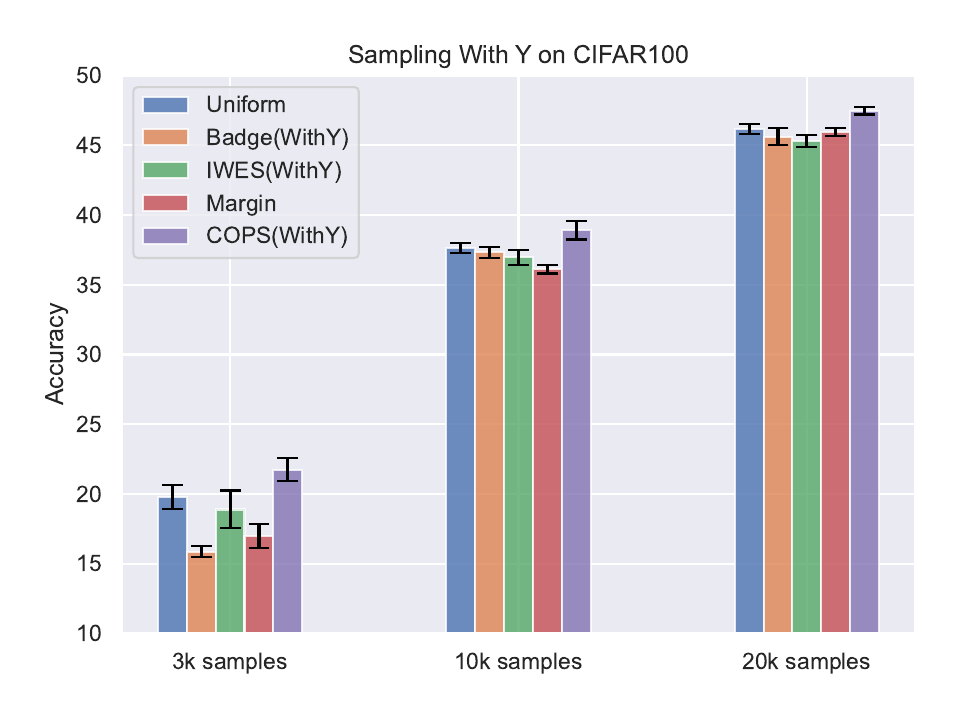}
\caption*{(c)CIFAR100}
\end{minipage}
\begin{minipage}[t]{0.32\linewidth}
\centering
\includegraphics[width=\linewidth]{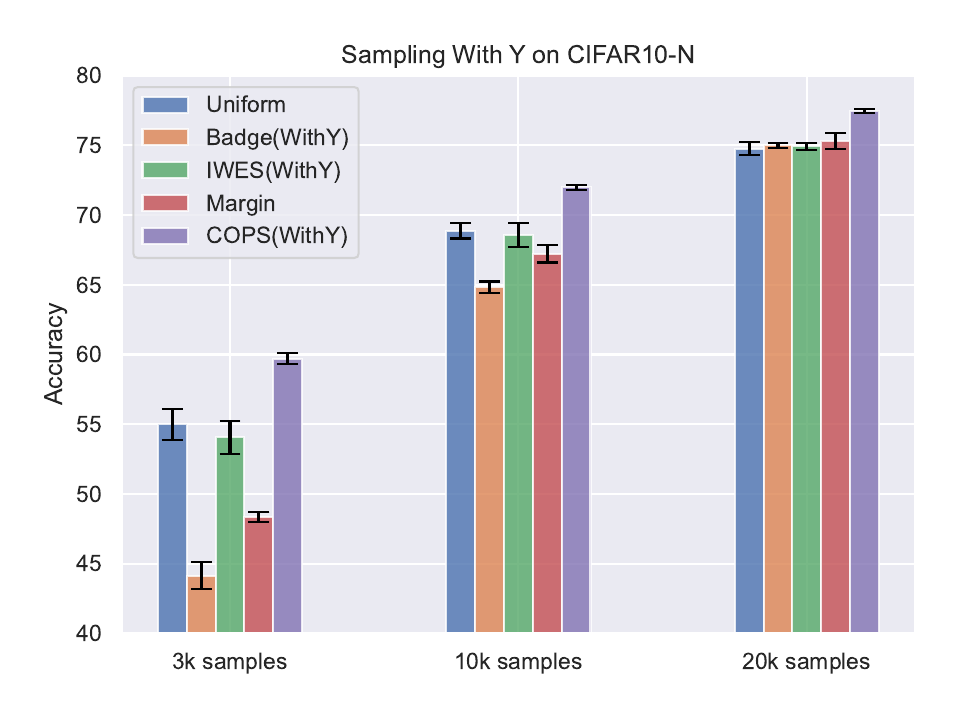}
\caption*{(b)CIFAR10-N}
\end{minipage}
\caption{Results for coreset selection (WithY). \label{fig:three_datsets_wy} For Badge with 3000 samples, the performance is lower than 50, so the bar is clipped in our figures.}
\end{figure}

\begin{figure}[t]
\centering
\begin{minipage}[t]{0.32\linewidth}
\centering
\includegraphics[width=\linewidth]{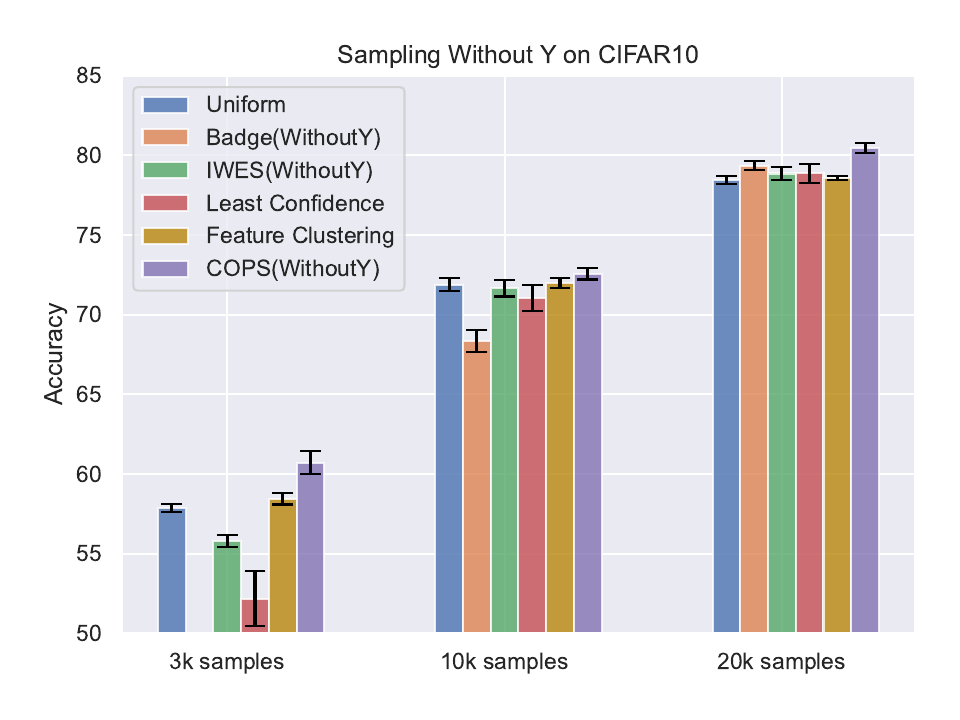}
\caption*{(a) CIFAR10.}
\end{minipage}
\begin{minipage}[t]{0.32\linewidth}
\centering
\includegraphics[width=\linewidth]{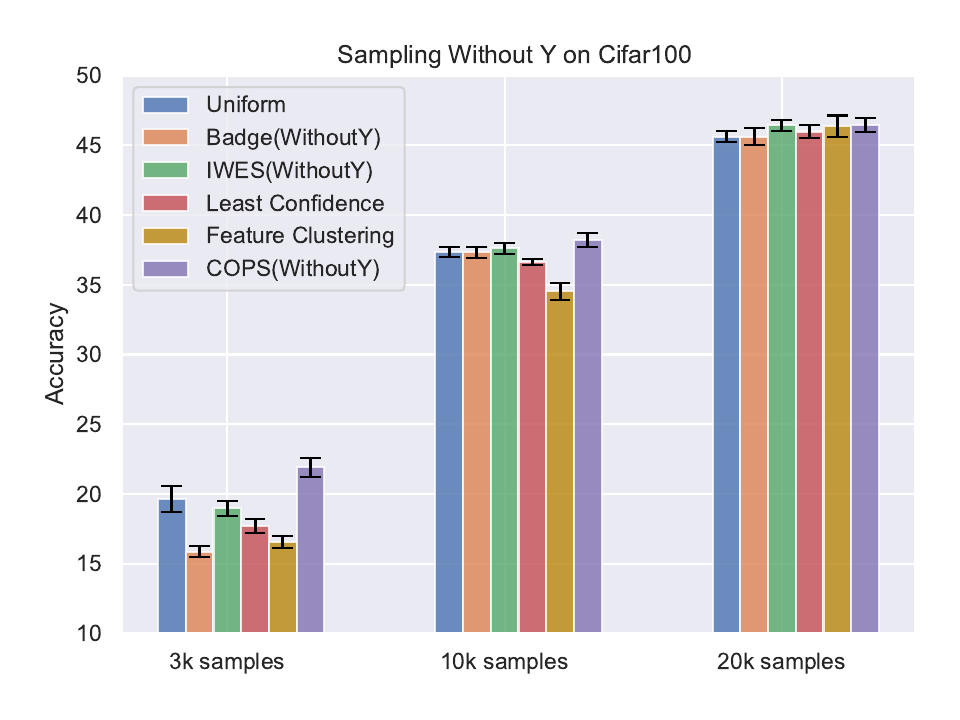}
\caption*{(b)CIFAR100}
\end{minipage}
\begin{minipage}[t]{0.32\linewidth}
\centering
\includegraphics[width=\linewidth]{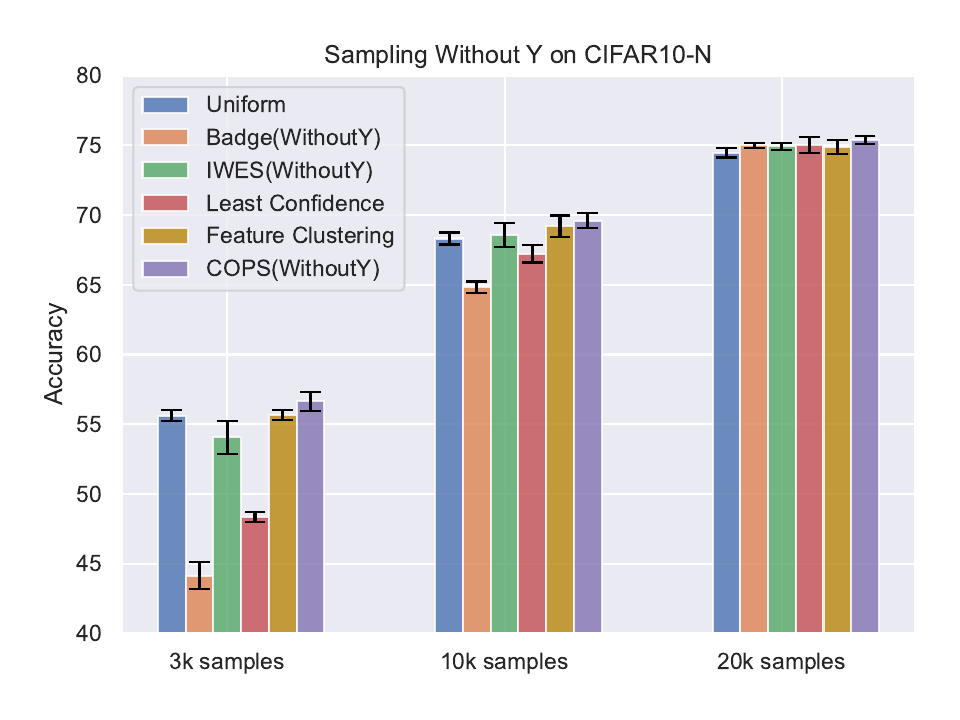}
\caption*{(c) CIFAR10-N}
\end{minipage}
\caption{Results for active learning (WithoutY). \label{fig:three_datsets_woy}For Badge with 3000 samples, the performance is lower than 50, so the bar is clipped in our figures.}
\end{figure}

\begin{figure}[t]
\centering
\begin{minipage}[t]{0.48\textwidth}
\centering
\includegraphics[width=6cm]{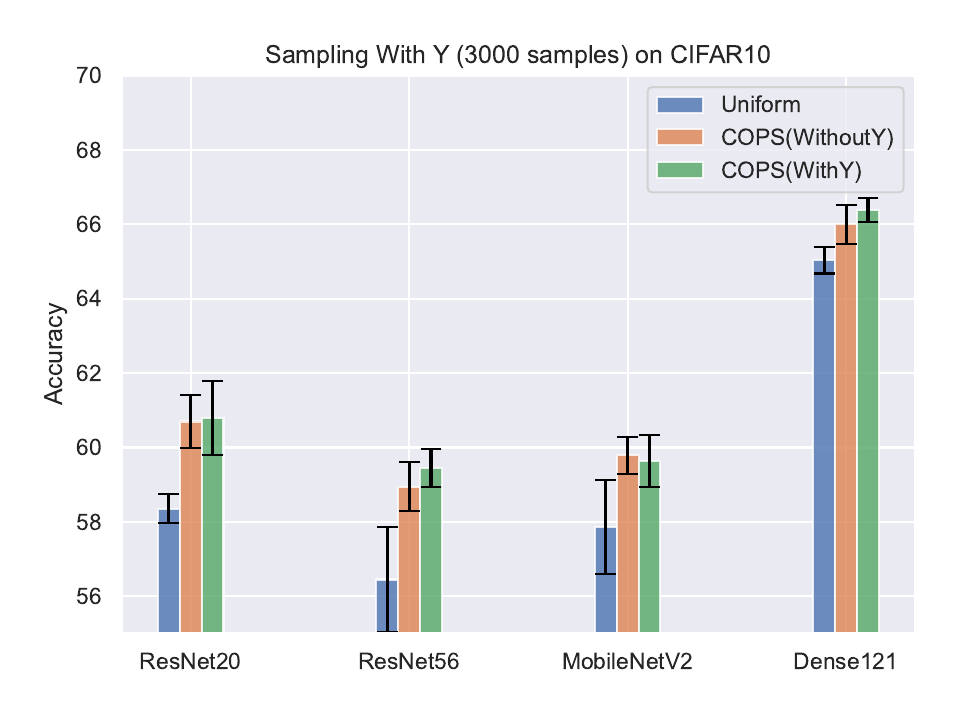}
\caption*{(a) CIFAR10-3000}
\end{minipage}
\begin{minipage}[t]{0.48\textwidth}
\centering
\includegraphics[width=6cm]{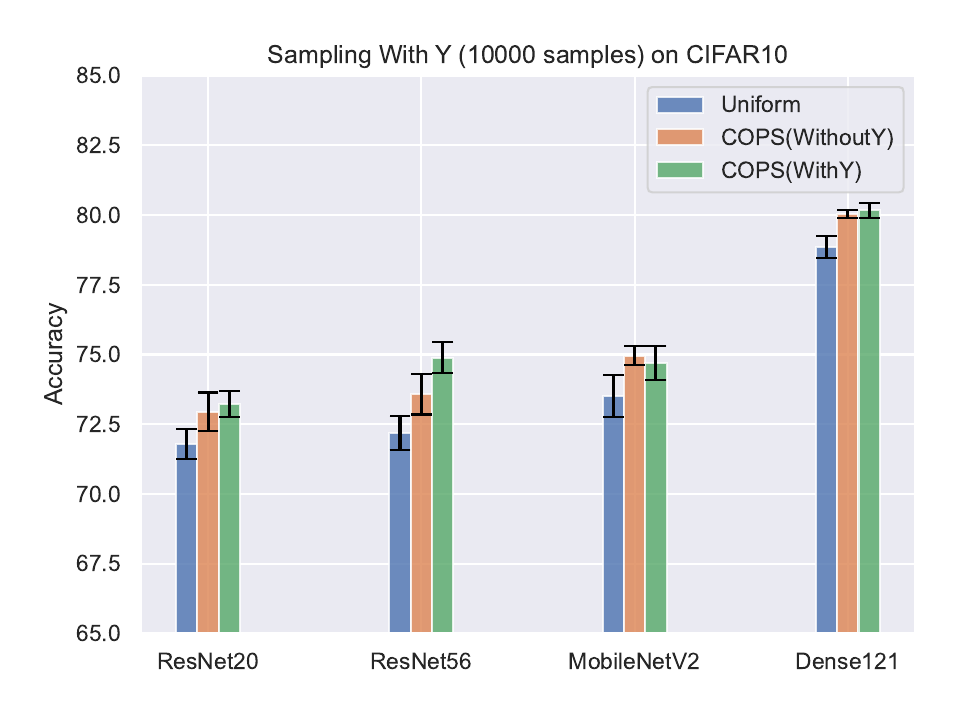}
\caption*{(b)CIFAR10-10000 }
\end{minipage}
\caption{Results for Cifar10 with different architectures.\label{cifar10_archi}}
\end{figure}

The baselines for active learning selection (WithoutY) are as follows:
\begin{itemize}
    \item \textbf{Uniform sampling}.
    \item \textbf{IWeS (WithoutY) }\cite{citovsky2023leveraging}  uses a normalized version of entropy is as follows, 
    \begin{equation}
         \pi(\bmx) = -\sum_{k = 0}^K p_k(f_\theta; \bmx)\log_2(p_k(f_\theta; \bmx)) /\log_2(K)
    \end{equation}
    \item \textbf{BADGE (WithoutY)}\cite{ash2019deep} 
   first obtain the pseudo label $\hat \bmy = \argmax_k p_k(f_\theta; \bmx)$ and the calculates the gradient of the last layer with the pseudo label $\hat \bmy$. Then they use K-means++ to cluster samples and select the samples closest to cluster centers. 

    \item  \textbf{Least confidence} \cite{scheffer2001active} is determined by calculating the difference between 1 and the highest probability assigned to a class:
        \begin{align}
            \pi(\bmx) = 1 - \max_{k} p_k(f_\theta; \bmx)
        \end{align}

    \item \textbf{Feature Clustering} \cite{sener2017active}\footnote{\cite{sener2017active} named their method as coreset, whereas, we refer to their method as feature clustering in order to avoid confusion with the coreset task.} first latent feature of the model and then uses K-means cluster the samples by its feature. They further select the samples closest to cluster.
\end{itemize}
We first compare COPS with the above baselines on both coreset selection (WithY) and active learning (WithoutY) settings on three datasets, CIFAR10, CIFAR100 and CIFAR10-N. The results in Figure~\ref{fig:three_datsets_wy} and  \ref{fig:three_datsets_woy} show that COPS can consistently outperform the baselines in these settings. The improvement is even more significant on CIFAR10-N, which contains nature label noise.

\paragraph{Multiple Architectures.} To verify the effectiveness of COPS, we conduct experiments on CIFAR10 with different neural network structures. Specifically, we choose several widely-used structures, including ResNet56 \cite{he2016deep}, MobileNetV2 \cite{sandler2018mobilenetv2} and DenseNet121 \cite{huang2017densely}. The results are shown in Figure.~\ref{cifar10_archi}. Our method COPS can stably improve over random sampling for both WithY and WithoutY on different DNN architectures.

\paragraph{Additional Datasets.} Furthermore, we evaluate the effectiveness of COPS on three additional datasets: SVHN, Places365 (subset), and IMDB (an NLP dataset). The results in Fig.~\ref{datasets_new} consistently demonstrate that our method consistently outperforms random sampling on these datasets.



 

\paragraph{Different methods for uncertainty estimation.}

In Algorithm~\ref{algo:uncertainty_estimation_DNN}, we obtain $M$ models $\{f_{\theta^{(m)}}\}_{m=1}^M$ on the probe dataset $\cS'$ by training DNNs independently with different initializations and random seeds. This method is referred to as the \textbf{different initialization} method. In this section, we compare this method with two alternative approaches to obtain $\{f_{\theta^{(m)}}\}_{m=1}^M$ given $\cS'$:

\begin{itemize}
\item [(a)] \textbf{Bootstrap}: Each $f_{\theta^{(m)}}$ is obtained by training a DNN on a randomly drawn subset from $\cS'$.
\item [(b)] \textbf{Dropout}~\cite{gal2016dropout}: A single DNN is trained on $\cS'$ with dropout. Then, $\{f_{\theta^{(m)}}\}_{m=1}^M$ are obtained by performing Monte Carlo Dropout during inference for $M$ iterations.
\end{itemize}

The comparison of these three methods on CIFAR10 is depicted in Figure~\ref{different_un}. It is evident that the different initialization method achieves the best performance, while the bootstrap method performs the worst among the three. The dropout method shows similar performance to the different initialization method in the coreset selection task (WithY).

\begin{figure}[t]
\centering
\begin{minipage}[t]{0.32\textwidth}
\centering
\includegraphics[width=\linewidth]{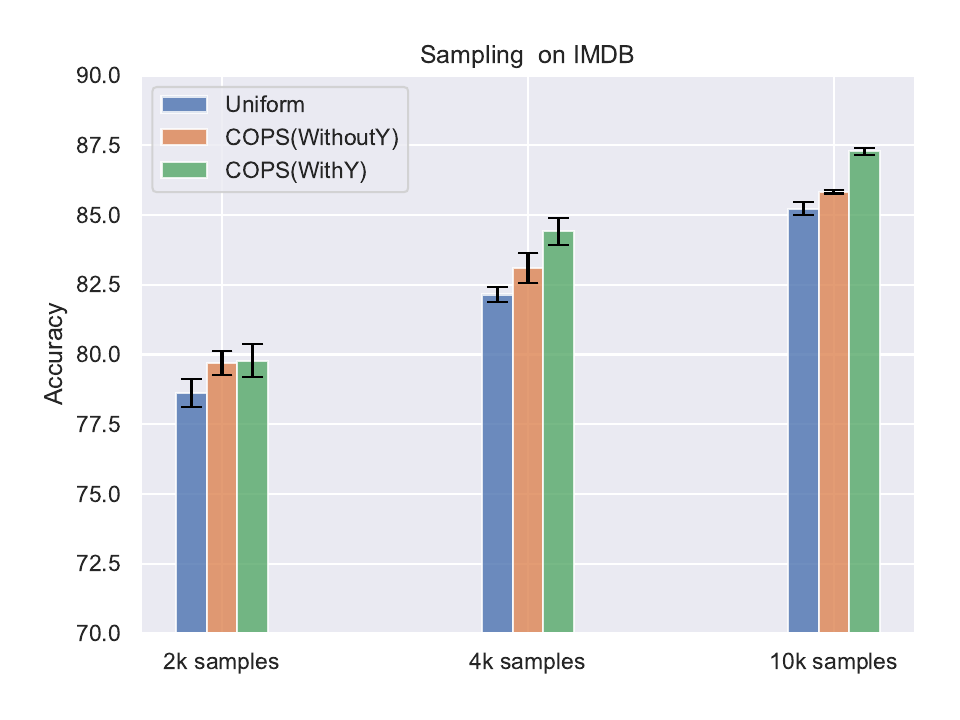}
\caption*{(a)IMDB.}
\end{minipage}
\begin{minipage}[t]{0.32\textwidth}
\centering
\includegraphics[width=\linewidth]{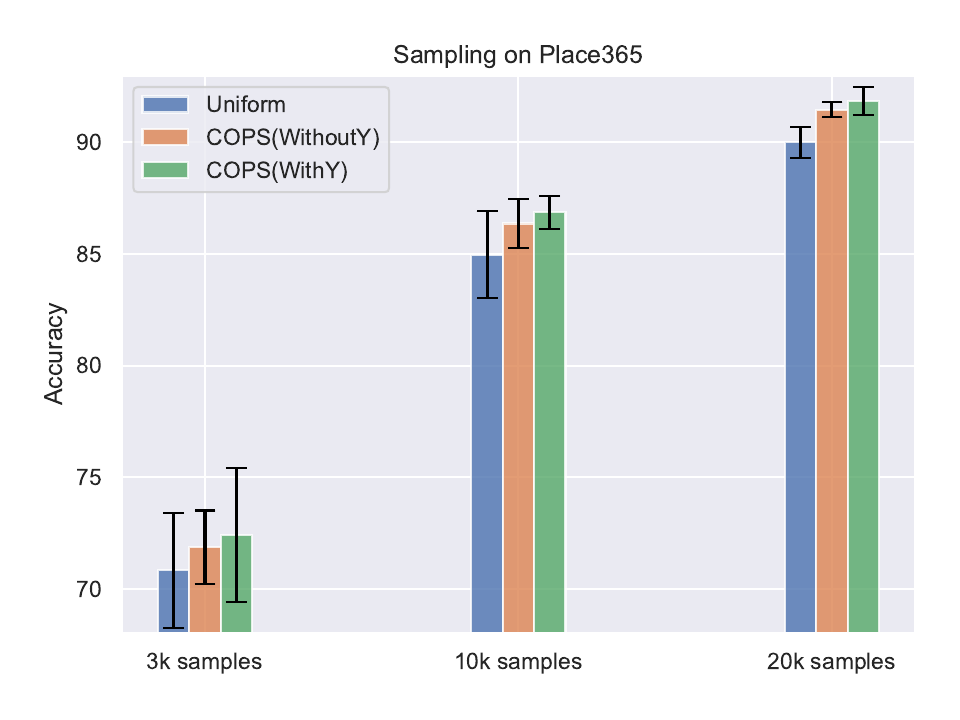}
\caption*{(b) Places365.}
\end{minipage}
\begin{minipage}[t]{0.32\textwidth}
\centering
\includegraphics[width=\linewidth]{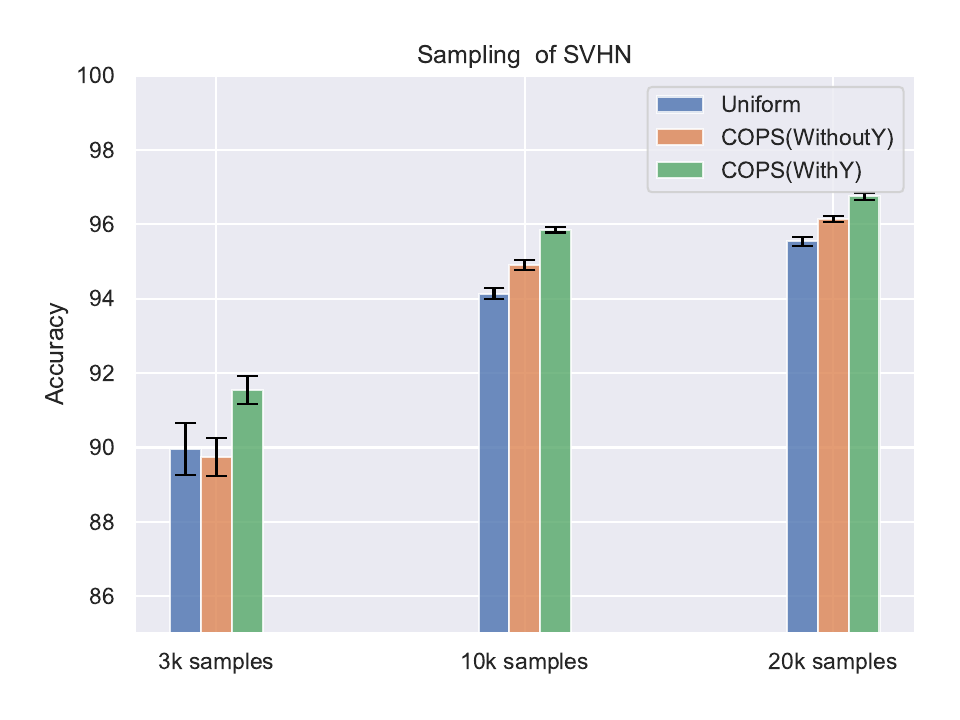}
\caption*{(c) SVHN.}
\end{minipage}
\caption{\label{datasets_new}Results of COPS on IMDB, PLACE365 and SVHN.}
\end{figure}

\begin{figure}[t]
\centering
\begin{minipage}[t]{0.48\textwidth}
\centering
\includegraphics[width=6cm]{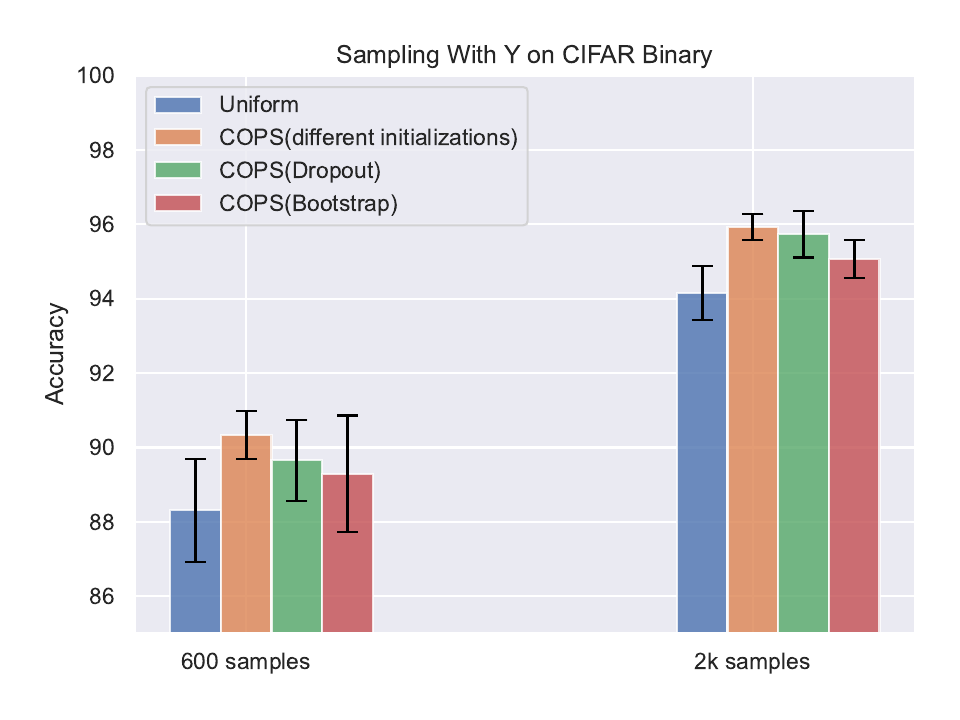}
\caption*{(a) WithY.}
\end{minipage}
\begin{minipage}[t]{0.48\textwidth}
\centering
\includegraphics[width=6cm]{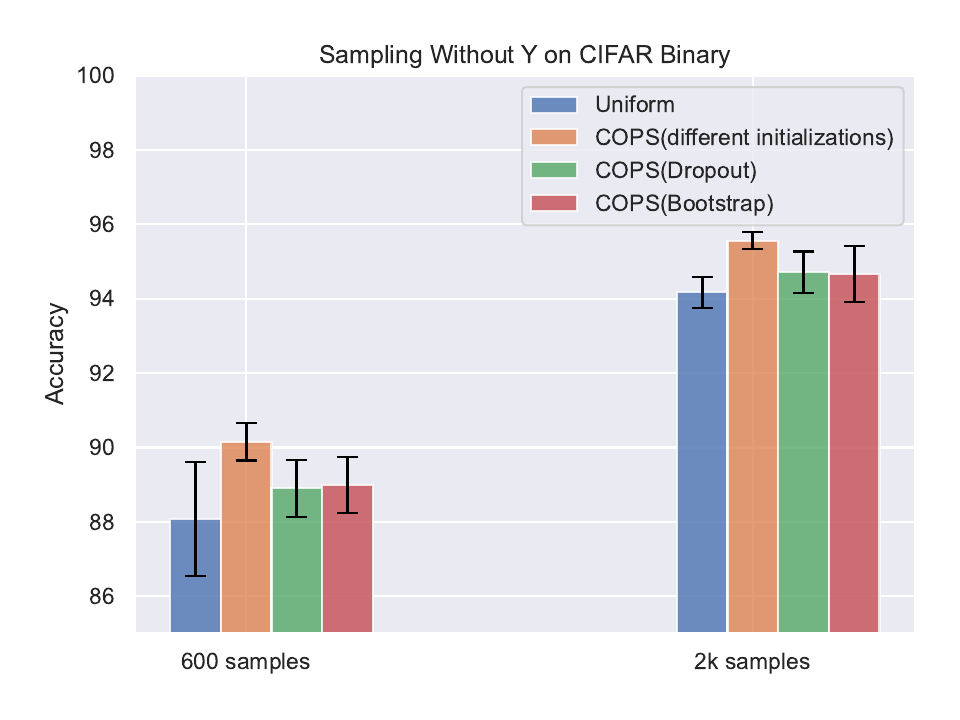}
\caption*{(b)WithoutY.}
\end{minipage}
\caption{Results for different kinds of uncertainty on CIFAR10-8,000 .\label{different_un}}
\end{figure}

\section{Conclusion}
This study presents the COPS method, which offers a theoretically optimal solution for coreset selection and active learning in linear softmax regression. By leveraging the output of the models, the sampling ratio of COPS can be effectively estimated even in deep learning contexts. To address the challenge of model sensitivity to misspecification, we introduce a downweighting approach for low-density samples. By incorporating this strategy, we modify  the sampling ratio of COPS through thresholding the sampling ratio. Empirical experiments conducted on benchmark datasets, utilizing deep neural networks, further demonstrate the effectiveness of COPS in comparison to baseline methods. The results highlight the superiority of COPS in achieving optimal subsampling and performance improvement.

{\small

\bibliographystyle{plain}
}
\bibliography{reference}

\begin{thebibliography}{10}

\bibitem{ash2019deep}
Jordan~T Ash, Chicheng Zhang, Akshay Krishnamurthy, John Langford, and Alekh
  Agarwal.
\newblock Deep batch active learning by diverse, uncertain gradient lower
  bounds.
\newblock {\em arXiv preprint arXiv:1906.03671}, 2019.

\bibitem{bogunovic2021stochastic}
Ilija Bogunovic, Arpan Losalka, Andreas Krause, and Jonathan Scarlett.
\newblock Stochastic linear bandits robust to adversarial attacks.
\newblock In {\em International Conference on Artificial Intelligence and
  Statistics}, pages 991--999. PMLR, 2021.

\bibitem{borsos2020coresets}
Zal{\'a}n Borsos, Mojmir Mutny, and Andreas Krause.
\newblock Coresets via bilevel optimization for continual learning and
  streaming.
\newblock {\em Advances in neural information processing systems},
  33:14879--14890, 2020.

\bibitem{bubeck2012regret}
S{\'e}bastien Bubeck, Nicolo Cesa-Bianchi, et~al.
\newblock Regret analysis of stochastic and nonstochastic multi-armed bandit
  problems.
\newblock {\em Foundations and Trends{\textregistered} in Machine Learning},
  5(1):1--122, 2012.

\bibitem{cho2014properties}
Kyunghyun Cho, Bart Van~Merri{\"e}nboer, Dzmitry Bahdanau, and Yoshua Bengio.
\newblock On the properties of neural machine translation: Encoder-decoder
  approaches.
\newblock {\em arXiv preprint arXiv:1409.1259}, 2014.

\bibitem{citovsky2023leveraging}
Gui Citovsky, Giulia DeSalvo, Sanjiv Kumar, Srikumar Ramalingam, Afshin
  Rostamizadeh, and Yunjuan Wang.
\newblock Leveraging importance weights in subset selection.
\newblock {\em arXiv preprint arXiv:2301.12052}, 2023.

\bibitem{clarkson2016fast}
Kenneth~L Clarkson, Petros Drineas, Malik Magdon-Ismail, Michael~W Mahoney,
  Xiangrui Meng, and David~P Woodruff.
\newblock The fast cauchy transform and faster robust linear regression.
\newblock {\em SIAM Journal on Computing}, 45(3):763--810, 2016.

\bibitem{culotta2005reducing}
Aron Culotta and Andrew McCallum.
\newblock Reducing labeling effort for structured prediction tasks.
\newblock In {\em AAAI}, volume~5, pages 746--751, 2005.

\bibitem{dean2018regret}
Sarah Dean, Horia Mania, Nikolai Matni, Benjamin Recht, and Stephen Tu.
\newblock Regret bounds for robust adaptive control of the linear quadratic
  regulator.
\newblock {\em Advances in Neural Information Processing Systems}, 31, 2018.

\bibitem{drineas2012fast}
Petros Drineas, Malik Magdon-Ismail, Michael~W Mahoney, and David~P Woodruff.
\newblock Fast approximation of matrix coherence and statistical leverage.
\newblock {\em The Journal of Machine Learning Research}, 13(1):3475--3506,
  2012.

\bibitem{drineas2011faster}
Petros Drineas, Michael~W Mahoney, Shan Muthukrishnan, and Tam{\'a}s
  Sarl{\'o}s.
\newblock Faster least squares approximation.
\newblock {\em Numerische mathematik}, 117(2):219--249, 2011.

\bibitem{feldman2011unified}
Dan Feldman and Michael Langberg.
\newblock A unified framework for approximating and clustering data.
\newblock In {\em Proceedings of the forty-third annual ACM symposium on Theory
  of computing}, pages 569--578, 2011.

\bibitem{gal2016dropout}
Yarin Gal and Zoubin Ghahramani.
\newblock Dropout as a bayesian approximation: Representing model uncertainty
  in deep learning.
\newblock In {\em international conference on machine learning}, pages
  1050--1059. PMLR, 2016.

\bibitem{gonccalves2005bootstrap}
S{\'\i}lvia Gon{\c{c}}alves and Halbert White.
\newblock Bootstrap standard error estimates for linear regression.
\newblock {\em Journal of the American Statistical Association},
  100(471):970--979, 2005.

\bibitem{har2005smaller}
Sariel Har-Peled and Akash Kushal.
\newblock Smaller coresets for k-median and k-means clustering.
\newblock In {\em Proceedings of the twenty-first annual symposium on
  Computational geometry}, pages 126--134, 2005.

\bibitem{he2022nearly}
Jiafan He, Dongruo Zhou, Tong Zhang, and Quanquan Gu.
\newblock Nearly optimal algorithms for linear contextual bandits with
  adversarial corruptions.
\newblock {\em arXiv preprint arXiv:2205.06811}, 2022.

\bibitem{he2016deep}
Kaiming He, Xiangyu Zhang, Shaoqing Ren, and Jian Sun.
\newblock Deep residual learning for image recognition.
\newblock In {\em Proceedings of the IEEE conference on computer vision and
  pattern recognition}, pages 770--778, 2016.

\bibitem{huang2017densely}
Gao Huang, Zhuang Liu, Laurens Van Der~Maaten, and Kilian~Q Weinberger.
\newblock Densely connected convolutional networks.
\newblock In {\em Proceedings of the IEEE conference on computer vision and
  pattern recognition}, pages 4700--4708, 2017.

\bibitem{huggins2016coresets}
Jonathan Huggins, Trevor Campbell, and Tamara Broderick.
\newblock Coresets for scalable bayesian logistic regression.
\newblock {\em Advances in neural information processing systems}, 29, 2016.

\bibitem{imberg2020optimal}
Henrik Imberg, Johan Jonasson, and Marina Axelson-Fisk.
\newblock Optimal sampling in unbiased active learning.
\newblock In {\em International Conference on Artificial Intelligence and
  Statistics}, pages 559--569. PMLR, 2020.

\bibitem{ionides2008truncated}
Edward~L Ionides.
\newblock Truncated importance sampling.
\newblock {\em Journal of Computational and Graphical Statistics},
  17(2):295--311, 2008.

\bibitem{Jacot2018NeuralTK}
Arthur Jacot, Franck Gabriel, and Cl{\'e}ment Hongler.
\newblock Neural tangent kernel: convergence and generalization in neural
  networks (invited paper).
\newblock {\em Proceedings of the 53rd Annual ACM SIGACT Symposium on Theory of
  Computing}, 2018.

\bibitem{jin2020provably}
Chi Jin, Zhuoran Yang, Zhaoran Wang, and Michael~I Jordan.
\newblock Provably efficient reinforcement learning with linear function
  approximation.
\newblock In {\em Conference on Learning Theory}, pages 2137--2143. PMLR, 2020.

\bibitem{joshi2009multi}
Ajay~J Joshi, Fatih Porikli, and Nikolaos Papanikolopoulos.
\newblock Multi-class active learning for image classification.
\newblock In {\em 2009 ieee conference on computer vision and pattern
  recognition}, pages 2372--2379. IEEE, 2009.

\bibitem{killamsetty2021grad}
Krishnateja Killamsetty, Sivasubramanian Durga, Ganesh Ramakrishnan, Abir De,
  and Rishabh Iyer.
\newblock Grad-match: Gradient matching based data subset selection for
  efficient deep model training.
\newblock In {\em International Conference on Machine Learning}, pages
  5464--5474. PMLR, 2021.

\bibitem{kim2021task}
Kwanyoung Kim, Dongwon Park, Kwang~In Kim, and Se~Young Chun.
\newblock Task-aware variational adversarial active learning.
\newblock In {\em Proceedings of the IEEE/CVF Conference on Computer Vision and
  Pattern Recognition}, pages 8166--8175, 2021.

\bibitem{koh2017understanding}
Pang~Wei Koh and Percy Liang.
\newblock Understanding black-box predictions via influence functions.
\newblock In {\em International conference on machine learning}, pages
  1885--1894. PMLR, 2017.

\bibitem{krizhevsky2009learning}
Alex Krizhevsky, Geoffrey Hinton, et~al.
\newblock Learning multiple layers of features from tiny images.
\newblock 2009.

\bibitem{lattimore2020bandit}
Tor Lattimore and Csaba Szepesv{\'a}ri.
\newblock {\em Bandit algorithms}.
\newblock Cambridge University Press, 2020.

\bibitem{loshchilov2018decoupled}
Ilya Loshchilov and Frank Hutter.
\newblock Decoupled weight decay regularization.
\newblock In {\em International Conference on Learning Representations}, 2019.

\bibitem{lucic2017training}
Mario Lucic, Matthew Faulkner, Andreas Krause, and Dan Feldman.
\newblock Training gaussian mixture models at scale via coresets.
\newblock {\em The Journal of Machine Learning Research}, 18(1):5885--5909,
  2017.

\bibitem{ma2014statistical}
Ping Ma, Michael Mahoney, and Bin Yu.
\newblock A statistical perspective on algorithmic leveraging.
\newblock In {\em International conference on machine learning}, pages 91--99.
  PMLR, 2014.

\bibitem{maas-EtAl:2011:ACL-HLT2011}
Andrew~L. Maas, Raymond~E. Daly, Peter~T. Pham, Dan Huang, Andrew~Y. Ng, and
  Christopher Potts.
\newblock Learning word vectors for sentiment analysis.
\newblock In {\em Proceedings of the 49th Annual Meeting of the Association for
  Computational Linguistics: Human Language Technologies}, pages 142--150,
  Portland, Oregon, USA, June 2011. Association for Computational Linguistics.

\bibitem{meng2014lsrn}
Xiangrui Meng, Michael~A Saunders, and Michael~W Mahoney.
\newblock Lsrn: A parallel iterative solver for strongly over-or
  underdetermined systems.
\newblock {\em SIAM Journal on Scientific Computing}, 36(2):C95--C118, 2014.

\bibitem{mirzasoleiman2020coresets}
Baharan Mirzasoleiman, Jeff Bilmes, and Jure Leskovec.
\newblock Coresets for data-efficient training of machine learning models.
\newblock In {\em International Conference on Machine Learning}, pages
  6950--6960. PMLR, 2020.

\bibitem{ren2021survey}
Pengzhen Ren, Yun Xiao, Xiaojun Chang, Po-Yao Huang, Zhihui Li, Brij~B Gupta,
  Xiaojiang Chen, and Xin Wang.
\newblock A survey of deep active learning.
\newblock {\em ACM computing surveys (CSUR)}, 54(9):1--40, 2021.

\bibitem{rice2006mathematical}
John~A Rice.
\newblock {\em Mathematical statistics and data analysis}.
\newblock Cengage Learning, 2006.

\bibitem{roth2006margin}
Dan Roth and Kevin Small.
\newblock Margin-based active learning for structured output spaces.
\newblock In {\em Machine Learning: ECML 2006: 17th European Conference on
  Machine Learning Berlin, Germany, September 18-22, 2006 Proceedings 17},
  pages 413--424. Springer, 2006.

\bibitem{sandler2018mobilenetv2}
Mark Sandler, Andrew Howard, Menglong Zhu, Andrey Zhmoginov, and Liang-Chieh
  Chen.
\newblock Mobilenetv2: Inverted residuals and linear bottlenecks.
\newblock In {\em Proceedings of the IEEE conference on computer vision and
  pattern recognition}, pages 4510--4520, 2018.

\bibitem{scheffer2001active}
Tobias Scheffer, Christian Decomain, and Stefan Wrobel.
\newblock Active hidden markov models for information extraction.
\newblock In {\em International symposium on intelligent data analysis}, pages
  309--318. Springer, 2001.

\bibitem{sener2017active}
Ozan Sener and Silvio Savarese.
\newblock Active learning for convolutional neural networks: A core-set
  approach.
\newblock {\em arXiv preprint arXiv:1708.00489}, 2017.

\bibitem{sinha2019variational}
Samarth Sinha, Sayna Ebrahimi, and Trevor Darrell.
\newblock Variational adversarial active learning.
\newblock In {\em Proceedings of the IEEE/CVF International Conference on
  Computer Vision}, pages 5972--5981, 2019.

\bibitem{swaminathan2015counterfactual}
Adith Swaminathan and Thorsten Joachims.
\newblock Counterfactual risk minimization: Learning from logged bandit
  feedback.
\newblock In {\em International Conference on Machine Learning}, pages
  814--823. PMLR, 2015.

\bibitem{ting2018optimal}
Daniel Ting and Eric Brochu.
\newblock Optimal subsampling with influence functions.
\newblock {\em Advances in neural information processing systems}, 31, 2018.

\bibitem{tropp2012user}
Joel~A Tropp.
\newblock User-friendly tail bounds for sums of random matrices.
\newblock {\em Foundations of computational mathematics}, 12:389--434, 2012.

\bibitem{tsang2005core}
Ivor~W Tsang, James~T Kwok, Pak-Ming Cheung, and Nello Cristianini.
\newblock Core vector machines: Fast svm training on very large data sets.
\newblock {\em Journal of Machine Learning Research}, 6(4), 2005.

\bibitem{wang2014new}
Dan Wang and Yi~Shang.
\newblock A new active labeling method for deep learning.
\newblock In {\em 2014 International joint conference on neural networks
  (IJCNN)}, pages 112--119. IEEE, 2014.

\bibitem{wang2018optimal}
HaiYing Wang, Rong Zhu, and Ping Ma.
\newblock Optimal subsampling for large sample logistic regression.
\newblock {\em Journal of the American Statistical Association},
  113(522):829--844, 2018.

\bibitem{wei2021learning}
Jiaheng Wei, Zhaowei Zhu, Hao Cheng, Tongliang Liu, Gang Niu, and Yang Liu.
\newblock Learning with noisy labels revisited: A study using real-world human
  annotations.
\newblock {\em arXiv preprint arXiv:2110.12088}, 2021.

\bibitem{wei2015submodularity}
Kai Wei, Rishabh Iyer, and Jeff Bilmes.
\newblock Submodularity in data subset selection and active learning.
\newblock In {\em International conference on machine learning}, pages
  1954--1963. PMLR, 2015.

\bibitem{yao2019optimal}
Yaqiong Yao and HaiYing Wang.
\newblock Optimal subsampling for softmax regression.
\newblock {\em Statistical Papers}, 60:585--599, 2019.

\bibitem{ye2023corruption}
Chenlu Ye, Wei Xiong, Quanquan Gu, and Tong Zhang.
\newblock Corruption-robust algorithms with uncertainty weighting for nonlinear
  contextual bandits and markov decision processes.
\newblock In {\em International Conference on Machine Learning}, pages
  39834--39863. PMLR, 2023.

\bibitem{yoo2019learning}
Donggeun Yoo and In~So Kweon.
\newblock Learning loss for active learning.
\newblock In {\em Proceedings of the IEEE/CVF conference on computer vision and
  pattern recognition}, pages 93--102, 2019.

\bibitem{zhou2017places}
Bolei Zhou, Agata Lapedriza, Aditya Khosla, Aude Oliva, and Antonio Torralba.
\newblock Places: A 10 million image database for scene recognition.
\newblock {\em IEEE Transactions on Pattern Analysis and Machine Intelligence},
  2017.

\bibitem{zhou2022model}
Xiao Zhou, Yong Lin, Renjie Pi, Weizhong Zhang, Renzhe Xu, Peng Cui, and Tong
  Zhang.
\newblock Model agnostic sample reweighting for out-of-distribution learning.
\newblock In {\em International Conference on Machine Learning}, pages
  27203--27221. PMLR, 2022.

\bibitem{zhou2022probabilistic}
Xiao Zhou, Renjie Pi, Weizhong Zhang, Yong Lin, Zonghao Chen, and Tong Zhang.
\newblock Probabilistic bilevel coreset selection.
\newblock In {\em International Conference on Machine Learning}, pages
  27287--27302. PMLR, 2022.

\end{thebibliography}

\clearpage

\appendix

\section{Proofs of main results}

\subsection{Proof of Theorem \ref{thm:multi_class_loss}}
\label{sect:proof_of_optimal_sampling}
\begin{proof}

\textbf{Part 1}. We first derive the optimal sampling ratio for  \textbf{coreset selection} problem.
By Lemma \ref{lemma:loss_diff}, we have
\begin{align*}
    \bbE_{\bar \cS|\cS, \pi} [\cL(\bar \beta; \cD) - \cL(\beta^*; \cD)] = \bbE_{\bar \cS|\cS, \pi} \left[ (\bar \beta - \hbeta_\MLE)^\top \bM(\beta^*; \cD) (\bar \beta - \hbeta_\MLE) + o(\|\bar \beta - \beta^*\|_2^2) \right].
\end{align*}
Therefore, we only need to find the sampling scheme which minimizes the following: 
\begin{align*}
    & \bbE_{\bar \cS|\cS, \pi} \left[ (\bar \beta - \hbeta_\MLE)^\top \bM(\beta^*; \cD) (\bar \beta - \hbeta_\MLE) \right] \\
    = & \bbE_{\bar \cS|\cS, \pi} \left[ \TRACE\left(\bM(\beta^*; \cD) (\bar \beta - \hbeta_\MLE)(\bar \beta - \hbeta_\MLE)^\top \right)\right] \\
    = & \TRACE\left(\bM(\beta^*; \cD) \bbE_{\bar \cS|\cS, \pi}\left[(\bar \beta - \hbeta_\MLE)(\bar \beta - \hbeta_\MLE)^\top\right] \right) \\
    = & \TRACE \left(\bM(\beta^*; \cD)\bM_X^{-1}(\hbeta_\MLE; \cS) \bV_{c}(\hbeta_\MLE; \cS) \bM_X^{-1}(\hbeta_\MLE; \cS) \right) \\
    = &  \frac{1}{rn^2}\sum_{(\bmx, \bmy) \in \cS} \frac{1}{\pi(\bmx, \bmy)}\TRACE \left({\psi(\hat \beta_{\MLE}; \bmx, \bmy) \otimes (\bmx \bmx^\top)}  \bM_X^{-1}(\hbeta_\MLE; \cS) \right) + o_p(1), 
\end{align*}
where the third equality is due to Lemma \ref{lemma:asympototic_variance}, and the last equality is due to Lemma \ref{lemma:closeness_between_Mxbeta} and the definition of $\bV_{c}(\hbeta_\MLE; \cS)$. Since $\sum_{(\bmx, \bmy)} \pi(\bmx, \bmy) = 1$, we have
\begin{align*}
     & \frac{1}{rn^2}\sum_{(\bmx, \bmy) \in \cS} \pi(\bmx, \bmy) \sum_{(\bmx, \bmy) \in \cS} \frac{1}{\pi(\bmx, \bmy)}\TRACE \left({\psi(\hat \beta_{\MLE}; \bmx, \bmy) \otimes (\bmx \bmx^\top)}  \bM_X^{-1}(\hbeta_\MLE; \cS) \right) \\
    \geq  & \frac{1}{rn^2} \left(\sum_{(\bmx, \bmy) \in \cS} \sqrt{\TRACE \left({\psi(\hat \beta_{\MLE}; \bmx, \bmy) \otimes (\bmx \bmx^\top)}  \bM_X^{-1}(\hbeta_\MLE; \cS) \right)}\right)^2,
\end{align*}
where the inequality is due to the Cauchy-Schwarz inequality and the equality holds when 
\begin{align*}
    \pi(\bmx, \bmy) \propto \sqrt{\TRACE \left({\psi(\hat \beta_{\MLE}; \bmx, \bmy) \otimes (\bmx \bmx^\top)}  \bM_X^{-1}(\hbeta_\MLE; \cS) \right)}.
\end{align*}

\textbf{Part 2}. We then derive the optimal sampling ratio for  \textbf{active learning} problem. We first note that
\begin{align*}
     \bbE_{\bar \cS|\cS_X, \pi}\left[\cL(\bar \beta; \cD) - \cL(\beta^*; \cD)\right] = \bbE_{ \cS|\cS_X, \pi}\left[\bbE_{\bar \cS|\cS, \pi}\left[\cL(\bar \beta; \cD) - \cL(\beta^*; \cD)\right]\right], 
\end{align*}
By lemma \ref{lemma:loss_diff}, we only need to find $\pi$ which minimizes the following equation
\begin{align*}
    & \bbE_{\cS|\cS_X, \pi}\left[\bbE_{\bar \cS|\cS, \pi} \left[ (\bar \beta - \hbeta_\MLE)^\top \bM(\beta^*; \cD) (\bar \beta - \hbeta_\MLE) \right]\right] \\
    = & \bbE_{\cS|\cS_X, \pi}\left[\frac{1}{rn^2}\sum_{(\bmx, \bmy) \in \cS} \frac{1}{\pi(\bmx)}\TRACE \left({\psi(\hat \beta_{\MLE}; \bmx, \bmy) \otimes (\bmx \bmx^\top)}  \bM_X^{-1}(\hbeta_\MLE; \cS) \right)\right] \\
    = & \frac{1}{rn^2}\sum_{(\bmx, y) \in \cS} \frac{1}{\pi(\bmx)}\TRACE \left({\bbE_{\cS|\cS_X, \pi}[\psi(\hat \beta_{\MLE}; \bmx, \bmy)] \otimes (\bmx \bmx^\top)}  \bM_X^{-1}(\hbeta_\MLE; \cS) \right) \\
    = & \frac{1}{rn^2}\sum_{(\bmx, y) \in \cS} \frac{1}{\pi(\bmx)}\TRACE \left({\phi(\hat \beta_{\MLE}; \bmx) \otimes (\bmx \bmx^\top)}  \bM_X^{-1}(\hbeta_\MLE; \cS) \right),
\end{align*}
where the last equality is due to Lemma \ref{lemma:phi_kesi_equal}. Similar to the derivation for coreset selection, the optimal sampling ratio for active learning is 
\begin{align*}
        \pi(\bmx) \propto \sqrt{\TRACE \left({\phi(\hat \beta_{\MLE}; \bmx) \otimes (\bmx \bmx^\top)}  \bM_X^{-1}(\hbeta_\MLE; \cS) \right)}.
\end{align*}
We thus finish the proof.
\end{proof}

\subsection{Proof of Theorem \ref{thm:uncertainty_estimation_multi}}
\label{app:proof_of_output_variance}
\begin{proof}
To begin with, the sample covariance matrix $\Sigma_M(\bmx)$ is computed as
\begin{align*} 
(M-1)\Sigma_M(\bmx) &= \sum_{m=1}^M (\hat \bbeta^{(m)} - \Tilde\bbeta)^\top \bmx\bmx^\top (\hat \bbeta^{(m)} - \Tilde\bbeta)\notag\\
&= \sum_{m=1}^M \big((\hat \bbeta^{(m)} - \bbeta^*) - (\Tilde\bbeta - \bbeta^*)\big)^\top \bmx\bmx^\top \big((\hat \bbeta^{(m)} - \bbeta^*) - (\Tilde\bbeta - \bbeta^*)\big)\notag\\
&= \sum_{m=1}^M (\hat \bbeta^{(m)} - \bbeta^*)^\top \bmx\bmx^\top (\hat \bbeta^{(m)} - \bbeta^*) - \frac{1}{M} \sum_{m=1}^M\sum_{l=1}^M (\hat \bbeta^{(m)} - \bbeta^*)^\top \bmx\bmx^\top (\hat\bbeta^{(l)} - \bbeta^*)\notag\\
&= \frac{M-1}{M}\sum_{m=1}^M (\hat \bbeta^{(m)} - \bbeta^*)^\top \bmx\bmx^\top (\hat \bbeta^{(m)} - \bbeta^*) - \frac{1}{M} \sum_{m\ne l} (\hat \bbeta^{(m)} - \bbeta^*)^\top \bmx\bmx^\top (\hat\bbeta^{(l)} - \bbeta^*).
\end{align*}
As $M \xrightarrow[]{} \infty$, 
\begin{align}
    \frac{1}{M}\sum_{m=1}^M (\hat \bbeta^{(m)} - \bbeta^*)^\top \bmx\bmx^\top (\hat \bbeta^{(m)} - \bbeta^*) \xrightarrow[]{} \bbE[(\hat \bbeta^{(m)} - \bbeta^*)^\top \bmx\bmx^\top (\hat \bbeta^{(m)} - \bbeta^*)] \\
    \frac{1}{M(M-1)} \sum_{m\ne l} (\hat \bbeta^{(m)} - \bbeta^*)^\top \bmx\bmx^\top (\hat\bbeta^{(l)} - \bbeta^*) \xrightarrow[]{} \bbE[(\hat \bbeta^{(m)} - \bbeta^*)^\top \bmx\bmx^\top (\hat\bbeta^{(l)} - \bbeta^*)]
\end{align}
Since $\hat \bbeta^{(m)} - \bbeta^*$ and $\hat\bbeta^{(l)} - \bbeta^*$ are independent, we have $\bbE[(\hat \bbeta^{(m)} - \bbeta^*)^\top \bmx\bmx^\top (\hat\bbeta^{(l)} - \bbeta^*)] = 0$, so as $M \xrightarrow[]{} \infty$, 
\begin{align}\label{eq:Sigma_m}
\Sigma_M(\bmx)  \xrightarrow[]{} \bbE\big[ (\hat \bbeta^{(m)} - \bbeta^*)^\top \bmx\bmx^\top (\hat \bbeta^{(m)} - \bbeta^*)\big].
\end{align}
Therefore, as $M \xrightarrow[]{} \infty$,
\begin{align*}
& s(\hbeta_\MLE; \bmx, y)^\top \Sigma_M(\bmx) s(\hbeta_\MLE; \bmx, y) \\  
\xrightarrow[]{}& s(\hbeta_{\MLE}; ; \bmx, y)^\top \bbE\big[\Sigma_M(\bmx) \big] s(\hbeta_{\MLE}; \bmx, y)\\
=& s(\hbeta_{\MLE}; \bmx, y)^\top \bbE_{\hat \bbeta^{(m)}}\big[ (\hat \bbeta^{(m)} - \bbeta^*)^\top \bmx\bmx^\top (\hat \bbeta^{(m)} - \bbeta^*) \big] s(\hbeta_{\MLE}; \bmx, y)\\
=& \TRACE\Big\{\psi(\hbeta_{\MLE}; \bmx, y)\otimes (\bmx\bmx^\top) \cdot \bbE_{\hat \bbeta^{(m)}}\big[ (\hat \bbeta^{(m)} - \bbeta^*)(\hat \bbeta^{(m)} - \bbeta^*)^\top \big]\Big\},
\end{align*}
By using the asymptotic normality of $\hat \bbeta_m$ according to Sec.8.5.2 of \cite{rice2006mathematical}:
$$
\sqrt{n}(\hat \bbeta_m-\bbeta^*) \rightarrow \cN(0, M_X^{-1}(\bbeta^*;\cD)),
$$
we obtain that as $M \xrightarrow[]{} \infty$ and $n \xrightarrow[]{} \infty$,
\begin{align}
n \TRACE(\psi(\hbeta;\bmx, y)\Sigma_M(\bmx))  &\xrightarrow[]{} \TRACE\Big\{\psi(\hbeta_{\MLE};\bmx)\otimes (\bmx\bmx^\top) M_X(\beta^*;\cD)^{-1} \Big\},
\end{align}
We finish the proof of the first part by noting $M_X(\hbeta_\MLE;\cS)$ converging to the positive definite matrix $M_X(\beta^*;\cD)^{-1}$ as shown in Lemma \ref{lemma:closeness_between_Mxbeta}.
\begin{align}
n \TRACE(\psi(\hbeta;\bmx, y)\Sigma_M(\bmx))  &\xrightarrow[]{} \TRACE\Big\{\psi(\hbeta_{\MLE};\bmx)\otimes (\bmx\bmx^\top) M_X(\hat \beta_\MLE;\cS)^{-1} \Big\}.
\end{align}
\end{proof}

\section{Supporting Lemmas}
\begin{lemma} 
\label{lemma:loss_diff}
For any subset $\bar \cS$ and subsampling estimator $\bar \beta$ yielded by some subsampling probability $\pi$, we have
    \begin{align*}
        \bbE_{\bar \cS|\cS, \pi} [\cL(\bar \beta; \cD) - \cL(\beta^*; \cD)] = \bbE_{\bar \cS|\cS, \pi} \left[ (\bar \beta - \hbeta_\MLE)^\top \bM(\beta^*; \cD) (\bar \beta - \hbeta_\MLE) + o(\|\bar \beta - \beta^*\|_2^2) \right]
    \end{align*}
\end{lemma}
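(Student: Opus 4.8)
The plan is to obtain the identity from a second-order Taylor expansion of the population risk $\cL(\cdot;\cD)$ about the true parameter $\beta^*$, and then to re-center the resulting quadratic form from $\beta^*$ to the full-data MLE $\hbeta_\MLE$, absorbing the re-centering error into the $o(\cdot)$ remainder.

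First I would exploit that $\beta^* = \argmin_{\beta}\cL(\beta;\cD)$, so that $\nabla_\beta\cL(\beta^*;\cD) = 0$, while the Hessian $\nabla_\beta^2\cL(\beta;\cD) = \bbE_{(\bx,\by)\sim\cD}[\phi(\beta;\bx)\otimes(\bx\bx^\top)]$ (which the lemma writes $\bM(\beta^*;\cD)$, cf.\ \eqref{eqn:first_derivative}) is continuous in $\beta$ on the compact parameter set. Taylor's theorem with an integral remainder then gives, pathwise,
\[
\cL(\bar\beta;\cD) - \cL(\beta^*;\cD) \;=\; (\bar\beta-\beta^*)^\top \bM(\beta^*;\cD)(\bar\beta-\beta^*) \;+\; o\big(\|\bar\beta-\beta^*\|_2^2\big),
\]
where controlling the remainder uses continuity of the Hessian together with the consistency $\bar\beta\pto\beta^*$, which holds under Assumptions~\ref{ass:positive_M}--\ref{ass:bounded_momentum} by the asymptotic analysis of the weighted subsampling estimator in \cite{wang2018optimal,yao2019optimal}.

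Next I would re-center at $\hbeta_\MLE$ via $\bar\beta-\beta^* = (\bar\beta-\hbeta_\MLE)+(\hbeta_\MLE-\beta^*)$, which expands the quadratic form into $(\bar\beta-\hbeta_\MLE)^\top\bM(\beta^*;\cD)(\bar\beta-\hbeta_\MLE)$, plus a cross term $2(\bar\beta-\hbeta_\MLE)^\top\bM(\beta^*;\cD)(\hbeta_\MLE-\beta^*)$, plus $(\hbeta_\MLE-\beta^*)^\top\bM(\beta^*;\cD)(\hbeta_\MLE-\beta^*)$. Taking $\bbE_{\bar\cS|\cS,\pi}[\cdot]$ and using $\|\hbeta_\MLE-\beta^*\|_2 = O_P(n^{-1/2})$ (asymptotic normality of the MLE, \cite{rice2006mathematical}, Sec.~8.5.2), the last term is $O_P(n^{-1})$; since the weighted estimating equation defining $\bar\beta$ has conditional mean $\nabla_\beta\cL(\cdot;\cS)$ given $\cS$, which vanishes at $\hbeta_\MLE$, the conditional bias $\bbE_{\bar\cS|\cS,\pi}[\bar\beta-\hbeta_\MLE]$ vanishes to leading order and the cross term is $o_P(n^{-1})$. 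In the relevant subsampling regime $r=o(n)$ one has $\bbE_{\bar\cS|\cS,\pi}[(\bar\beta-\hbeta_\MLE)^\top\bM(\beta^*;\cD)(\bar\beta-\hbeta_\MLE)] = \Theta_P(r^{-1})$ and likewise $\|\bar\beta-\beta^*\|_2^2 = \Theta_P(r^{-1})$, so the two leftover terms are of strictly smaller order than $\|\bar\beta-\beta^*\|_2^2$ and may be folded into the $o(\cdot)$ remainder. Combining this with the Taylor display and taking $\bbE_{\bar\cS|\cS,\pi}[\cdot]$ yields the lemma.

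The main obstacle is the bookkeeping in the second step. One has to (i) pin down the stochastic orders --- $\|\hbeta_\MLE-\beta^*\|_2=O_P(n^{-1/2})$, the conditional near-unbiasedness of $\bar\beta$ for $\hbeta_\MLE$, and $\|\bar\beta-\hbeta_\MLE\|_2=O_P(r^{-1/2})$ --- where the last two rest on the conditional asymptotic-variance expansion of the weighted subsampling estimator; and (ii) make sense of the stochastic $o(\cdot)$ remainder inside the expectation $\bbE_{\bar\cS|\cS,\pi}[\cdot]$, for which I would invoke the moment bounds of Assumption~\ref{ass:bounded_momentum} (in particular the $n^{-(2+\delta)}\sum_{(\bmx,\by)\in\cS}\pi(\bmx)^{-1-\delta}\|\bmx\|^{2+\delta}=O_P(1)$ condition) to secure the uniform integrability needed to pass from convergence in probability to convergence of expectations. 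The Taylor step and the identification of the Hessian with $\bbE[\phi(\beta^*;\bx)\otimes(\bx\bx^\top)]$ are routine.
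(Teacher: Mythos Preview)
Your approach is the same as the paper's: Taylor-expand $\cL(\cdot;\cD)$ at $\beta^*$ using $\nabla\cL(\beta^*;\cD)=0$ and $\nabla^2\cL(\beta^*;\cD)=\bM(\beta^*;\cD)$, then re-center the quadratic at $\hbeta_\MLE$ via $\bar\beta-\beta^*=(\bar\beta-\hbeta_\MLE)+(\hbeta_\MLE-\beta^*)$ and take $\bbE_{\bar\cS|\cS,\pi}$.

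The only differences are in how the two leftover terms are disposed of. For the cross term, the paper simply asserts $\bbE_{\bar\cS|\cS,\pi}[\bar\beta-\hbeta_\MLE]=0$ so that it vanishes exactly; you argue this holds only to leading order, which is the more honest statement since $\bar\beta$ is a nonlinear M-estimator. For the constant term $(\hbeta_\MLE-\beta^*)^\top\bM(\beta^*;\cD)(\hbeta_\MLE-\beta^*)$, the paper merely notes it is independent of $\pi$ and then concludes with the $\argmin_\pi$ equality (which is all that Theorem~\ref{thm:multi_class_loss} actually uses), whereas you absorb it into the $o(\|\bar\beta-\beta^*\|_2^2)$ remainder via the rate comparison $n^{-1}=o(r^{-1})$ under $r=o(n)$. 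Your version thus proves the equality exactly as stated in the lemma at the price of the extra regime assumption $r=o(n)$ and the uniform-integrability bookkeeping you flag; the paper's version is shorter but, read literally, only delivers the $\argmin$ identity rather than the displayed equality.
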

\begin{proof}
        We obtain
     \begin{align*}
        \cL(\bar \beta; \cD) - \cL(\beta^*; \cD) = &  (\bar\beta - \beta^*)^\top\frac{\partial \cL(\beta; \cD)}{\partial \beta} \Big |_{\beta=\beta^*} + (\bar\beta - \beta^*)^\top \left(\frac{\partial^2 \cL(\beta; \cD)}{\partial \beta^2} \Big |_{\beta=\beta^*}\right) (\bar\beta - \beta^*) + o(\|\bar \beta - \beta^*\|_2^2)
    \end{align*}
    Since $\frac{\partial \cL(\beta; \cD)}{\partial \beta} \Big |_{\beta=\beta^*} = 0,$
    and 
    $
        \frac{\partial^2 \cL(\beta; \cD)}{\partial \beta^2} \Big |_{\beta=\beta^*} = \bM(\beta^*; \cD)
    $
    , we have 
    \begin{align*}
        \cL(\bar \beta; \cD) - \cL(\beta^*; \cD) 
        = (\bar \beta - \beta^*)^\top \bM(\beta^*; \cD) (\bar \beta - \beta^*) + o(\|\bar \beta - \beta^*\|_2^2).
    \end{align*}
    Further,  
    \begin{align*}
        &  \bbE_{\bar \cS|\cS, \pi} \left[(\bar \beta - \beta^*)^\top \bM(\beta^*; \cD) (\bar \beta - \beta^*)\right] \\
        = & \bbE_{\bar \cS|\cS, \pi} \left[(\bar \beta - \hbeta_\MLE + \hbeta_\MLE -  \beta^*)^\top \bM(\beta^*; \cD) (\bar \beta - \hbeta_\MLE + \hbeta_\MLE -  \beta^*)\right] \\
        = &  \bbE_{\bar \cS|\cS, \pi} \left[ (\bar \beta - \hbeta_\MLE)^\top \bM(\beta^*; \cD) (\bar \beta - \hbeta_\MLE)\right] + 2 \bbE_{\bar \cS|\cS, \pi} \left[( \beta^* - \hbeta_\MLE)^\top \bM(\beta^*; \cD) (\bar \beta - \hbeta_\MLE) \right] \\
            & + \bbE_{\bar \cS|\cS, \pi} \left[ (\bar \beta^* - \hbeta_\MLE)^\top \bM(\beta^*; \cD) (\beta^* - \hbeta_\MLE)\right] 
    \end{align*}
    We have 
\begin{align*}
    \bbE_{\bar \cS|\cS, \pi} \left[( \beta^* - \hbeta_\MLE)^\top \bM(\beta^*; \cD) (\bar \beta - \hbeta_\MLE) \right] = ( \beta^* - \hbeta_\MLE)^\top \bM(\beta^*; \cD) \bbE_{\bar \cS|\cS, \pi}[(\bar \beta - \hbeta_\MLE)] = 0 
\end{align*}
So we have
    \begin{align*}
        \argmin_{\pi} \bbE_{\bar \cS|\cS, \pi} [\cL(\bar \beta; \cD) - \cL(\beta^*; \cD)] = \argmin_{\pi}\bbE_{\bar \cS|\cS, \pi} \left[ (\bar \beta - \hbeta_\MLE)^\top \bM(\beta^*; \cD) (\bar \beta - \hbeta_\MLE) + o(\|\bar \beta - \beta^*\|_2^2) \right]
    \end{align*}
by noting that $\bbE_{\bar \cS|\cS, \pi} \left[ (\bar \beta^* - \hbeta_\MLE)^\top \bM(\beta^*; \cD) (\beta^* - \hbeta_\MLE)\right]$ is independent of $\pi$.
\end{proof}
\begin{lemma}
\label{lemma:phi_kesi_equal}
    We have
    \begin{align*}
        \bbE_{\by|\bx}[\psi_{k_1k_2}(\beta^*; \bx, \by)] = \phi_{k_1k_2}(\beta^*; \bx).
    \end{align*}
\end{lemma}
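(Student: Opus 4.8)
The plan is a direct computation: expand the product defining $\psi_{k_1k_2}(\beta^*;\bx,\by)$ and take the conditional expectation over $\by$ given $\bx$, using well-specification $\bbP(\by=c_k\mid\bx)=p_k(\beta^*;\bx)$. Writing $p_k := p_k(\beta^*;\bx)$ for brevity, by definition
\begin{align*}
\psi_{k_1k_2}(\beta^*;\bx,\by) = \delta_{k_1}(\by)\delta_{k_2}(\by) - p_{k_1}\delta_{k_2}(\by) - p_{k_2}\delta_{k_1}(\by) + p_{k_1}p_{k_2},
\end{align*}
so by linearity $\bbE_{\by|\bx}[\psi_{k_1k_2}(\beta^*;\bx,\by)] = \bbE_{\by|\bx}[\delta_{k_1}(\by)\delta_{k_2}(\by)] - p_{k_1}\bbE_{\by|\bx}[\delta_{k_2}(\by)] - p_{k_2}\bbE_{\by|\bx}[\delta_{k_1}(\by)] + p_{k_1}p_{k_2}$. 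The only facts needed are the two elementary identities for indicators of a categorical variable: $\bbE_{\by|\bx}[\delta_k(\by)] = \bbP(\by=c_k\mid\bx) = p_k$, and for the cross term, $\delta_{k_1}(\by)\delta_{k_2}(\by) = \delta_{k_1}(\by)$ when $k_1=k_2$ (since $\delta_k$ is $\{0,1\}$-valued) and $=0$ when $k_1\neq k_2$ (a sample cannot lie in two distinct classes).

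First I would handle the off-diagonal case $k_1\neq k_2$: here $\bbE_{\by|\bx}[\delta_{k_1}(\by)\delta_{k_2}(\by)]=0$, so the expression collapses to $0 - p_{k_1}p_{k_2} - p_{k_2}p_{k_1} + p_{k_1}p_{k_2} = -p_{k_1}p_{k_2}$, which is exactly $\phi_{k_1k_2}(\beta^*;\bx)$ by \eqref{eqn:phi_defi}. Then I would handle the diagonal case $k_1=k_2=k$: here $\bbE_{\by|\bx}[\delta_k(\by)^2]=\bbE_{\by|\bx}[\delta_k(\by)]=p_k$, so the expression equals $p_k - p_k^2 - p_k^2 + p_k^2 = p_k - p_k^2 = \phi_{kk}(\beta^*;\bx)$, again matching \eqref{eqn:phi_defi}. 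Combining the two cases gives the claim.

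There is essentially no obstacle here; the lemma is a one-line moment computation, and the statement $\bbE_{\by}[\psi(\beta^*;\bx,\by)\mid\bx]=\phi(\beta^*;\bx)$ as matrices follows by applying the entrywise identity above. The only point worth stating carefully is that the identity requires the model to be well-specified at $\beta^*$ (so that the conditional law of $\by$ is exactly the softmax law), which is assumed throughout Section~\ref{eqn:optimal_derivation}; it does \emph{not} hold with $\beta^*$ replaced by an arbitrary $\beta$, nor with $\hbeta_\MLE$ in place of $\beta^*$ except asymptotically.
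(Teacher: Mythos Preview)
Your proposal is correct and follows essentially the same approach as the paper: both split into the diagonal and off-diagonal cases, expand the product $(\delta_{k_1}(\by)-p_{k_1})(\delta_{k_2}(\by)-p_{k_2})$, and use $\bbE_{\by|\bx}[\delta_k(\by)]=p_k$ together with $\delta_{k_1}(\by)\delta_{k_2}(\by)=0$ for $k_1\neq k_2$ (and $=\delta_k(\by)$ for $k_1=k_2$) to match the definition of $\phi$ in \eqref{eqn:phi_defi}. Your added remark that well-specification at $\beta^*$ is essential is accurate and worth keeping.
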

\begin{proof}
When $k_1=k_2=k$, 
we have
\begin{align*}
    & \bbE_{\by|\bx} [\psi(\beta^*; \bx, \by)] \\
    = & \bbE_{\by|\bx} [(\delta_{k}(y) - p_{k}(\beta^*; \bx))^2] \\
    = & p_k(\beta^*; \bx) (1-p_{k}(\beta^*; \bx))^2 + (1-p_k (\beta^*; \bx)) p_{k}(\beta^*; \bx)^2 \\
    = & p_k(\beta^*; \bx) - p_k(\beta^*; \bx)^2 = \phi_k(\beta^*; \bx). 
\end{align*}
If $k_1 \neq k_2$, we have 
\begin{align*}
    & \bbE_{\by|\bx} [\psi_{k_1k_2}(\beta_\MLE; \bx, \by)] \\
    = & \bbE_{\by|\bx} [(\delta_{k_1}(\by) - p_{k_1}(\beta^*; \bx)) (\delta_{k_2}(\by) - p_{k_2}(\beta^*; \bx))] \\
    = & \bbE_{\by|\bx} [\delta_{k_1}(\by)\delta_{k_2}(\by)  - \delta_{k_2}(\by)p_{k_1}(\beta^*; \bx) -  \delta_{k_1}(\by)p_{k_2}(\beta^*; \bx) +  p_{k_1}(\beta^*; \bx)p_{k_2}(\beta^*; \bx)) ] \\
    = &  p_{k_2}(\beta^*; \bx)p_{k_1}(\beta^*; \bx) -  p_{k_1}(\beta^*; \bx)p_{k_2}(\beta^*; \bx) +  p_{k_1}(\beta^*; \bx)p_{k_2}(\beta^*; \bx) \\
    = & - p_{k_1}(\beta^*; \bx)p_{k_2}(\beta^*; \bx) = \phi_{k_1 k_2}(\beta^*; \bx)
\end{align*}
\end{proof}
\begin{lemma}[Variance of $\bar \beta$, Theorem 1 of \cite{yao2019optimal}]
\label{lemma:asympototic_variance}
    If Assumptions \ref{ass:positive_M} and \ref{ass:bounded_momentum} hold, as $n\rightarrow\infty$ and $r\rightarrow\infty$, condition on $\cS_{X,2}$ in probability,
$$
\bV^{-1/2}(\bar \beta - \hat \beta_{\MLE}) \rightarrow N(0, \bI),
$$
where
\begin{gather*}
\bV = \bM_X^{-1}(\hbeta_\MLE; \cS) \bV_{c}(\hbeta_\MLE; \cS) \bM_X^{-1}(\hbeta_\MLE; \cS),\\
\bM_X(\hbeta_\MLE; \cS) = \frac{1}{n} \sum_{(\bmx, \bmy) \in \cS} \phi(\hat \beta_{\MLE};\bmx) \otimes (\bmx \bmx^\top), \\
 \bV_{c}(\hbeta_\MLE; \cS) = \frac{1}{rn^2}\sum_{(\bmx, \bmy) \in \cS} \frac{\psi(\hat \beta_{\MLE}; \bmx) \otimes (\bmx \bmx^\top)}{\pi(\bmx, \bmy)}.
\end{gather*}
\end{lemma}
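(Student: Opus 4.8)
The plan is as follows. The statement is exactly the optimal-subsampling central limit theorem for softmax regression established as Theorem~1 of \cite{yao2019optimal} (with the binary case being Theorem~2 of \cite{wang2018optimal}), so one legitimate route is to check that Assumptions~\ref{ass:positive_M}--\ref{ass:bounded_momentum} coincide with its hypotheses and invoke it directly. For completeness I would reconstruct the argument along the standard $M$-estimation lines, everything understood \emph{conditionally on the full data} $\cS$ (equivalently $\cS_{X,2}$). Write $\pi_i:=\pi(\bmx_i,\bmy_i)$ and let $I_1,\dots,I_r$ be i.i.d.\ indices drawn from $\cS$ according to $\{\pi_i\}$, so that $\bar\cS$ is the with-replacement sample.

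\textbf{Consistency and linearization.} First I would show $\bar\beta\pto\hbeta_\MLE$: the subsampled weighted objective of Eqn.~\eqref{eqn:weighted_solver} is convex in $\beta$, its conditional expectation over the draw equals $\cL(\,\cdot\,;\cS)$ up to an additive constant, and the latter is convex with unique minimizer $\hbeta_\MLE$ on the compact parameter set, so a pointwise law of large numbers plus the standard ``pointwise convergence of convex functions is uniform on compacta, hence minimizers converge'' argument does it. Then, setting the gradient of the weighted objective to zero at $\bar\beta$, Taylor-expanding the weighted score in $\beta$ around $\hbeta_\MLE$, and dividing by $n$ gives
\begin{align*}
0 \;=\; \bar S \;+\; H_r\,(\bar\beta-\hbeta_\MLE) \;+\; R_r,
\end{align*}
where $\bar S:=-\tfrac{1}{rn}\sum_{j=1}^r\tfrac{1}{\pi_{I_j}}\,s(\hbeta_\MLE;\bmx_{I_j},\bmy_{I_j})\otimes\bmx_{I_j}$, $H_r:=\tfrac{1}{rn}\sum_{j=1}^r\tfrac{1}{\pi_{I_j}}\,\phi(\tilde\beta;\bmx_{I_j})\otimes(\bmx_{I_j}\bmx_{I_j}^\top)$ with $\tilde\beta$ between $\bar\beta$ and $\hbeta_\MLE$, and the remainder $R_r$ is absorbed using the cubic-moment bound $n^{-2}\sum\|\bmx\|^3=O_P(1)$ of Assumption~\ref{ass:positive_M} together with the just-proven consistency.

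\textbf{Limit of the two pieces and assembly.} Conditionally on $\cS$, $\bar S$ is an average of i.i.d.\ terms with $\bbE[\bar S\mid\cS]=-\nabla\cL(\hbeta_\MLE;\cS)=0$ --- this vanishing holds precisely because $\hbeta_\MLE$ is the MLE on $\cS$ --- and $\Var(\bar S\mid\cS)=\tfrac{1}{rn^2}\sum_{(\bmx,\bmy)\in\cS}\tfrac{\psi(\hbeta_\MLE;\bmx,\bmy)\otimes(\bmx\bmx^\top)}{\pi(\bmx,\bmy)}=\bV_{c}(\hbeta_\MLE;\cS)$. A Lindeberg--Feller argument for this triangular array then gives $\bV_{c}^{-1/2}\bar S\dto N(0,\bI)$; meanwhile $H_r$ has conditional mean $\bM_X(\tilde\beta;\cS)$, and by the $k=2,4$ moment bounds of Assumption~\ref{ass:bounded_momentum}, consistency of $\tilde\beta$, and continuity of $\phi$, $H_r\pto\bM_X(\hbeta_\MLE;\cS)$, which by Assumption~\ref{ass:positive_M} is eventually invertible with positive-definite limit $\bM_X(\beta^*;\cD)$. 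Solving the display, $\bar\beta-\hbeta_\MLE=-H_r^{-1}\bar S+o_P(\|\bar S\|)$, and Slutsky's theorem gives $\bV^{-1/2}(\bar\beta-\hbeta_\MLE)\dto N(0,\bI)$ with $\bV=\bM_X^{-1}(\hbeta_\MLE;\cS)\,\bV_{c}(\hbeta_\MLE;\cS)\,\bM_X^{-1}(\hbeta_\MLE;\cS)$.

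The hard part will be the central-limit step for $\bar S$: because the inclusion probabilities $\pi_i$ need not be bounded away from zero, the triangular-array summands are heavy-tailed, and running the Lyapunov/Lindeberg bound requires exactly the $(2+\delta)$-moment hypothesis $n^{-(2+\delta)}\sum\pi^{-1-\delta}\|\bmx\|^{2+\delta}=O_P(1)$ of Assumption~\ref{ass:bounded_momentum}. A secondary technicality is the nested randomness --- all limits are taken conditionally on the random data $\cS$, so each ``$\pto$'' and ``$\dto$'' must be read in the appropriate conditional-in-probability sense, as is done in \cite{wang2018optimal,yao2019optimal}.
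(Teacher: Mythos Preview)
The paper does not prove this lemma at all; it is stated as a supporting result and attributed verbatim to Theorem~1 of \cite{yao2019optimal}, with no argument reproduced. Your first route --- checking that Assumptions~\ref{ass:positive_M}--\ref{ass:bounded_momentum} match the hypotheses there and citing the result --- is therefore exactly what the paper does, and would suffice.

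Your reconstruction sketch goes further than the paper and is the standard $M$-estimation argument that \cite{wang2018optimal,yao2019optimal} themselves use: conditional consistency via convexity, Taylor linearization of the weighted score around $\hbeta_\MLE$, a Lindeberg--Feller CLT for the conditionally i.i.d.\ score average (using the $(2+\delta)$-moment condition in Assumption~\ref{ass:bounded_momentum} for the Lyapunov bound), a weak law for the Hessian average, and Slutsky to assemble. The identification of $\bV_c$ as the conditional variance of $\bar S$ and of $\bM_X$ as the limit of $H_r$ is correct, as is your remark that the vanishing of $\bbE[\bar S\mid\cS]$ hinges on $\hbeta_\MLE$ solving the full-data score equation. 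Nothing is missing or wrong; you have simply supplied more detail than the paper chose to include.
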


\begin{lemma}\label{lm:matrix_conv}
Consider a finite sequence $\{\bX_i\}_{i=1}^n$ of independent, $D\times D$ random matrices with the same expectation $\bbE \bX_i := \bar M_X$. Let $M_X=\frac{1}{n}\sum_{i=1}^n\bX_i$. If we assume that $\lambda_{\min}(\bar M_X) \wedge \lambda_{\min}(M_X)\ge \kappa_{\min}$ and $\lambda_{\max}(\bar M_X)\vee \lambda_{\max}(M_X)\le \kappa_{\max}$ for some constant $\kappa_{\min},\kappa_{\max}>0$, we have with probability at least $1-\delta$,
$$
\|M_X^{-1}-\bar M_X^{-1}\|_{\op} \le \kappa_{\min}^{-2} \kappa_{\max}\sqrt{\frac{8\log(D/\delta)}{n}}.
$$
\end{lemma}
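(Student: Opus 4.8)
The plan is to combine the resolvent identity for matrix inverses with a matrix concentration inequality. First I would write, for any two invertible matrices,
\[
M_X^{-1} - \bar M_X^{-1} = M_X^{-1}\bigl(\bar M_X - M_X\bigr)\bar M_X^{-1}.
\]
Taking operator norms, using submultiplicativity, and invoking the hypothesis $\lambda_{\min}(M_X)\wedge\lambda_{\min}(\bar M_X)\ge\kappa_{\min}$ — which gives $\|M_X^{-1}\|_{\op},\,\|\bar M_X^{-1}\|_{\op}\le\kappa_{\min}^{-1}$ — reduces the claim to a high-probability bound on $\|M_X-\bar M_X\|_{\op}$, since then
\[
\|M_X^{-1}-\bar M_X^{-1}\|_{\op}\le\kappa_{\min}^{-2}\,\|M_X-\bar M_X\|_{\op}.
\]

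Next I would apply Tropp's matrix Hoeffding inequality \cite{tropp2012user} to the independent, mean-zero summands $\bY_i:=\tfrac1n(\bX_i-\bar M_X)$, so that $M_X-\bar M_X=\sum_{i=1}^n\bY_i$. Using that each $\bX_i$ is (symmetric, positive semidefinite and) bounded by $\kappa_{\max}\bI$ together with $0\preceq\bar M_X\preceq\kappa_{\max}\bI$, one obtains the deterministic bound $-\kappa_{\max}\bI\preceq\bX_i-\bar M_X\preceq\kappa_{\max}\bI$, hence $\bY_i^2\preceq(\kappa_{\max}^2/n^2)\bI$; the matrix variance proxy is therefore $\sigma^2=\bigl\|\sum_{i=1}^n\bY_i^2\bigr\|_{\op}\le\kappa_{\max}^2/n$. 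Matrix Hoeffding then yields, for every $t>0$,
\[
\bbP\!\left(\Bigl\|\tfrac1n\textstyle\sum_{i=1}^n(\bX_i-\bar M_X)\Bigr\|_{\op}\ge t\right)\le 2D\exp\!\left(-\frac{nt^2}{8\kappa_{\max}^2}\right),
\]
where the factor $2$ comes from a union bound over $\lambda_{\max}$ of $\pm(M_X-\bar M_X)$. Setting the right-hand side equal to $\delta$ gives $t=\kappa_{\max}\sqrt{8\log(2D/\delta)/n}$, and combining with the previous display (after the cosmetic replacement of $\log(2D/\delta)$ by $\log(D/\delta)$ at the cost of a slightly larger absolute constant, or simply absorbing the $2$) proves the lemma.

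The main obstacle — really the only subtle point — is feeding the matrix concentration inequality: the lemma's hypotheses constrain only the averaged matrices $M_X$ and $\bar M_X$, whereas matrix Hoeffding/Bernstein needs a spectral bound on the individual $\bX_i$. In the intended application ($\bX_i=\phi(\hat\beta_\MLE;\bmx_i)\otimes(\bmx_i\bmx_i^\top)$) this is supplied by the boundedness of $\phi$ and of $\|\bmx_i\|$, so one should read $\kappa_{\max}$ as also an a.s.\ upper bound on $\|\bX_i\|_{\op}$; alternatively, one can run the Bernstein version with variance $\bigl\|\sum_i\bbE[\bX_i^2]\bigr\|_{\op}$, which only sharpens the variance term and leaves the stated rate intact. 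Everything else — the resolvent identity, the norm bookkeeping, and tuning $t$ — is routine.
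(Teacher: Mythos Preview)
Your proposal is correct and follows essentially the same route as the paper: resolvent identity, submultiplicativity with the $\kappa_{\min}$ bound on the inverses, then matrix Hoeffding from \cite{tropp2012user} on the centered average, with $t$ chosen to make the tail equal $\delta$. You are in fact more careful than the paper on two points: the paper silently drops the dimensional factor $2$ (writing $D$ rather than $2D$), and it glosses over exactly the issue you flag---that the stated hypotheses bound only $M_X$ and $\bar M_X$, whereas Hoeffding needs a spectral bound on each $\bX_i$; the paper's line ``Since $M_X^2\preceq\kappa_{\max}^2$'' does not address this, and your reading of $\kappa_{\max}$ as an a.s.\ bound on $\|\bX_i\|_{\op}$ (justified in the intended application) is the right fix.
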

\begin{proof}
We deduce that
\begin{align*}
\|M_X^{-1}-\bar M_X^{-1}\|_{\op} &= \left\|M_X^{-1}(\bar M_X-M_X)\bar M_X^{-1}\right\|_{\op}\\
&\le \left\|M_X^{-1}\right\|_{\op} \cdot \left\|\bar M_X-M_X\right\|_{\op} \cdot \left\|\bar M_X^{-1}\right\|_{\op}\\
&\le \kappa_{\min}^{-2}\cdot \left\|\bar M_X-M_X\right\|_{\op},
\end{align*}
where the last inequality is due to $\lambda_{\min}(M_X) \ge \kappa_{\min}$ as $n\rightarrow\infty$. 
Then, it remains to show that $\|\bar M_X-M_X\|_{\op}$ goes to zero.

Since $M_X^2 \preceq \kappa_{\max}^2$, and
$$
\bbE\left[M_X-\bar M_X\right] = \bbE\left[\frac{1}{n}\sum_{i=1}^n\bX_i - \bbE\bX\right] = 0,
$$
we apply matrix Hoeffding inequality from Theorem 1.3 in \cite{tropp2012user} to obtain that for all $t\ge0$,
\begin{align*}
\bbP\left(\left\|M_X-\bar M_X\right\|_{\op}\ge t\right) \le D\cdot e^{-nt^2/8\kappa_{\max}^2}.
\end{align*}
By taking $t=\kappa_{\max}\sqrt{8\log(D/\delta)/n}$ for any $\delta>0$, we get with probability at least $1-\delta$
\begin{align*}
\left\|M_X-\bar M_X\right\|_{\op} \le \kappa_{\max}\sqrt{\frac{8\log(D/\delta)}{n}},
\end{align*}
which completes the proof.
\end{proof}

\begin{lemma} \label{lemma:closeness_between_Mxbeta}
Assuming that for any $\bmx\in\cD$, $\|\bmx\|_2\le L$, and $\bM_X(\hat\beta_\MLE; \cS)$ is positive definite: $\bM_X(\hat\beta_\MLE; \cS)\succeq \nu I$ for some $\nu>0$. When $n\ge 32L^4\log(d/\delta)/\nu^2$, for any $\bmx\in\cD$, with probability at least $1-2\delta$,
\begin{align*}
\bmx^\top \left(  \bM_X(\hat\beta_\MLE; \cS)-\bM_X(\beta^*; \cD)\right) \bmx = O(n^{-1/2}),
\end{align*}
where $\bM_X(\hat\beta_\MLE; \cS)=n^{-1}\sum_{\bmx\in\cS_x}w(\beta_{\MLE},\bmx)\bmx\bmx^\top$, and $\bM_X(\beta^*; \cD)=\bbE_{(\bx,\by) \sim \cD}[w(\beta_*,\bx)\bx\bx^\top]$.
\end{lemma}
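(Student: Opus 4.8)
The plan is to insert the intermediate matrix $\bM_X(\beta^*;\cS)$ and split the quadratic form into a \emph{plug-in error} term (replacing $\beta^*$ by $\hbeta_\MLE$) and a \emph{sampling fluctuation} term (replacing the population average by the empirical one):
\begin{align*}
\bmx^\top\big(\bM_X(\hbeta_\MLE;\cS)-\bM_X(\beta^*;\cD)\big)\bmx
&= \underbrace{\bmx^\top\big(\bM_X(\hbeta_\MLE;\cS)-\bM_X(\beta^*;\cS)\big)\bmx}_{=:(\mathrm I)} \\
&\quad + \underbrace{\bmx^\top\big(\bM_X(\beta^*;\cS)-\bM_X(\beta^*;\cD)\big)\bmx}_{=:(\mathrm{II})}.
\end{align*}
I would bound each term by $O(n^{-1/2})$ on an event of probability at least $1-\delta$, so that a union bound gives the claim with probability at least $1-2\delta$; crucially, both bounds will be uniform in $\bmx$ once $\|\bmx\|_2\le L$ is used, which is exactly what the ``for any $\bmx\in\cD$'' conclusion requires.

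For $(\mathrm{II})$ I would note that $\bM_X(\beta^*;\cS)=\frac1n\sum_{\bmx_i\in\cS_X}w(\beta^*,\bmx_i)\bmx_i\bmx_i^\top$ is a sample average of i.i.d.\ matrices with mean $\bM_X(\beta^*;\cD)$; since $w(\beta^*,\cdot)=\phi(\beta^*;\cdot)$ is a covariance matrix of indicator variables, $0\preceq w\preceq I$, so each summand has operator norm at most $L^2$. The matrix Hoeffding inequality (as used in the proof of Lemma~\ref{lm:matrix_conv}, via \cite{tropp2012user}) then gives $\|\bM_X(\beta^*;\cS)-\bM_X(\beta^*;\cD)\|_{\op}\le L^2\sqrt{8\log(d/\delta)/n}$ with probability at least $1-\delta$, whence $|(\mathrm{II})|\le\|\bmx\|_2^2\,\|\bM_X(\beta^*;\cS)-\bM_X(\beta^*;\cD)\|_{\op}\le L^4\sqrt{8\log(d/\delta)/n}=O(n^{-1/2})$. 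The hypothesis $n\ge 32L^4\log(d/\delta)/\nu^2$ is precisely what keeps this fluctuation below $\nu/2$, so that $\bM_X(\beta^*;\cS)$ inherits positive-definiteness from the assumed $\bM_X(\hbeta_\MLE;\cS)\succeq\nu I$ and the eigenvalue hypotheses needed to invoke Lemma~\ref{lm:matrix_conv} hold.

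For $(\mathrm I)$ I would expand $(\mathrm I)=\frac1n\sum_{\bmx_i\in\cS_X}\big(w(\hbeta_\MLE,\bmx_i)-w(\beta^*,\bmx_i)\big)(\bmx^\top\bmx_i)^2$. Since $\hbeta_\MLE$ depends on the whole sample, I cannot treat the summands as independent of it, so I would bound the weight difference uniformly: the softmax probabilities satisfy $\|\nabla_\beta p_k(\beta;\bmx)\|_2\le C\|\bmx\|_2$, hence $w(\cdot,\bmx)$ is Lipschitz in $\beta$ with constant $C\|\bmx\|_2\le CL$ on the compact parameter set, so $|w(\hbeta_\MLE,\bmx_i)-w(\beta^*,\bmx_i)|\le CL\,\|\hbeta_\MLE-\beta^*\|_2$. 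With $(\bmx^\top\bmx_i)^2\le L^4$ this gives $|(\mathrm I)|\le CL^5\,\|\hbeta_\MLE-\beta^*\|_2$, and the asymptotic normality of the MLE, $\sqrt n(\hbeta_\MLE-\beta^*)\to\cN(0,\bM_X^{-1}(\beta^*;\cD))$ (cf.\ Sec.~8.5.2 of \cite{rice2006mathematical}), yields $\|\hbeta_\MLE-\beta^*\|_2=O_P(n^{-1/2})$, i.e.\ $\|\hbeta_\MLE-\beta^*\|_2\le Mn^{-1/2}$ with probability at least $1-\delta$ for $n$ large enough, so $(\mathrm I)=O(n^{-1/2})$. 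Adding the two bounds would finish the proof.

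The hard part will be term $(\mathrm I)$: one has to avoid the tempting but wrong move of treating its summands as independent of $\hbeta_\MLE$, and the whole estimate works only because the $\|\bmx\|$-factor in the Lipschitz constant of the weight combines with the uniform bound $\max_i\|\bmx_i\|_2\le L$ so that the $n^{-1/2}$ rate of the MLE passes intact to the quadratic form; term $(\mathrm{II})$ is routine given Lemma~\ref{lm:matrix_conv}, the only subtlety being the eigenvalue bookkeeping needed to apply it.
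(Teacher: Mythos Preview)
Your approach is essentially the same as the paper's: both insert the intermediate matrix $\bM_X(\beta^*;\cS)$, control the sampling-fluctuation piece $(\mathrm{II})$ via the matrix Hoeffding inequality from \cite{tropp2012user}, and control the plug-in piece $(\mathrm I)$ via the Lipschitz continuity of the softmax weights combined with $\|\hbeta_\MLE-\beta^*\|_2=O_P(n^{-1/2})$. The one difference worth flagging is that the paper's proof actually carries the argument through to the \emph{inverses}, concluding $\bmx^\top\big(\bM_X^{-1}(\hbeta_\MLE;\cS)-\bM_X^{-1}(\beta^*;\cD)\big)\bmx=O_P(n^{-1/2})$, which is the form invoked in the proofs of Theorems~\ref{thm:multi_class_loss} and~\ref{thm:uncertainty_estimation_multi}; you prove the lemma exactly as stated (without inverses), and the inverse version follows from your bounds by the standard $A^{-1}-B^{-1}=A^{-1}(B-A)B^{-1}$ identity together with the eigenvalue bookkeeping you already identified (this is precisely what Lemma~\ref{lm:matrix_conv} packages).
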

\begin{proof}
We can get $\lambda_{\max}(\bM_X(\beta^*; \cS))\le n^{-1}\sum_{\bmx\in\cS_x}\lambda_{\max}(\bmx\bmx^\top)\le \|\bmx\|_2^2 \le L^2$, and $\lambda_{\max}(\bM_X(\beta^*; \cD))\le L^2$. Thus, from Lemma \ref{lm:matrix_conv}, we know that $\bM_X(\beta^*; \cS)$ converges to $\bM_{X}(\beta^*; \cD)$ in probability, i.e., with probability at least $1-\delta$,
\begin{align*}
\left\|\bM_X(\beta^*; \cS)-\bM_{X}(\beta^*; \cD)\right\|_{\op} \le L^2\sqrt{\frac{8\log(d/\delta)}{n}} \le \frac{\nu}{2},
\end{align*}
where the second inequality is obtained since $n\ge 32L^4\log(d/\delta)/\nu^2$. Therefore, it follows that $\bM_{X}(\beta^*; \cD)\succeq\nu I/2$. Then, conditionling on $\bM_{X}(\beta^*; \cD)\succeq\nu I/2$, we can invoke Lemma \ref{lm:matrix_conv} with $\kappa_{\min}=\nu/2$ and $\kappa_{\max}=L^2$ to obtain for any $\bmx\in\cD_X$, with probability at least $1-\delta$,
\begin{align}\label{eq:matrix_appro_1}
\bmx^\top \left( M_{X}^{-1}(\beta^*; \cS)-\bM_X^{-1}(\beta^*; \cD)\right) \bmx = O(n^{-1/2}).
\end{align}
Additionally, because $\beta_{\MLE}-\beta^*=O_P(n^{-1/2})$ and $\|\bmx\|_2\le L$ for any $\bmx\in\cD$, we have
\begin{align*}
\left\|\bM_X(\beta_{\MLE}; \cS) - \bM_X(\beta^*; \cS)\right\|_{\op} \le \frac{1}{n} \sum_{\bmx\in\cS_x}\big|w(\beta_{\MLE},\bmx)-w(\beta^*,\bmx)\big| L^2 \le \frac{L^2}{\sqrt{n}},
\end{align*}
which indicates that $\bM_X(\beta^*; \cS) \ge \nu I/2$ due to $n\ge 4L^4/\nu^2$. Therefore, we get
\begin{align}
\label{eq:matrix_appro_2}
\left\|\bM_X^{-1}(\beta_{\MLE}; \cS) - \bM_X^{-1}(\beta^*; \cS)\right\|_{\op} &\le \left\|\bM_X^{-1}(\beta_{\MLE}; \cS)\right\|_{\op} \cdot \left\|\bM_X(\beta^*; \cS)-\bM_X(\beta_{\MLE}; \cS)\right\|_{\op}\cdot \left\|\bM_X^{-1}(\beta^*; \cS)\right\|_{\op}\\
&\le \frac{2}{\nu^2}\cdot\frac{L^2}{n} = O(n^{-1/2}).
\end{align}
Combining the result in \eqref{eq:matrix_appro_1} and \eqref{eq:matrix_appro_2}, we obtain that with probability at least $1-2\delta$,
\begin{align*}
&\bmx^\top \left( \bM_X^{-1}(\beta_{\MLE}; \cS)-\bM_X^{-1}(\beta^*; \cD)\right) \bmx\\
=& \bmx^\top \left( \bM_X^{-1}(\beta_{\MLE}; \cS) - \bM_X^{-1}(\beta^*; \cS) + \bM_X^{-1}(\beta^*; \cS) -\bM_X^{-1}(\beta^*; \cD)\right) \bmx\\
=& O_P(n^{-1/2}),
\end{align*}
which concludes the proof.
\end{proof}

\section{Algorithms}
\subsection{Uncertainty Sampling Algorithm for DNNs}
\label{sect:uncertainty_estimation_dnns}
\begin{algorithm}[H]
\caption{COPS  for coreset selection on DNNs \label{algo:dnn_those_coreset}}
\KwInput{Training data $\cS$, one probe datasets $\cS'$, sub-sampling size $r$. }
\KwOut{The selected subset $\bar\cS$ and the model $\bar \theta$.}

For each $(\bmx, \bmy) \in \cS$, obtain $u(\bmx, \bmy)$ by Algorithm~\ref{algo:uncertainty_estimation_DNN} with the probe set $\cS'$;
 
Randomly draw $\bar\cS$ containing $r$ samples from $\cS$ by $u(\bmx, \bmy) / \sum_{(\bmx', \bmy') \in \cS} u(\bmx', \bmy')$.

Solve $\bar \beta$ on the weighted subset $\bar\cS(\pi)$ as follows.
\begin{align}
    \small
    \label{eqn:weighted_solver_dnn}
    \bar{\theta}  = \argmin_\theta \left(-\frac{1}{r} \sum_{(\bar\bmx, \bar \bmy) \in \bar \cS } \frac{1}{\pi(\bar \bmx, \bar \bmy)} \left( \sum_{k=1}^K\delta_{k}(\bar \bmy)\bar f_k(\theta; \bar\bmx) - \log\{1+\sum_{l=1}^K \exp(f_l(\theta; \bar\bmx))\} \right) \right).
\end{align}

\end{algorithm}

\begin{algorithm}[H]
\caption{COPS  for active learning on DNNs \label{algo:dnn_those_active}}
\KwInput{Training data $\cS_X$, one probe datasets $\cS'$, sub-sampling size $r$. }
\KwOut{The selected subset $\bar\cS$ with inquired label and the model $\bar \theta$.}

For each $\bmx \in \cS_X$, obtain $u(\bmx)$ by Algorithm~\ref{algo:uncertainty_estimation_DNN} with the probe set $\cS'$;
 
Randomly draw $\bar\cS_X$ containing $r$ samples from $\pi(\bmx) = \cS$ by $u(\bmx) / \sum_{\bmx' \in \cS} u(\bmx')$.

Obtain the labeled data set $\bar\cS$ by labeling each sample in $\bar\cS_X$.

Solve $\bar \beta$ on the weighted subset $\bar\cS(\pi)$ as follows
\begin{align}
    \small
    \label{eqn:weighted_solver_active_dnn}
    \bar{\theta}  = \argmin_\theta \left(-\frac{1}{r} \sum_{\bar\bmx \in \bar \cS_X } \frac{1}{\pi(\bar \bmx)} \left( \sum_{k=1}^K\delta_{k}(\bar \bmy)\bar f_k(\theta; \bar\bmx) - \log\{1+\sum_{l=1}^K \exp(f_l(\theta; \bar\bmx))\} \right) \right).
\end{align}
\end{algorithm}


\begin{algorithm}[H]
\caption{Uncertainty estimation for DNNs.  \label{algo:uncertainty_estimation_DNN}}
\KwInput{Probe datasets $\cS'$, the sampling dataset $\cS$ for coreset selection or $\cS_X$ for active learning.}
\KwOut{The estimated uncertainty for each sample in $\cS$ or $\cS_X$.}
For $m = 1, ..., M$, randomly initialize $f_{\theta^{(m)}}$ with different seeds and then minimize the loss of $f_{\theta^{(m)}}$ on $\cS'$ by SGD independently.

For each $\bmx$, obtain the output logits of each model $ \{f_{\theta^{(m)}}(\bmx)\}_{m=1}^M$ and the covariance of the logits:
\begin{align}
    \label{eqn:covariance_of_logits_DNN}
    \Sigma_M(\bmx) = \frac{1}{M - 1}\sum_{m=1}^M\left( f_{\theta^{(m)}}(\bmx)  - 
        \frac{1}{M} \sum_{l=1}^M f_{\theta^{(l)}}(\bmx) \right) \left( f_{\theta^{(m)}}(\bmx)  - 
        \frac{1}{M} \sum_{l=1}^M f_{\theta^{(l)}}(\bmx) \right) ^\top.
\end{align}

Get the predicted probability of sample $\bmx$ by $\frac{1}{M} \sum_{m=1}^M p(f_{\theta^{(m)}}; \bmx)$. Estimate the uncertainty for each sample 
\begin{itemize}
    \item Case (1) coreset selection. Obtain $\psi(\Tilde{\beta}; \bmx, \bmy)$ according to Eqn~\eqref{eqn:psi_defi} and obtain the uncertainty the estimation
    \begin{align*}
        u(\bmx, \bmy) = \TRACE\left(\psi(\Tilde\beta;\bmx, \bmy)\Sigma_M(\bmx)\right);
    \end{align*}
    \item Case (2) active learning. Obtain $\phi(\Tilde{\beta}; \bmx)$ according to Eqn~\eqref{eqn:phi_defi} and obtain the uncertainty the estimation as 
    \begin{align*}
 u(\bmx) = \TRACE\left(\phi(\Tilde\beta;\bmx)\Sigma_M(\bmx)\right).
    \end{align*} 
\end{itemize}

\end{algorithm}

\subsection{COPS-clip Algorithm for DNNs}
\label{app:algorithms_with_clip}
\begin{algorithm}[H]
\caption{COPS with uncertainty clipping  for coreset selection on DNNs \label{algo:dnn_those_coreset_clipping}}
\KwInput{Training data $\cS$, one probe datasets $\cS'$, sub-sampling size $r$, hyper-parameter $\alpha$. }
\KwOut{The selected subset $\bar\cS$ and the model $\bar \theta$.}

For each $(\bmx, \bmy) \in \cS$, obtain $u(\bmx, \bmy)$ by Algorithm~\ref{algo:uncertainty_estimation_DNN} with the probe set $\cS'$;
 
Randomly draw $\bar\cS$ containing $r$ samples from $\cS$ by 
\begin{align*} \pi^\alpha(\bmx, \bmy) = \min\left\{\alpha, u(\bmx, \bmy)\right\} / \sum_{(\bmx', \bmy') \in \cS} \min\left\{\alpha, u(\bmx', \bmy')\right\}.\end{align*}

Calculate the reweighting of each selected sample $(\bar\bmx, \bar\bmy)$ as $\frac{1}{\pi(\bar\bmx, \bar\bmy)}$, where 
\begin{align*}
    \pi(\bmx, \bmy) = u(\bmx, \bmy) / \sum_{(\bmx', \bmy') \in \cS}  u(\bmx', \bmy').
\end{align*}

Solve $\bar \theta$ on the weighted subset as:$$\bar \theta = \argmin_\theta \left(-\frac{1}{r} \sum_{(\bar\bmx, \bar \bmy) \in \bar \cS } \frac{1}{\pi(\bar\bmx, \bar \bmy)} \left( \sum_{k=1}^K\delta_{k}(\bar \bmy) f_{\theta, k}(\bar \bmx)- \log\{1+\sum_{l=1}^K \exp(f_{\theta, l}(\bar \bmx))\} \right) \right)
$$

\end{algorithm}

\begin{algorithm}[H]
\caption{COPS with uncertainty clipping  for active learning on DNNs \label{algo:dnn_those_active_clipping}}
\KwInput{Training data $\cS_X$, one probe datasets $\cS'$, sub-sampling size $r$, hyper-parameter $\alpha$. }
\KwOut{The selected subset $\bar\cS$ with inquired label and the model $\bar \theta$.}

For each $\bmx \in \cS_X$, obtain $u(\bmx)$ by Algorithm~\ref{algo:uncertainty_estimation_DNN} with the probe set $\cS'$;
 
Randomly draw $\bar\cS_X$ containing $r$ samples from $\cS$ by
$$\pi^\alpha(\bmx) = \min\left\{\alpha, u(\bmx)\right\} / \sum_{\bmx' \in \cS_X} \min\left\{\alpha, u(\bmx')\right\}.$$

Calculate the reweighting of each selected sample $\bar\bmx$ as $\frac{1}{\pi(\bar\bmx)}$, where 
\begin{align*}
    \pi(\bmx) = u(\bmx) / \sum_{(\bmx') \in \cS_X}  u(\bmx').
\end{align*}

Obtain the labeled data set $\bar\cS$ by labeling each sample in $\bar\cS_X$

Solve $\bar \theta$ on the weighted subset as:$$
    \bar \theta = \argmin_\theta \left(-\frac{1}{r} \sum_{(\bar\bmx, \bar \bmy) \in \bar \cS } \frac{1}{\pi(\bar \bmx)} \left( \sum_{k=1}^K\delta_{k}(\bar \bmy) f_{\theta, k}(\bar \bmx)- \log\{1+\sum_{l=1}^K \exp(f_{\theta, l}(\bar \bmx))\} \right) \right).
$$
\end{algorithm}

\section{Experimental Details}
\subsection{Details of the experiment in Section~\ref{sect:main_experiments} }
\label{app:exp_details}
We evaluate the performance of the model on the original test set. We use AdamW Optimizer~\cite{loshchilov2018decoupled} with cosine lr decay for 150 epochs, the batch size is 256. We put a limit on the maximum weight when solving Eqn~\eqref{eqn:weighted_solver} to avoid large variance. Specifically, let $u_i$ denote the uncertainty of $i$th sample. In Eqn~\eqref{eqn:weighted_solver}, we use $\frac{1}{u_i}$ to reweight the selected data. To avoid large variance, we use  $\frac{1}{\max\{\beta, u_i\}}$ as the reweighting to replace $\frac{1}{u_i}$. We simply set $\beta=0.1$ for all experiments following \cite{citovsky2023leveraging}. So the coreset selection algorithm with full details are shown in Algorithm~\ref{algo:dnn_those_coreset_clipping_withbeta} and \ref{algo:dnn_those_active_clipping_withbeta}. Comparing Algorithm~\ref{algo:dnn_those_coreset_clipping_withbeta}-\ref{algo:dnn_those_active_clipping_withbeta} with  Algorithm~\ref{algo:dnn_those_coreset_clipping}-\ref{algo:dnn_those_active_clipping}, we can see the only difference is that we use $\pi_\beta$ instead of $\pi$ to re-weight the selected data, which is consistent with \cite{citovsky2023leveraging}. We use  Algorithm~\ref{algo:dnn_those_coreset_clipping_withbeta} and \ref{algo:dnn_those_active_clipping_withbeta} in the main experiment part by default. 

\begin{algorithm}[H]
\caption{COPS with full details for coreset selection on DNNs \label{algo:dnn_those_coreset_clipping_withbeta}}
\KwInput{Training data $\cS$, one probe datasets $\cS'$, sub-sampling size $r$, hyper-parameter $\alpha$. }
\KwOut{The selected subset $\bar\cS$ and the model $\bar \theta$.}

For each $(\bmx, \bmy) \in \cS$, obtain $u(\bmx, \bmy)$ by Algorithm~\ref{algo:uncertainty_estimation_DNN} with the probe set $\cS'$;
 
Randomly draw $\bar\cS$ containing $r$ samples from $\cS$ by 
\begin{align*} \pi^\alpha(\bmx, \bmy) = \min\left\{\alpha, u(\bmx, \bmy)\right\} / \sum_{(\bmx', \bmy') \in \cS} \min\left\{\alpha, u(\bmx', \bmy')\right\}.\end{align*}

Calculate the reweighting of each selected sample $(\bar\bmx, \bar\bmy)$ as $\frac{1}{\pi_\beta(\bar\bmx, \bar\bmy)}$, where 
\begin{align*}
    \pi_\beta(\bmx, \bmy) = \max\{\beta, u(\bmx, \bmy)\} / \sum_{(\bmx', \bmy') \in \cS}  \max\{\beta,u(\bmx', \bmy')\}.
\end{align*}

Solve $\bar \theta$ on the weighted subset as:$$\bar \theta = \argmin_\theta \left(-\frac{1}{r} \sum_{(\bar\bmx, \bar \bmy) \in \bar \cS } \frac{1}{\pi_\beta(\bar\bmx, \bar \bmy)} \left( \sum_{k=1}^K\delta_{k}(\bar \bmy) f_{\theta, k}(\bar \bmx)- \log\{1+\sum_{l=1}^K \exp(f_{\theta, l}(\bar \bmx))\} \right) \right)
$$

\end{algorithm}

\begin{algorithm}[H]
\caption{COPS with full details for active learning on DNNs \label{algo:dnn_those_active_clipping_withbeta}}
\KwInput{Training data $\cS_X$, one probe datasets $\cS'$, sub-sampling size $r$, hyper-parameter $\alpha$. }
\KwOut{The selected subset $\bar\cS$ with inquired label and the model $\bar \theta$.}

For each $\bmx \in \cS_X$, obtain $u(\bmx)$ by Algorithm~\ref{algo:uncertainty_estimation_DNN} with the probe set $\cS'$;
 
Randomly draw $\bar\cS_X$ containing $r$ samples from $\cS$ by
$$\pi^\alpha(\bmx) = \min\left\{\alpha, u(\bmx)\right\} / \sum_{\bmx' \in \cS_X} \min\left\{\alpha, u(\bmx')\right\}.$$

Calculate the reweighting of each selected sample $\bar\bmx$ as $\frac{1}{\pi_\beta(\bar\bmx)}$, where 
\begin{align*}
    \pi_\beta(\bmx) = \max\{ \beta, u(\bmx)\} / \sum_{(\bmx') \in \cS_X}  \max\{ \beta, u(\bmx')\}.
\end{align*}

Obtain the labeled data set $\bar\cS$ by labeling each sample in $\bar\cS_X$

Solve $\bar \theta$ on the weighted subset as:$$
    \bar \theta = \argmin_\theta \left(-\frac{1}{r} \sum_{(\bar\bmx, \bar \bmy) \in \bar \cS } \frac{1}{\pi_\beta(\bar \bmx)} \left( \sum_{k=1}^K\delta_{k}(\bar \bmy) f_{\theta, k}(\bar \bmx)- \log\{1+\sum_{l=1}^K \exp(f_{\theta, l}(\bar \bmx))\} \right) \right).
$$
\end{algorithm}
\subsection{Hyper-parameters of the experiments in Section~\ref{sect:main_experiments}}

\begin{table}[H]
    \centering
    \small
    \resizebox{\linewidth}{!}{
    \begin{tabular}{c|cccccc}
    \toprule
        Dataset & CIFARBinary & CIFAR10/CIFAR10-N & CIFAR100 & SVHN & Places365 & IMDB\\
        \midrule
        Class Number & 2&10&100&10&10&2\\
        \hline
        Size of the probe set  & 2,000 & 10,000& 20,000& 10,000&10,000&5,000\\
         Start learning rate 1 & 0.1 & 0.1 & 0.1 & 0.1 &0.1 &0.1\\
        Learning rate schedule 1 &schedule 1 & schedule 1 & schedule 1 & schedule 1 & 
 schedule 1& no schedule\\
         Optimizer 1 & SGD& SGD& SGD& SGD& SGD& AdamW\\
        Epoch 1&100 & 100 & 100 & 100 & 100& 20 \\
        
        \hline
        Size of the sampling set &8,000 & 40,000& 30,000 &63,257&40,000&20,000 \\
         Start learning rate 2 & 0.1 & 0.1 & 0.1 & 0.1 &0.1 &0.1\\
        Learning rate schedule 2 &schedule 2 & schedule 2 & schedule 2 & schedule 2 & 
 schedule 1& no schedule\\
         Optimizer 2 & AdamW& AdamW& AdamW& AdamW& SGD& AdamW\\
        Epoch 2&150&150&150&150&100&20 \\
         \hline
        Size of the testing set &2,000 &10,000& 10,000 & 26,032& 1,000 & 25,000\\

        \bottomrule
    \end{tabular}
    }
    \caption{This table illustrates the training details. Here we set weight decay as 5e-4 for all the experiments. Here no schedule means using the start learning rate without modification during training.  Schedule 1 stands for the decaying of the learning rate by 0.1 every 30 epochs. Schedule 2 means using the cosine learning schedule with $T_{max}=50$ and $eta_{min}=0$}
    \label{tab:my_label}
\end{table}
\subsection{Structure of models for IMDB}
\label{sec:archi}
We adopt GRU for IMDB, whose structure is shown as follows:
\begin{table}[H]
    \centering
    \begin{tabular}{c|c}
        layer&    GRU Model\\
            \hline
        1&   Embedding(2000, 200)\\
       2&  Dropout(p=0.3)\\
       3&  GRU(hidden\_size= 24, num\_layers=2,  dropout=0.3, bidirectional=True)\\
       4& Maxpool() \& Avgpool()\\
       5& Concat(last,max, avg)\\
       6&  Linear(in\_features=72, out\_features=1, bias=True)
    \end{tabular}
    \caption{Model structure for GRU}
    \label{tab:my_label}
\end{table}

\section{More experimental results}
\subsection{Label noise\label{sec:label_noise}}
\noindent We found that COPS-vanilla selects large number of samples with large uncertainty, which is later shown to exacerbate label noise when sub-sampling a dataset with natural label noise, CIFAR10-N. 
\begin{table}[H]
    \centering
    \begin{tabular}{c|c|c|c}
\hline 
Dataset & Sampling Method & WithY & WithoutY\tabularnewline

\hline 
\multirow{3}{*}{CIFAR10-N} & Uniform & 0.0872 &0.0912\tabularnewline
\cline{2-4} \cline{3-4} \cline{4-4} 
 & COPS-vanilla & 0.1314 & 0.0934\tabularnewline
\cline{2-4} \cline{3-4} \cline{4-4} 
 & COPS-clip & 0.0941 & 0.0906\tabularnewline
\hline 
\end{tabular}
    \caption{Noise ratio comparison for different sampling methods. Here we sample 1000 instances from each class and use the uncertainty with the label known. The noise ratio of uniform sampling for  the coreset selection (WithY) and active learning (WithoutY) is slightly different. This is because we sample within each class in coreset selection. For example, in the CIFAR10-N-1000-WithY, we uniformly select 100 samples in each class. However, in the CIFAR10-N-1000-WithoutY, we uniformly select 1000 samples in the whole dataset. \label{tab:noise_ratio}}
    
\end{table}

\subsection{Comparison of threshold in the sampling and reweighting stages\label{sec:thre_exp}}

Let $u$ represent $u(\bmx, \bmy)$ for coreset selection and $u(\bmx)$ for active learning. The COPS method consists of two stages:

\begin{itemize}
    \item Stage 1: Data sampling based on $u$. To prevent COPS from oversampling low-density data, we propose limiting the maximum value of $u$ by $\min\{\alpha, u\}$ in this stage. (Section~\ref{sect:thresholding_uncertainty})
    \item Stage 2: Weighted learning, where each selected sample is assigned a weight of $1/u$ to obtain an unbiased estimator. To reduce variance, \cite{ionides2008truncated, swaminathan2015counterfactual, citovsky2023leveraging} propose adding a threshold of $1/\max\{\beta, u\}$ in this stage.
\end{itemize}

We refer to the thresholding method in the sampling stage as ``$\alpha$-clip" and the thresholding method in the reweighting stage as ``$\beta$-clip". We investigate the impact of ``$\alpha$-clip" and ``$\beta$-clip" on the final performance. We compare four methods based on whether thresholding is applied in the first and second stages:

\begin{itemize}
    \item ``Vanilla sampling + vanilla reweighting": No thresholding is used in either stage.
    \item ``$\alpha$-clip sampling + vanilla reweighting": Thresholding of $\min\{\alpha, u\}$ is applied in the sampling stage, while reweighting remains unchanged.
    \item ``Vanilla sampling + $\beta$-clip reweighting": Sampling stage remains unchanged, but a threshold of $1/\max\{\beta, u\}$ is used in the reweighting stage.
    \item ``$\alpha$-clip sampling + $\beta$-clip reweighting": Both sampling and reweighting stages utilize thresholding methods.
\end{itemize}

The comprehensive results are displayed in Figure~\ref{fig:2_stage_clip_cmpr}. The results clearly demonstrate that both the utilization of $\alpha$-clip in sampling and $\beta$-clip in reweighting lead to performance improvement. Importantly, it is observed that the performance gain of each method cannot be solely attributed to the other. The optimal performance is achieved by effectively combining the benefits of both techniques.

\begin{figure}[H]
\centering
\begin{minipage}[t]{0.48\textwidth}
\centering
\includegraphics[width=6cm]{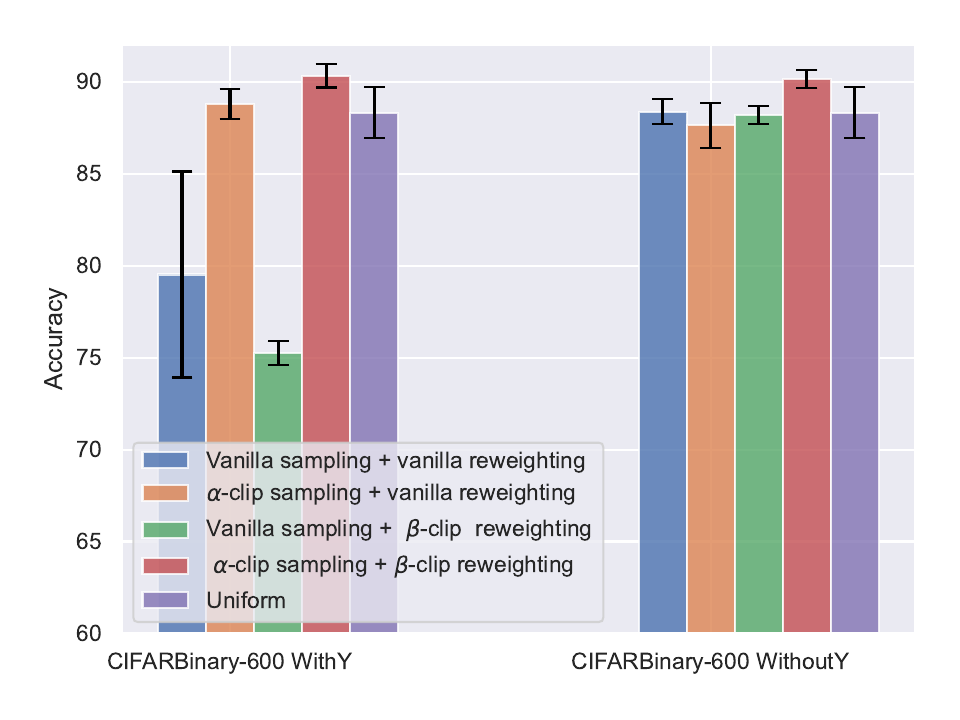}
\caption*{(a) CIFARBinary-600.\label{300_pilot_detailed_cifar_binary}}
\end{minipage}
\begin{minipage}[t]{0.48\textwidth}
\centering
\includegraphics[width=6cm]{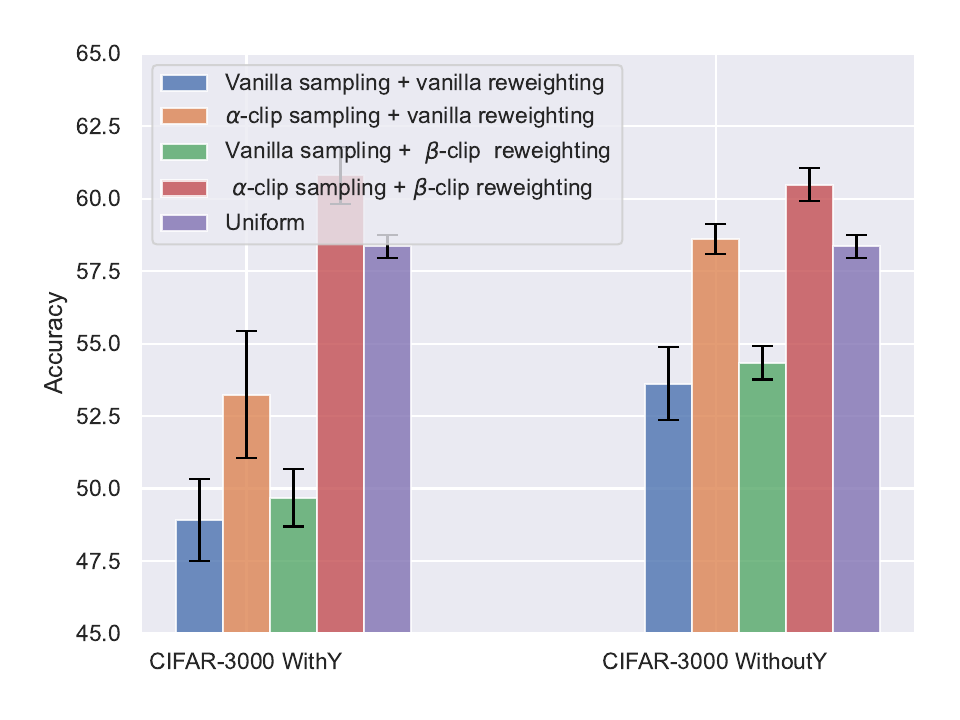}
\caption*{(b)CIFAR10-3000.\label{300_pilot_detailed_cifar} }
\end{minipage}
\caption{Results of comparing the $\alpha$-clip in the first stage (sampling stage) and $\beta$-clip in the second stage (reweighting stage) }
\label{fig:2_stage_clip_cmpr}
\end{figure}



\subsection{Comparison with full data}

\paragraph{CIFAR10Binary.} The figure for CIFAR10Binary is shown in Figure.\ref{cifar10binary_full_compa}.

\begin{figure}[H]
\centering
\begin{minipage}[t]{0.48\textwidth}
\centering
\includegraphics[width=6cm]{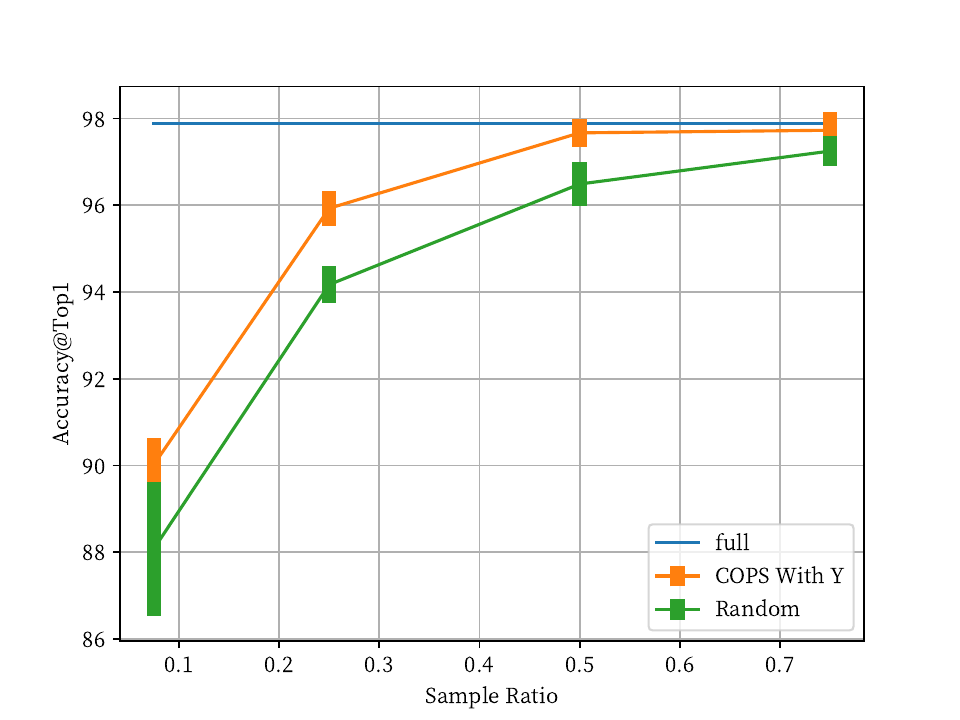}
\caption*{(a)Comparison on CIFAR10Binary with Y.}
\end{minipage}
\begin{minipage}[t]{0.48\textwidth}
\centering
\includegraphics[width=6cm]{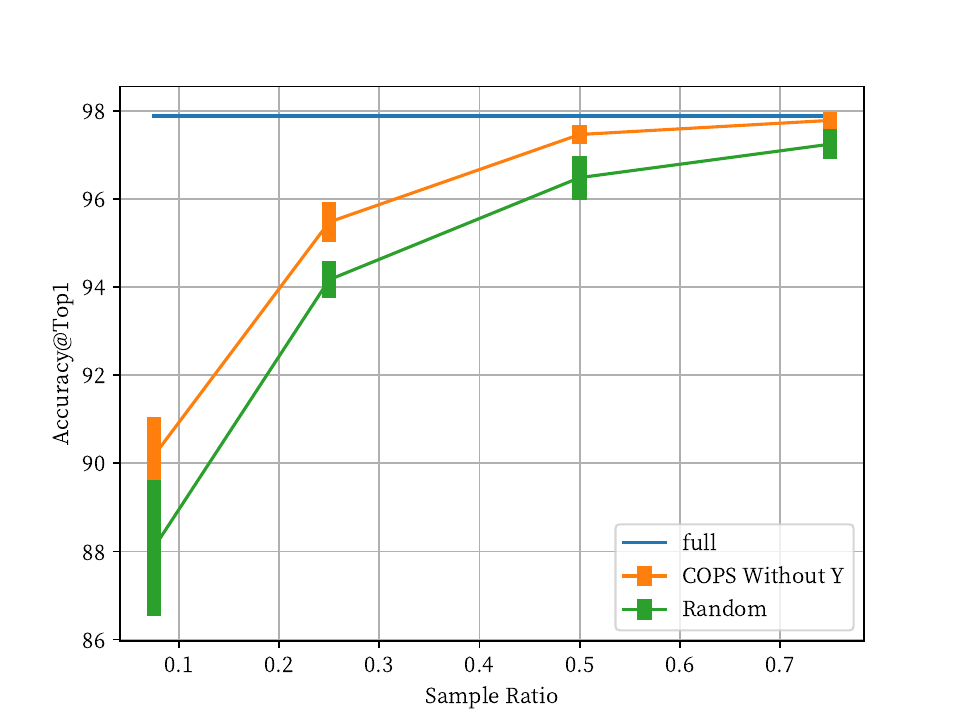}
\caption*{(b)Comparison on CIFAR10Binary without Y.}
\end{minipage}
\caption{Comparison with full dataset.~\label{cifar10binary_full_compa}}
\end{figure}

\paragraph{CIFAR10.} The figure for CIFAR10 is shown in Figure.\ref{cifar10_full_compa}.

\begin{figure}[H]
\centering
\begin{minipage}[t]{0.48\textwidth}
\centering
\includegraphics[width=6cm]{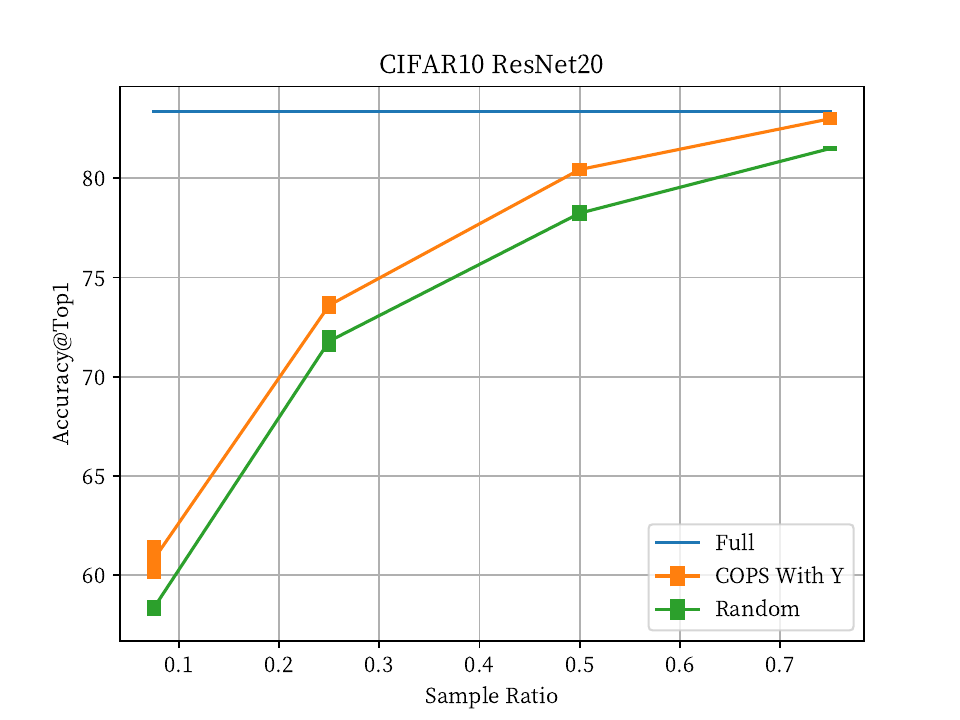}
\caption*{(a)Comparison on CIFAR10 with Y.}
\end{minipage}
\begin{minipage}[t]{0.48\textwidth}
\centering
\includegraphics[width=6cm]{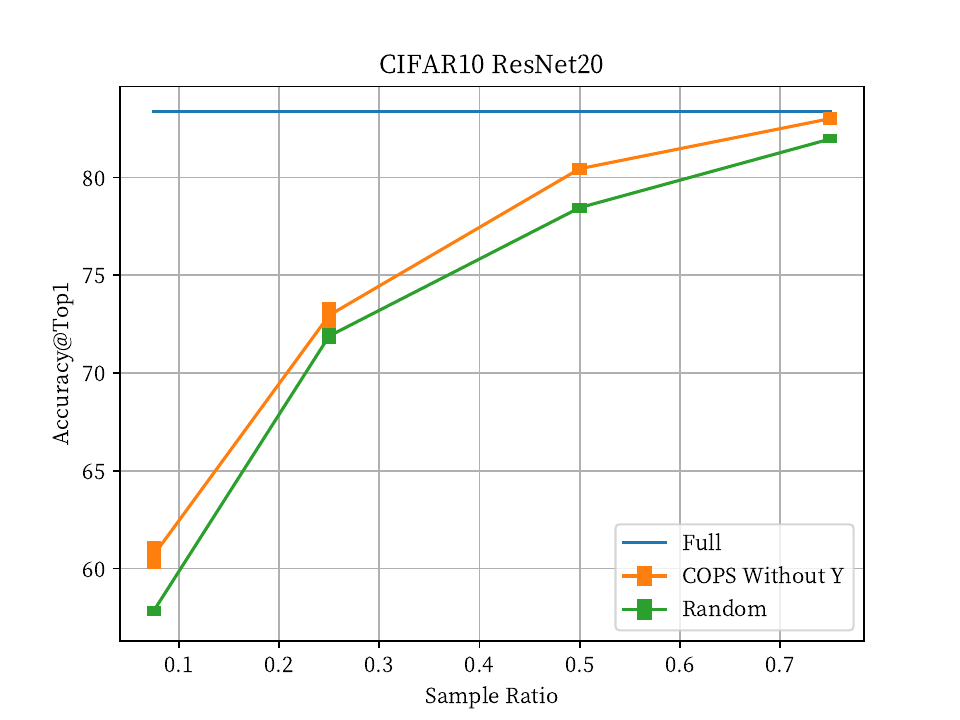}
\caption*{(b)Comparison on CIFAR10 without Y.}
\end{minipage}
\caption{Comparison with full dataset.~\label{cifar10_full_compa}}
\end{figure}

\end{document}